\newcommand{\argmax}{\mathop{\rm arg~max}\limits}
\newcommand{\argmin}{\mathop{\rm arg~min}\limits}
\newtheorem{theorem}{Theorem}
\newtheorem{lemma}{Lemma}
\newtheorem{definition}{Definition}
\begin{document}

%

%

\twocolumn[

\aistatstitle{Fixed-Budget Real-Valued Combinatorial Pure Exploration of Multi-Armed Bandit}

\aistatsauthor{ Shintaro Nakamura \And Masashi Sugiyama }

\aistatsaddress{ The University of Tokyo \\ RIKEN AIP \And  RIKEN AIP \\ The University of Tokyo } ]

\begin{abstract}
  We study the real-valued combinatorial pure exploration of the multi-armed bandit in the fixed-budget setting. We first introduce the Combinatorial Successive Asign (CSA) algorithm, which is the first algorithm that can identify the best action even when the size of the action class is exponentially large with respect to the number of arms. We show that the upper bound of the probability of error of the CSA algorithm matches a lower bound up to a logarithmic factor in the exponent. Then, we introduce another algorithm named the Minimax Combinatorial Successive Accepts and Rejects (Minimax-CombSAR) algorithm for the case where the size of the action class is polynomial, and show that it is optimal, which matches a lower bound. Finally, we experimentally compare the algorithms with previous methods and show that our algorithm performs better.
\end{abstract}

\section{Introduction}
The multi-armed bandit (MAB)  model is an important framework in online learning since it is useful to investigate the trade-off between exploration and exploitation in decision-making problems \citep{Auer2002,Audibert2009}. Although investigating this trade-off is intrinsic in many applications, some application domains only focus on obtaining the optimal object, e.g., an arm or a set of arms, among a set of candidates, and do not care about the loss or rewards that occur during the exploration procedure. This learning problem called the pure exploration (PE) task has received much attention \citep{Bubeck2009,Audibert2010}. \par
One of the important sub-fields among PE of MAB is the \emph{combinatorial pure exploration} of the MAB (CPE-MAB)\citep{SChen2014,Gabillon16,LChen2017}. 
In the CPE-MAB, we have a set of $d$ stochastic arms, where the reward of each arm $s\in\{ 1, \ldots, d\}$ follows an unknown distribution with mean $\mu_s$, and an \emph{action class} $\mathcal{A}$, which is a collection of subsets of arms with certain combinatorial structures. 
Then, the goal is to identify the best action from the action class $\mathcal{A}$ by pulling a single arm each round. 
There are mainly two settings in the CPE-MAB. One is the \emph{fixed confidence} setting, where the player tries to identify the optimal action with high probability with as few rounds as possible, and the other is the \emph{fixed-budget} setting, where the player tries to identify the optimal action with a fixed number of rounds \citep{SChen2014,Katz-Samuels2020,WangAndZhu}.
Abstractly, the goal is to identify $\boldsymbol{\pi}^{*}$, which is the optimal solution for the following constraint optimization problem:
\begin{equation}
\begin{array}{ll@{}ll}
\text{maximize}_{\boldsymbol{\pi}}  & \boldsymbol{\mu}^{\top}\boldsymbol{\pi}&\\
\text{subject to}& \boldsymbol{\pi}\in \mathcal{A},   & 
\end{array}\label{AbstractFormulation}
\end{equation}
    where $\boldsymbol{\mu}$ is a vector whose $s$-th element is the mean reward of arm $s$ and $\top$ denotes the transpose. \par
\begin{figure}[t]
    \centering
    \includegraphics[width = 0.5 \linewidth]{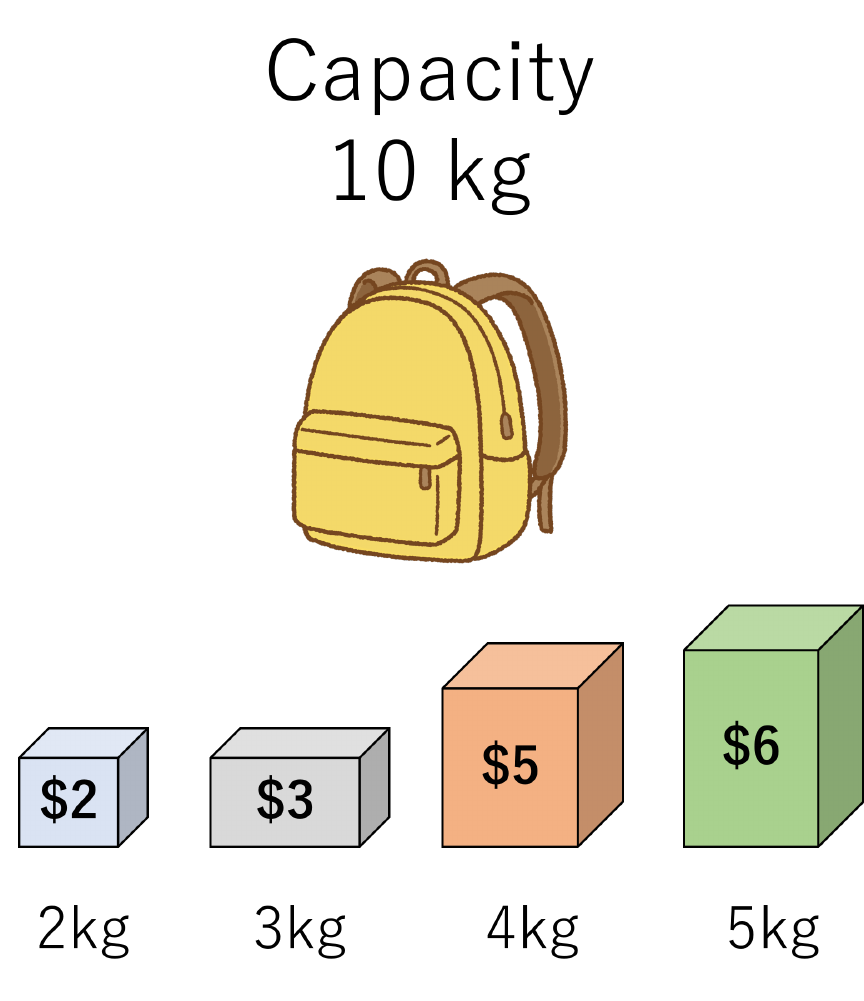}
    \caption{A simple sketch of the knapsack problem. We want to know how many of each item should be included in the bag to maximize the total value. Here, the total weight of every item cannot exceed 10kg, which is the capacity of the bag.}
    \label{KnapsackFigure}
\end{figure}
Although CPE-MAB can be applied to many models which can be formulated as (\ref{AbstractFormulation}), most of the existing works in CPE-MAB \citep{SChen2014,WangAndZhu2022,Gabillon16,LChen2017,YihanDu2021,LChen2016} assume $\mathcal{A}\subseteq\{0, 1\}^d$. This means that although we can apply the CPE-MAB framework to the shortest path problem \citep{Sniedovich2006}, top-$K$ arms identification \citep{Kalyanakrishnan2010}, matching \citep{Gibbons1985}, and spanning trees \citep{Pettie2002}, we cannot apply it to problems where $\mathcal{A} \subset \mathbb{R}^d$, such as the optimal transport problem \citep{villani2008}, the knapsack problem \citep{Dantzig2007}, and the production planning problem \citep{Pochet2010}. For instance, in the knapsack problem shown in Figure \ref{KnapsackFigure}, actions are not binary vectors since, for each item, we can put more than one in the bag, e.g., one blue one and two orange ones. \par
To overcome this limitation, \citet{Nakamura2023} introduced the real-valued CPE-MAB (R-CPE-MAB), where the action class is a set of real vectors, i.e., $\mathcal{A}\subset\mathbb{R}^d$. Though they have investigated the R-CPE-MAB in the fixed confidence setting, there is still room for investigation in the fixed-budget setting. To the best of our knowledge, the only existing work that can be applied to the fixed-budget R-CPE-MAB is the Peace algorithm introduced by \citet{Katz-Samuels2020}. However, there are some issues with the Peace algorithm. Firstly, since the Peace algorithm requires enumerating all the actions in advance, it is impractical when $\mathcal{A}$ is exponentially large in $d$. 
Secondly, there is a hyper-parameter for it, and one needs to carefully choose them for the feasibility of the Peace algorithm \citep{Yang2022}. 
Finally, it needs an assumption that the rewards follow a standard normal distribution, which may not be satisfied in real-world applications. \par
In this work, we first introduce a parameter-free algorithm named the Combinatorial Successive Asign (CSA) algorithm, which is a generalized version of the Combinatorial Successive Accepts and Rejects (CSAR) algorithm proposed by \citeauthor{SChen2014}(\citeyear{SChen2014}) for the ordinary CPE-MAB. The CSA algorithm is the first algorithm that can be applied to the fixed-budget R-CPE-MAB even when the size of the action class $\mathcal{A}$ is exponentially large in $d$. We show that the upper bound of the probability of error of the CSA algorithm matches a lower bound up to a logarithmic factor in the exponent. \par
Since the CSA algorithm does not match a lower bound, we introduce another algorithm named the Minimax Combinatorial Successive Accepts and Rejects (Minimax-CombSAR) algorithm inspired by \citet{Yang2022} for the case where the size of the action class is polynomial. In Section \ref{CSA_Algorithm_Section}, we show that the Minimax-CombSAR algorithm is optimal, which means that the upper bound of the probability of error of the best action matches a lower bound. We also show that the Minimax-CombSAR algorithm has only one hyper-parameter, and is easily interpreted. \par

Finally, we report the results of numerical experiments. First, we show that the CSA algorithm can identify the best action in a knapsack problem, where the size of the action class can be exponentially large in $d$. Then, when the size of the action class is polynomial in $d$, we show that the Minimax-CombSAR algorithm performs better than the CSA algorithm and the Peace algorithm in identifying the best action in a knapsack problem. \par

\section{Problem Formulation} \label{ProblemFormulation}
In this section, we formally define our R-CPE-MAB model. Suppose we have $d$ arms, numbered $1, \ldots, d$. Assume that each arm $s\in[d]$ is associated with a reward distribution $\phi_s$, where $[d] = \{1, \ldots, d\}$. We assume all reward distributions have $R$-sub-Gaussian tails for some known constant $R > 0$. Formally, if $X$ is a random variable drawn from $\phi_s$ for some $s\in[d]$, then, for all $\lambda\in \mathbb{R}$, we have $\mathbb{E}[\exp (\lambda X - \lambda\mathbb{E}[X])] \leq \exp (R^2\lambda^2/2)$. It is known that the family of $R$-sub-Gaussian tail distributions includes all distributions that are supported on $[a, b]$, where $(b - a)/2 = R$, and also many unbounded distributions such as Gaussian distributions with variance $R^2$ \citep{Rivasplata2012,Rinaldo2018}.
Let $\boldsymbol{\mu} = (\mu_1, \ldots, \mu_d)^{\top}$ denote the vector of expected rewards, where each element $\mu_s = \mathbb{E}_{X\sim\phi_s}[X]$ denotes the expected reward of arm $s$ and $\top$ denotes the transpose. 
With a given $\boldsymbol{\nu}$, let us consider the following linear optimization problem:
\begin{equation}\label{Abstract_Formulation_IntegerProgramminProblem}
\begin{array}{ll@{}ll}
\text{maximize}_{\boldsymbol{\pi}}  & \boldsymbol{\nu}^{\top}\boldsymbol{\pi}&\\
\text{subject to}& \boldsymbol{\pi}\in \mathcal{C}\subset\mathbb{R}_{\geq 0}^d.   &
\end{array}
\end{equation}
Here, $\mathcal{C}$ is a problem-dependent feasible region of $\boldsymbol{\pi}$, which satisfies some combinatorial structures. Then, for any $\boldsymbol{\nu}\in \mathbb{R}^d$, we denote by $\boldsymbol{\pi}^{\boldsymbol{\nu}, \mathcal{C}}$ the solution of (\ref{Abstract_Formulation_IntegerProgramminProblem}). 
We define the action class $\mathcal{A}$ as the set of vectors that contains optimal solutions of (\ref{Abstract_Formulation_IntegerProgramminProblem}) for any $\boldsymbol{\nu}$, i.e., 
\begin{equation}
    \mathcal{A} = \left\{ \boldsymbol{\pi}^{\boldsymbol{\nu, \mathcal{C}}} \in \mathbb{R}_{\geq 0}^d \ | \ \forall \boldsymbol{\nu}\in \mathbb{R}^d\right\}. \label{mathcalA_Definition}
\end{equation}
We assume that the size of $\mathcal{A}$ is finite and denote it by $K$. This assumption is relatively mild since, for instance, in linear programming, the optimal solution can always be found at one of the vertices of the feasible region \citep{Princeton_LP}. \par
Also, let $\texttt{POSSIBLE-PI}(s) = \{ x \in \mathbb{R} \ | \ \exists \boldsymbol{\pi}\in \mathcal{A}, \pi_s = x \}$, where $\pi_s$ denote the $s$-th element of $\boldsymbol{\pi}$. We can see \texttt{POSSIBLE-PI($s$)} as the set of all possible values that an action in the action set can take as the $s$-th element. Let us denote the size of $\texttt{POSSIBLE-PI}(s)$ by $B_s$. Note that $K$, the size of $\mathcal{A}$, can be exponentially large in $d$, i.e., $|\mathcal{A}| = \mathcal{O}(\prod_{s =1}^{d} B_s)$. \par
We assume we have \emph{offline oracles} which efficiently solve the linear optimization problem (\ref{Abstract_Formulation_IntegerProgramminProblem}) once $\nu$ is given. For instance, algorithms which output $\boldsymbol{\pi}^{*}({\boldsymbol{\nu}})~=~\argmax_{\boldsymbol{\pi} \in \mathcal{A}} \boldsymbol{\nu}^{\top}\boldsymbol{\pi}$ in polynomial or pseudo-polynomial \footnote{A pseudo-polynomial time algorithm is a numeric algorithm whose running time is polynomial in the numeric value of the input, but not necessarily in the length of the input \citep{Garey1990}} time in $d$. \par
The player's objective is to identify the best action $\boldsymbol{\pi}^{*} = \argmax_{\boldsymbol{\pi}\in\mathcal{A}} \boldsymbol{\mu}^{\top}\boldsymbol{\pi}$ by pulling a single arm each round. The player is given a \emph{budget} $T$, and cannot pull arms more than $T$ times. The player outputs an action $\boldsymbol{\pi}^{\mathrm{out}}$ at the end, and she is evaluated by the \emph{probability of error}, which is formally $\Pr\left[ \boldsymbol{\pi}^{\mathrm{out}} \neq \boldsymbol{\pi}^{*} \right]$.

\section{Lower Bound of the Fixed-Budget R-CPE-MAB} \label{Lower_Bound_Section}
In this section, we show a lower bound of the probability of error in R-CPE-MAB. As preliminaries, let us introduce some notions.
First, we introduce $\boldsymbol{\pi}^{(s)}$ as follows \citep{Nakamura_TS_FixedConf2023}:
\begin{equation}
    \boldsymbol{\pi}^{(s)} = \argmin_{\boldsymbol{\pi} \in \mathcal{A} \setminus \left\{ \boldsymbol{\pi}^{*} \right\}} \frac{ \boldsymbol{\mu}^{\top} \left(
\boldsymbol{\pi}^{*} - \boldsymbol{\pi} \right)}{\left| \pi^{*}_{s} - \pi_s \right|}.
\end{equation}
Intuitively, among the actions whose $s$-th element is different from $\boldsymbol{\pi}^{*}$, $\boldsymbol{\pi}^{(s)}$ is the action which is the most difficult to determine whether it is the best action or not.
We also introduce the notion \emph{G-gap} \citep{Nakamura_TS_FixedConf2023} as follows:
\begin{eqnarray}
    \Delta_{s} & = & \min_{{\boldsymbol{\pi} \in \mathcal{A} \setminus \left\{ \boldsymbol{\pi}^{*} \right\}}} \frac{ \boldsymbol{\mu}^{\top} \left(
\boldsymbol{\pi}^{*} - \boldsymbol{\pi} \right)}{\left| \pi^{*}_{s} - \pi_s \right|} \nonumber \\
    & = & \frac{ \boldsymbol{\mu}^{\top} \left( \boldsymbol{\pi}^{*} - \boldsymbol{\pi}^{(s)} \right)}{\left| \pi_{s}^{*} - \pi^{(s)}_s \right|}. 
\end{eqnarray}
\emph{G-Gap} was first introduced in \citet{Nakamura_TS_FixedConf2023} as a natural generalization of the notion \emph{gap} in the CPE-MAB literature \citep{SChen2014,LChen2016,LChen2017}. This was introduced as a key notion that characterizes the difficulty of the problem instance. \par
In Theorem \ref{thm:LowerBoundTheorem}, we show that the sum of the inverse of squared \emph{G-Gaps}, 
\begin{eqnarray}
    \mathbf{H} = \sum_{s = 1}^{d} \left(\frac{1}{\Delta_s} \right)^2,
\end{eqnarray}
appears in the lower bound of the probability of error of R-CPE-MAB, which implies that it characterizes the difficulty of the problem instance.
\begin{restatable}[]{theorem}{LowerBoundTheorem} \label{thm:LowerBoundTheorem}
    For any action class and any algorithm that returns an action $\boldsymbol{\pi}^{\mathrm{out}}$ after $T$ times of arm pulls, the probability of error is at least
    \begin{equation}  
    \mathcal{O}\left( \exp\left( - \frac{T}{\mathbf{H}}  \right) \right). \label{LowerBoundEq}
    \end{equation}
\end{restatable}
We show the proof in Appendix \ref{LowerBoundTheoremProof}. If $\mathcal{A}$ is a set of $d$ dimensional standard basis, R-CPE-MAB becomes the standard best arm identification problem whose objective is to identify the best arm with the largest expected reward among $d$ arms \citep{Bubeck2009,Audibert2010,Carpentier2016}. From \citet{Carpentier2016}, a lower bound is $\mathcal{O}\left(\exp\left( - \frac{T}{\log (d) \mathbf{H}} \right)\right)$ for the standard best arm identification problem. It is a future work that whether the lower bound is $\mathcal{O}\left(\exp\left( - \frac{T}{\log (d) \mathbf{H}} \right)\right)$ for general action classes.

\section{The Combinatorial Successive Assign (CSA) Algorithm} \label{CSA_Algorithm_Section}
In this section, we first introduce the CSA algorithm, which can be seen as a generalization of the CSAR algorithm \citep{SChen2014}. This algorithm can be applied to fixed-budget R-CPE-MAB even when the size of the action class $\mathcal{A}$ is exponentially large in $d$. Then, we show an upper bound of the probability of error of the best action of the CSA algorithm. We also discuss the number of times offline oracles have to be called. 
\subsection{CSA Algorithm}
In this subsection, we introduce the CSA algorithm, a fully parameter-free algorithm for fixed-budget R-CPE-MAB that works even when the action set $\mathcal{A}$ can be exponentially large in $d$. \par 
We first define the \emph{constrained offline oracle} which is used in the CSA algorithm.
\begin{definition}[Constrained offline oracle]
Let $\boldsymbol{S} = \{ (e, x) | (e, x) \in \mathbb{Z} \times \mathbb{R} \}$ be a set of tuples. A constrained offline oracle is denoted by $\mathrm{COracle}$: $\mathbb{R}^{d} \times \boldsymbol{S} \rightarrow \mathcal{A}\ \cup \perp$ and satisfies
\begin{equation}
     \mathrm{COracle}(\boldsymbol{\mu}, \boldsymbol{S}) = \left\{
            \begin{array}{ll}
            \argmax_{\boldsymbol{\pi}\in\mathcal{A}_{\boldsymbol{S}}} \boldsymbol{\mu}^{\top} \boldsymbol{\pi} & (\mathrm{if} \ \mathcal{A}_{\boldsymbol{S}} \neq \emptyset), \\
            \perp & (\mathrm{if} \ \mathcal{A}_{\boldsymbol{S}} = \emptyset), \\
            \end{array} \nonumber
              \right.
\end{equation}
where we define $\mathcal{A}_{\boldsymbol{S}} = \left\{ \boldsymbol{\pi}\in\mathcal{A} \ | \ \forall (e, x) \in \boldsymbol{S}, \pi_e = x  \right\}$ as the collection of feasible actions and $\perp$ is a null symbol.
\end{definition}
Here, we can see that a COracle is a modification of an offline oracle specified by $\boldsymbol{S}$. In other words, for all $(e, x)$ in $\boldsymbol{S}$, the COracle outputs an action whose $e$-th element is $x$; otherwise, it outputs the null symbol. In Appendix \ref{COracleConstructionAppendix}, we discuss how to construct such COracles for some combinatorial problems, such as the optimal transport problem and the knapsack problem. \par
We introduce the CSA algorithm in Algorithm \ref{CSAAlgorithm}. The CSA algorithm divides the budget into $d$ rounds. In each round, we pull each of the remaining arms the same number of times (line \ref{pull_arms_line}). At each round $t$, the CSA outputs the empirically best action $\hat{\boldsymbol{\pi}}(t)$ (line \ref{pi_hat}), chooses a single arm $p(t)$ (line \ref{choose_p(t)}), and assigns $\hat{\pi}_{p(t)}(t)$ for the $e$-th element of $\boldsymbol{\pi}^{\mathrm{out}}$. Indices that are assigned are maintained in $\boldsymbol{F}(t)$, and arms $s \in F(t)$ will no longer be pulled in the next rounds. The pair of the index and the assigned value for $\boldsymbol{\pi}^{\mathrm{out}}$, $\boldsymbol{S}(t) $, is updated at every round (line \ref{S_t_update}).

\begin{algorithm}[t]
    \caption{CSA: Combinatorial Successive Assign Algorithm}
    \begin{algorithmic}[1] \label{CSAAlgorithm}
         \renewcommand{\algorithmicrequire}{\textbf{Input:}}
         \renewcommand{\algorithmicensure}{\textbf{Parameter:}}
         \REQUIRE Budget: $T\geq0$; COracle: $\rightarrow \mathcal{A}\cup \{ \perp \}$
         \STATE Define $\Tilde{\log}(n) = \sum_{i = 1}^{d} \frac{1}{i}$
         \STATE $\Tilde{T}_{0} \leftarrow 0$, $\boldsymbol{F}(t) \leftarrow \emptyset$, $\boldsymbol{S}(t) = \emptyset$ 
         \FOR{$t = 1 \ \mathrm{to} \ d$}
            \STATE $\Tilde{T}(t) \leftarrow \lceil \frac{T - d}{\Tilde{\log}(d) (d - t + 1)} \rceil$
            \STATE Pull each arm $e\in[d] \setminus F(t) $ for $\Tilde{T}(t) - \Tilde{T}_{t - 1}$ times \label{pull_arms_line}
            \STATE $\hat{\boldsymbol{\pi}}(t) \leftarrow \mathrm{COracle}(\hat{\boldsymbol{\mu}}(t), \boldsymbol{S}(t))$ \label{pi_hat}
            \IF{$\hat{\boldsymbol{\pi}}(t) = \perp$}
                \STATE $\textbf{Fail}$: set ${\boldsymbol{\pi}}^{\mathrm{out}}\leftarrow \perp$ and return ${\boldsymbol{\pi}}^{\mathrm{out}}$
            \ENDIF
            \FORALL{ $e$ in $[d]\setminus \boldsymbol{F}(t)$}
            \STATE $\texttt{CHECK(e)} \leftarrow \texttt{POSSIBLE-PI(e)} \setminus \left\{ \hat{\pi}_e(t) \right\}$
            \STATE $\Tilde{\boldsymbol{\pi}}^{e}(t) \leftarrow \argmax_{x \in \texttt{CHECK(e)} } \mathrm{COracle}(\hat{\boldsymbol{\mu}}(t), \boldsymbol{S}(t) \cup (e, x))$ \label{COracleline}
            \ENDFOR
        \STATE $ p(t) \leftarrow \argmax_{e \in [d] \setminus F(t)} \frac{\langle \hat{\boldsymbol{\mu}}(t), \hat{\boldsymbol{\pi}}(t) - \Tilde{\boldsymbol{\pi}}^{e}(t) \rangle}{\hat{\pi}_{e}(t) - \tilde{\pi}^{e}_{e}(t)}$ \label{choose_p(t)}
        \STATE $\boldsymbol{F}(t + 1) \leftarrow \boldsymbol{F}(t) \cup \{p(t) \}$
        \STATE $\boldsymbol{S}(t + 1)\leftarrow \boldsymbol{S}(t) \cup \{ (p(t), \hat{\pi}_{p(t)}(t) \}$ \label{S_t_update}
        \ENDFOR
        \STATE \texttt{//  Convert $\boldsymbol{S}(d + 1)$ to $\boldsymbol{\pi}^{\mathrm{out}}$}
        \FOR{$(e, x) \ \mathrm{in} \ \boldsymbol{S}(d + 1)$}
            \STATE \texttt{//  The $e$-th element of $\boldsymbol{\pi}^{\mathrm{out}}$ is $x$}
            \STATE $\pi^{\mathrm{out}}_e = x$
        \ENDFOR
        \RETURN ${\boldsymbol{\pi}}^{\mathrm{out}}$
    \end{algorithmic} 
\end{algorithm}
\subsection{Theoretical Analysis of CSA Algorithm}
Here, we first discuss an upper bound of the probability of error. Then, we discuss the number of times we call the offline oracle. 
\subsubsection{An Upper Bound of the Probability of Error}
Here,  we show an upper bound of the probability of error of the CSA algorithm. Let $\Delta_{(1)}, \ldots, \Delta_{(d)}$ be a permutation of $\Delta_{1}, \ldots, \Delta_{d}$ such that $\Delta_{(1)} \leq \cdots \leq \Delta_{(d)}$. Also, let us define 
\begin{eqnarray}
    \mathbf{H}_{2} = \max_{ s\in [d]} \frac{s}{\Delta^{2}_{(s)}}. \nonumber
\end{eqnarray}
One can verify that $\mathbf{H}_2$ is equivalent to $\mathbf{H}$ up to a logarithmic factor: $\mathbf{H}_{2} \leq \mathbf{H} \leq \log(2d)\mathbf{H}_{2}$ \citep{Audibert2010}. \par

\begin{restatable}[]{theorem}{CSATheorem} \label{CSATheorem}
    Given any $T > d$, action class $\mathcal{A} \subset \mathbb{R}^{d}$, and $\boldsymbol{\mu} \in \mathbb{R}^{d}$, the CSA algorithm uses at most $T$ samples and outputs a solution $\boldsymbol{\pi}^{\mathrm{out}} \in \mathcal{A} \cup \{\perp \}$ such that
    \begin{eqnarray}\label{CSATheoremUpperBound}
        &&\Pr\left[ \boldsymbol{\pi}^{\mathrm{out}} \neq \boldsymbol{\pi}^{*} \right] \nonumber \\ 
        &\leq&  d^2 \exp \left( - \frac{T - d}{2(2 + L^2)^2 R^2\Tilde{\log}(d)U_{\mathcal{A}}^2 \mathbf{H}_2} \right),
    \end{eqnarray}
    where $L = \max_{ e\in[d], \boldsymbol{\pi}^{1}, \boldsymbol{\pi}^{2}, \boldsymbol{\pi}^{3} \in \mathcal{A}, \pi^{1}_{e} \neq \pi^{3}_{e}} \frac{\left| \pi^{1}_e - \pi^{2}_e \right|}{\left| \pi^{1}_{e} - \pi^{3}_{e} \right|}$, $\Tilde{\log}(d) \triangleq \sum_{s = 1}^d \frac{1}{s}$, and $U_{\mathcal{A}} = \max_{\boldsymbol{\pi}, \boldsymbol{\pi}' \in \mathcal{A}, e \in \{ s \in [d] \ | \ \pi_s \neq \pi'_s  \} } \frac{\sum_{s = 1}^{d} | \pi_s - \pi'_s |}{ | \pi_e - \pi'_e |}$.
\end{restatable}
We can see that the CSA algorithm is optimal up to a logarithmic factor in the exponent.
Since $L = 1$ and $U_{\mathcal{A}} = \mathrm{width}(\mathcal{A})$ in the ordinary CPE-MAB, we can confirm that Theorem \ref{CSATheorem} can be seen as a natural generalization of Theorem 3 in \citet{SChen2014}, which shows an upper bound of the probability of error of the CSAR algorithm. 
\subsubsection{The Oracle Complexity}
Next, we discuss the \emph{oracle complexity}, the number of times we call the offline oracle \citep{ITO2019,Xu2021}. Note that, in the CSA algorithm, we call the COracle $\mathcal{O}(d\sum_{s = 1}^{d}B_s)$ times (line \ref{COracleline}). Therefore, if each COracle call invokes the offline oracle $N$ times, the \emph{oracle complexity} is $\mathcal{O}\left(Nd\sum_{s = 1}^{d}B_s\right)$. Finally, if the time complexity of each oracle call is $Z$, then the total time complexity of the CSA algorithm is $\mathcal{O}\left(ZNd\sum_{s = 1}^{d}B_s\right)$. \par
Below, we discuss the oracle complexity of the CSA algorithm in some specific combinatorial problems such as the knapsack problem and the optimal transport problem. Note that the Minimax-CombSAR algorithm and Peace algorithm \citet{Katz-Samuels2020} have to enumerate all the actions in advance, where the number may be exponentially large in $d$. We show that the CSA algorithm mitigates the curse of dimensionality. \par
\textbf{The Knapsack Problem \citep{Dantzig2007}.}  In the knapsack problem, we have $d$ items. Each item $s\in[d]$ has a weight $w_s$ and value $v_s$. Also, there is a knapsack whose capacity is $W$ in which we put items. Our goal is to maximize the total value of the knapsack, not letting the total weight of the items exceed the capacity of the knapsack. Formally, the optimization problem is given as follows:
\begin{equation*} \label{knapsackproblem_formulation}
\begin{array}{ll@{}ll}
\text{maximize}_{\boldsymbol{\boldsymbol{\pi}}\in\mathcal{A}}  & \sum_{s = 1}^{d}v_{s}\pi_s  &\\ \\
\text{subject to}& \sum_{s = 1}^{d}\pi_s w_s \leq W, &
\end{array}
\end{equation*}
where $\pi_s \in \mathbb{Z}_{\geq 0}$ denotes the number of item $s$ in the knapsack. Here, if we assume the weight of each item is known, but the value is unknown, we can apply the R-CPE-MAB framework to the knapsack problem, where we estimate the values of items. The knapsack problem is NP-complete \citep{Garey1979}. Hence, it is unlikely that the knapsack problem can be solved in polynomial time. However, it is well known that the knapsack problem can be solved in pseudo-polynomial time $\mathcal{O}(dW)$ if we use dynamic programming \citep{Kellerer2011,Fujimoto2016}. It finds the optimal solution by constructing a table of size $dW$ whose $(s, w)$-th element represents the maximum total value that can be achieved if the sum of the weights does not exceed $w$ using up to the $s$th item.  In some cases, it is sufficient to assume $\mathcal{O}(dW)$ time-complexity is enough, and therefore, we use this dynamic programming method as the offline oracle. We can construct the \emph{COrcale} for the knapsack problem by calling this offline oracle once (see Appendix \ref{COracleConstructionAppendix_KnapsackProblem} for details). 
Therefore, the CSA algorithm calls the offline oracle $\mathcal{O}(1 \times d\sum_{s = 1}^{d}B_s ) = \mathcal{O}(d\sum_{s = 1}^{d}B_s)$ times, and the total time complexity of the CSA algorithm is $\mathcal{O}(dW \times d\sum_{s = 1}^{d}B_s ) = \mathcal{O}(d^2\sum_{s = 1}^{d}B_sW)$. This is much more computationally friendly than the Peace algorithm with time complexity $\mathcal{O}\left( \prod_{s = 1}^{d}B_s \right)$.\par
\textbf{The Optimal Transport (OT) Problem \citep{PeyreCuturi2019}.} OT can be regarded as the cheapest plan to deliver resources from $m$ suppliers to $n$ consumers, where each supplier $i$ and consumer $j$ have supply $s_i$ and demand $d_j$, respectively. Let $\boldsymbol{\gamma}\in\mathbb{R}^{m \times n}_{\geq 0}$ be the cost matrix, where $\gamma_{ij}$ denotes the cost between supplier $i$ and demander $j$. Our objective is to find the optimal transportation matrix
\begin{equation} \label{OT_formulation}
    \boldsymbol{\pi}^{*} = \argmin_{\boldsymbol{\pi}\in \mathcal{G}(\boldsymbol{s}, \boldsymbol{d})} \sum_{i, j}\pi_{ij}\gamma_{ij},
\end{equation}
where 
\begin{align}\label{coupling_constraint}
    \mathcal{G}(\boldsymbol{s}, \boldsymbol{d}) \triangleq \left\{ \boldsymbol{\Pi} \in \mathbb{R}^{m \times n}_{\geq 0} \,\middle|\, \boldsymbol{\Pi} \boldsymbol{1}_n = \boldsymbol{s}, \boldsymbol{\Pi}^{\top} \boldsymbol{1}_m = \boldsymbol{d} \right\}.
\end{align}
Here, $\boldsymbol{s} = (s_1, \ldots, s_m)$ and $\boldsymbol{d} = (d_1, \ldots, d_n)$. 
$\pi_{ij}$ represents how much resources one sends from supplier $i$ to demander $j$. 
Here, if we assume that the cost is unknown and changes stochastically, e.g., due to some traffic congestions, we can apply the R-CPE-MAB framework to the optimal transport problem, where we estimate the cost of each edge $\left(i, j\right)$ between supplier $i$ and consumer $j$. \par
Once $\boldsymbol{\gamma}$ is given, we can compute $\boldsymbol{\pi}^{*}$ in $\mathcal{O}(l^3\log l)$, where $l = \max\left(m, n\right)$, by using network simplex or interior point methods \citep{Cuturi2013}, and we can use them as the offline oracle.
It is known that the solution of linear programming can always be found at one of the vertices of the feasible region \citep{Princeton_LP}, and therefore the size of the action space $\mathcal{A} = \left\{ \boldsymbol{\pi}^{\boldsymbol{\nu, \mathcal{G}(\boldsymbol{s}, \boldsymbol{d})}} \in \mathbb{R}_{\geq 0}^{m\times n} \ | \ \forall \boldsymbol{\nu}\in \mathbb{R}^{m \times n}\right\}$ is finite. However, it is difficult to construct $\left\{\texttt{POSSIBLE-PI}\left(\left(i, j\right)\right)\right\}_{i \in [m], j \in [n]}$ in general to run the CSA algorithm.
On the other hand, if $\boldsymbol{s}$ and $\boldsymbol{d}$ are both integer vectors, we can construct $\left\{\texttt{POSSIBLE-PI}\left(\left(i, j\right)\right)\right\}_{i \in [m], j \in [n]}$ thanks to the fact that $\mathcal{A} = \left\{ \boldsymbol{\pi}^{\boldsymbol{\nu, \mathcal{G}(\boldsymbol{s}, \boldsymbol{d})}} \in \mathbb{Z}_{\geq 0}^{m \times n} \ | \ \forall \boldsymbol{\nu}\in \mathbb{R}^{m \times n}\right\}$ is a set of non-negative integer matrices \citep{Goodman1997}. In this case, all actions are restricted to non-negative integers, and $\texttt{POSSIBLE-PI}(i, j) = \left\{ 0, 1, \ldots, \min\left( s_{i}, d_{j} \right) \right\}$.
In Appendix \ref{COracleConstructionAppendix_OptimalTransport}, we show that the COracle can be constructed by calling the offline oracle once. Therefore, we call the offline oracle $\mathcal{O}\left( 1 \times mn\sum_{s = 1}^{mn}B_s \right) = \mathcal{O}(mn\sum_{s = 1}^{mn} B_s)$ times, and the total time complexity is $\mathcal{O}\left( l^{3} \log l mn\sum_{s = 1}^{mn} B_s \right)$, where $l = \max \left( m, n \right)$. Again, this is much more computationally friendly than the Peace algorithm with time complexity $\mathcal{O}\left( \prod_{s = 1}^{d} B_{d} \right)$. \par
\textbf{A General Case When $K (=|\mathcal{A}|)  = \mathrm{poly}(d).$} In some cases, we can enumerate all the possible actions in $\mathcal{A}$.  For instance, one may use some prior knowledge of each arm, which is sometimes obtainable in the real world, to narrow down the list of actions, and make the size of the action class $\mathcal{A}$ polynomial in $d$. 
In Appendix \ref{COracleConstructionAppendix_GeneralCase}, 
we show that the time complexity of the COracle is $\mathcal{O}\left( dK + K \log K \right)$, and therefore the total time complexity of the CSA algorithm is $\mathcal{O}\left((dK + K \log K) \cdot d\sum_{s = 1}^{d}B_s \right)$.

\section{The Minimax-CombSAR Algorithm for R-CPE-MAB where $|\mathcal{A}| = \mathrm{poly}(d)$} \label{Minimax-CombSAR_Algorithm_Section}
In this section, we show an algorithm for fixed-budget R-CPE-MAB named the Minimax Combinatorial Successive Accept (Minimax-CombSAR) algorithm, for the case where we can assume that the size of $\mathcal{A}$ is polynomial in $d$. Let $\mathcal{A} ~=~ \{ \boldsymbol{\pi}^{1} ~=~ \boldsymbol{\pi}^{*}, \boldsymbol{\pi}^{2}, \ldots, \boldsymbol{\pi}^{K} \}$, where $\boldsymbol{\mu}^{\top} \boldsymbol{\pi}^{i} \geq \boldsymbol{\mu}^{\top}\boldsymbol{\pi}^{i + 1}$, for all $i \in [K - 1]$.  The Minimax-CombSAR algorithm is inspired by the  Optimal Design-based Linear Best Arm Identification (OD-LinBAI) algorithm \cite{Yang2022}, which eliminates actions in order from those considered suboptimal and finally outputs the remaining action as the optimal action. For the Minimax-CombSAR algorithm, we show an upper bound of the probability of error.
\subsection{Minimax-CombSAR Algorithm}
We show the Minimax-CombSAR in Algorithm \ref{Minimax-CombSAR_Algorithm}. Here, we explain it at a higher level. \par
We have $\lceil \log d \rceil$ phases in the Minimax-CombSAR algorithm, and it maintains an \emph{active} action set $\mathbb{A}(r)$ in each phase $r$. In each phase $r \in [\lceil \log d \rceil]$, it pulls arms $m(r) = \frac{T'- d\lceil \log_2 d \rceil}{B/2^{r - 1}}$ times in total, where $B = 2^{ \left\lceil \log_2 d \right \rceil} - 1$ and $\beta \in [0, 1]$ is a hyperparameter, and $T' = T - \left\lfloor \frac{T}{d}\beta \right\rfloor \times d$. 
In each phase, we compute an \emph{allocation vector} $\boldsymbol{p}(r) \in \left\{ \boldsymbol{v}\in \mathbb{R}^d \ | \ \sum_{s = 1}^{d} v_s = 1\right\} \triangleq \Pi_d$, and pull each arm $s$ $ \lceil p_s(r)\cdot m(r) \rceil$ times. Then, at the end of each phase $r$, it eliminates a subset of possibly suboptimal actions. Eventually, there is only one action $\boldsymbol{\pi}^{\mathrm{out}}$ in the active action set, which is the output of the algorithm. \par
The key to identifying the best action with high confidence is the choice of the allocation vector $\boldsymbol{p}(r)$, which determines how many times we pull each arm in phase~$r$. \par
\begin{algorithm}[t]
    \caption{Minimax Combinatorial Successive Accept Algorithm}
    \begin{algorithmic}[1] \label{Minimax-CombSAR_Algorithm}
         \renewcommand{\algorithmicrequire}{\textbf{Input:}}
         \renewcommand{\algorithmicensure}{\textbf{Parameter:}}
         \REQUIRE Budget: $T\geq0$, initialization parameter: $\beta$, action set: $\mathbb{A}(1) = \mathcal{A}$
         \STATE \texttt{// Initialization} 
          \FOR{$s\in[d]$} \label{Initialization_start}
            \STATE Pull arm $s$ $\left\lfloor \frac{T}{d}\beta \right \rfloor$ times and update $\hat{\mu}_{s}(1)$
            \STATE $T_s(r) \leftarrow \left\lfloor \frac{T}{d}\beta \right \rfloor$
          \ENDFOR \label{Initialization_end}
          \STATE  $T' \leftarrow T - \left\lfloor \frac{T}{d}\beta \right \rfloor \times d$
          \FOR{$r = 1 \ \mathrm{to} \ \lceil \log_{2} d \rceil$}
            \STATE $m(r) = \frac{T'- d\lceil \log_2 d \rceil}{B/2^{r - 1}}$
            \STATE Compute $\boldsymbol{p}(r)$ according to (\ref{min-max-p(r)-ordinary}) or (\ref{min-max-p(r)_Lagrange})
            \FOR{$s\in [d]$}
                \STATE Pull arm $s$ $\left\lceil p_s(r) \cdot m(r) \right\rceil$ times
                \STATE Update $\hat{\mu}_{s}(r + 1)$ with the observed samples
            \ENDFOR
            \STATE For each action $\boldsymbol{\pi} \in \mathbb{A}(r)$, estimate the expected reward: $ \langle \hat{\boldsymbol{\mu}}(r + 1), \boldsymbol{\pi} \rangle$
            \STATE Let $\mathbb{A}(r + 1)$ be the set of $\left\lceil\frac{d}{2^{r}}\right\rceil$ actions in $\mathbb{A}(r)$ with the largest estimates of the expected rewards \label{AcceptEliminationProcedure}
          \ENDFOR
          \renewcommand{\algorithmicrequire}{\textbf{Output:}}
          \REQUIRE The only action $\boldsymbol{\pi}^{\mathrm{out}}$ in $\mathbb{A}(\lceil\log_2 d\rceil + 1)$
 \end{algorithmic} 
\end{algorithm}
\textbf{Choice of $\boldsymbol{p}(r)$} \par
We discuss how to choose an allocation vector that is beneficial to identify the best action. Let us denote the number of times arm $s$ is pulled before phase $r$ starts by $T_s(r)$. Also, we denote by $\hat{\mu}_s(r)$ the empirical mean of the reward from arm $s$ before phase $r$ starts. Then, at the end of phase $r$, from Hoeffding's inequality \citep{Hoeffding1963}, we have
\begin{eqnarray} \label{HoeffdingInequality}
    &&\Pr\left[ \left|(\hat{\boldsymbol{\mu}}(r + 1) -\boldsymbol{\mu})^{\top} (\boldsymbol{\pi}^{a} - \boldsymbol{\pi}^{b})\right| \geq \epsilon \right] \nonumber \\ 
    &\leq& \exp \left( - \frac{  \epsilon^2}{ \kappa^{a, b, \boldsymbol{p}}(r) R^2}  \right) 
\end{eqnarray}
for any $\boldsymbol{\pi}^{a}, \boldsymbol{\pi}^{b} \in \mathcal{A}$, and $\epsilon \in \mathbb{R}$. Here, $\hat{\boldsymbol{\mu}}(r + 1) = (\hat{\mu}_1(r + 1), \ldots, \hat{\mu}_d(r + 1))^{\top}$ and 
\begin{equation}
    \kappa^{a, b, \boldsymbol{p}}(r) = \sum_{s = 1}^{d} \frac{\left( \pi^{a}_{s} - \pi^{b}_{s} \right)^2}{T_s(r) + \lceil p_s(r)\cdot m(r) \rceil }.
\end{equation}
(\ref{HoeffdingInequality}) shows that the empirical difference $\hat{\boldsymbol{\mu}}^{\top} (\boldsymbol{\pi}^{a} - \boldsymbol{\pi}^{b})$ between $\boldsymbol{\pi}^{a}$ and $\boldsymbol{\pi}^{b}$ is closer to the true difference  ${\boldsymbol{\mu}}^{\top} (\boldsymbol{\pi}^{a} - \boldsymbol{\pi}^{b})$ with high probability if we make $\kappa^{a, b, \boldsymbol{p}}(r)$ small. In that case, we have a higher chance to distinguish whether $\boldsymbol{\pi}^{a}$ is better than $\boldsymbol{\pi}^{b}$ or not.
However, when we have more than two actions in $\mathbb{A}(r)$, 
\begin{equation}
    \boldsymbol{p}^{a, b}(r) = \argmin_{\boldsymbol{p} \in \Pi_d} \kappa^{a, b, \boldsymbol{p}}(r) \nonumber
\end{equation}
is not necessarily a good allocation vector for investigating the true difference between other pairs of actions in $\mathbb{A}(r)$. Therefore, we propose the following allocation vector as an alternative:
\begin{equation}
    \boldsymbol{p}^{\mathrm{min}}(r) \triangleq \argmin_{\boldsymbol{p} \in \Pi_d } \max_{\boldsymbol{\pi}^{a}, \boldsymbol{\pi}^{b} \in \mathcal{A}} \kappa^{a, b, \boldsymbol{p}}(r). \label{min-max-p(r)-ordinary}
\end{equation}
(\ref{min-max-p(r)-ordinary}) takes a minimax approach, which computes an allocation vector that minimizes the maximum value of the right-hand side of (\ref{HoeffdingInequality}) among all of the pairs of actions in $\mathbb{A}(r)$. Since (\ref{min-max-p(r)-ordinary}) is a $d$-dimensional non-linear optimization problem, it becomes computationally costly as $d$ grows. Thus, another possible choice of the allocation vector is as follows:
\begin{eqnarray} 
    \boldsymbol{q}^{\mathrm{min}}(r) 
    &\triangleq& \argmin_{\boldsymbol{p} \in \Pi_d} \max_{\boldsymbol{\pi}^{a}, \boldsymbol{\pi}^{b} \in \mathcal{A}} \lambda^{a, b, \boldsymbol{p}} (r), \label{min-max-p(r)-2}
\end{eqnarray}
where $\lambda^{a, b, \boldsymbol{p}} = \sum_{s = 1}^{d} \frac{\left( \pi^{a}_{s} - \pi^{b}_{s} \right)^2}{p_s(r)\cdot m(r) }$.
For specific actions $\boldsymbol{\pi}^{k}, \boldsymbol{\pi}^{l} \in \mathcal{A}$, from the method of Lagrange multipliers \citep{Hoffmann2009}, we have
\begin{eqnarray}
    \boldsymbol{q}^{k, l}(r) 
    &\triangleq& \argmin_{\boldsymbol{p} \in \Pi_d} \lambda^{k, l, \boldsymbol{p}} (r) \nonumber \\
    &=& \left(\frac{\left| \pi^{k}_1 - \pi^{l}_1 \right|}{\sum_{s = 1}^{d} \left| \pi^{k}_s - \pi^{l}_s \right|}, \ldots, \frac{\left| \pi^{k}_d - \pi^{l}_d \right|}{\sum_{s = 1}^{d} \left| \pi^{k}_s - \pi^{l}_s \right|} \right)^{\top}, \nonumber
\end{eqnarray}
and therefore, $\boldsymbol{q}^{\mathrm{min}}(r)$ in (\ref{min-max-p(r)-2}) can be written explicitly as follows:
\begin{align}
    \boldsymbol{q}^{\mathrm{min}}(r) &= \boldsymbol{q}^{i, j}(r) \nonumber \\
    &= \left(\frac{\left| \pi^i_1 - \pi^{j}_1 \right|}{\sum_{s = 1}^{d} \left| \pi^{i}_s - \pi^{j}_s \right|}, \ldots, \frac{\left| \pi^{i}_d - \pi^{j}_d \right|}{\sum_{s = 1}^{d} \left| \pi^{i}_s - \pi^{j}_s \right|} \right)^\top, \label{min-max-p(r)_Lagrange}
\end{align} 
where $\boldsymbol{\pi}^{i}, \boldsymbol{\pi}^{j} = \argmax_{\boldsymbol{\pi}^{k}, \boldsymbol{\pi}^{l} \in \mathcal{A}} \lambda^{k, l, \boldsymbol{p}} (r)$. If we use (\ref{min-max-p(r)_Lagrange}) for the allocation vector instead of computing (\ref{min-max-p(r)-ordinary}), we do not have to solve a $d$-dimensional non-linear optimization problem, and thus is computationally more friendly. \par
In some cases, actions in $\mathcal{A}$ can be sparse, and $\boldsymbol{p}(1)$ can also be sparse. If $\boldsymbol{p}(1)$ is sparse, we may only have a few samples for some arms, and therefore accidentally eliminate the best action in the first phase in line \ref{AcceptEliminationProcedure} of Algorithm \ref{Minimax-CombSAR_Algorithm}. To cope with this problem, we have the \emph{initialization} phase (lines \ref{Initialization_start}--\ref{Initialization_end}) to pull each arm $\left\lfloor \frac{T}{d}\beta \right \rfloor$ times, where $\beta \in [0, 1]$ is a hyperparameter. Intuitively, $\beta$ represents how much of the total budget will be spent on the initialization phase. If $\beta$ is too small, we may accidentally eliminate the best action in the early phase, and if it is too large, we may not have enough budget to distinguish between the best action and the next best action.

\subsection{Theoretical Analysis of Minimax-CombSAR} 
Here, in Theorem \ref{Minimax-CombSAR_Algorithm_MainTheorem}, we show an upper bound of the probability of error of the Minimax-CombSAR algorithm.
\begin{theorem}\label{Minimax-CombSAR_Algorithm_MainTheorem}
    For any problem instance in fixed-budget R-CPE-MAB, the Minimax-CombSAR algorithm outputs an action $\boldsymbol{\pi}^{\mathrm{out}}$ satisfying
    \begin{align}
        &\Pr\left[ \boldsymbol{\pi}^{\mathrm{out}} \neq \boldsymbol{\pi}^{*} \right] \nonumber \\
        &\le \left( \frac{4K}{d} + 3 \log_2 d \right) \exp \left( - \frac{T' - \left\lceil \log_2 d \right\rceil}{R^2 V^2} \cdot \frac{1}{\mathbf{H}_{2}} \right), \label{Minimax-CombSARUpperBound}
    \end{align}
    where $V = \max_{\boldsymbol{\pi}^{i} \in \mathcal{A} \setminus \{ \boldsymbol{\pi}^{*} \}}  \left(\sum\limits_{s = 1}^{d} \frac{\left| \pi^{1}_{s} - \pi^{i}_{s} \right|}{ \left| \pi^{1}_{s(i)} - \pi^{i}_{s(i)} \right| } \right)^2$ and $s(i) = \argmax_{s \in [d]} \left| \pi^{1}_s - \pi^{i}_s \right|$.
\end{theorem} 
We can see that the Minimax-CombSAR algorithm is an optimal algorithm whose upper bound of the probability of error matches the lower bound shown in (\ref{LowerBoundEq}) since we have $\mathbf{H}_{2} \leq \mathbf{H} \leq \log(2d)\mathbf{H}_{2}$. 
\subsection{Comparison with \citet{Katz-Samuels2020}}
Here, we compare the Minimax-CombSAR algorithm with the Peace algorithm introduced in \citet{Katz-Samuels2020}. The upper bound of the Peace algorithm can be written as follows:
\begin{eqnarray} 
    &&\Pr\left[ \boldsymbol{\pi}^{\mathrm{out}} \neq \boldsymbol{\pi}^{*} \right] \nonumber \\
    &\leq& 2 \left\lceil \log \left( d \right) \right\rceil \exp \left( -\frac{T}{c' \log (d) \left( \gamma_{*} + \rho_{*} \right)} \right), \label{PeaceAlgorithmUpperBound}
\end{eqnarray}
where 
\begin{align}
    \gamma_{*} = \inf_{\boldsymbol{\lambda} \in \Pi_{d}} \mathbb{E}_{\boldsymbol{\eta} \sim \mathcal{N}\left(0, I \right) } \left[ \sup_{\boldsymbol{\pi} \in \mathcal{A} \setminus \left\{ \boldsymbol{\pi}^{*} \right\}} \frac{\left( \boldsymbol{\pi}^{*} - \boldsymbol{\pi} \right)^{\top} \boldsymbol{A}\left( \lambda \right)^{\frac{1}{2}} \boldsymbol{\eta}}{ \boldsymbol{\mu}^{\top} \left( \boldsymbol{\pi}^{*} - \boldsymbol{\pi} \right) } \right], \nonumber
\end{align}
\begin{eqnarray}
    \rho_{*} = \min_{\boldsymbol{\lambda} \in \Pi_{d}} \max_{\boldsymbol{\pi} \in \mathcal{A} \setminus \{\boldsymbol{\pi}^{*}\}} \frac{\sum_{s = 1}^{d} \frac{\left| \pi^{*}_{s} - \pi_{s} \right|^{2}}{\lambda_{s}}}{\Delta^{2}_{\boldsymbol{\pi}^{*}, \boldsymbol{\pi}}}, \nonumber
\end{eqnarray}
and $c'$ is a problem-dependent constant. Here, $\boldsymbol{A}\left(\boldsymbol{\lambda}\right)$ is a diagonal matrix whose $(i, i)$ element is $\lambda_{i}$. In general, it is not clear whether the upper bound of the Minimax-CombSAR algorithm in (\ref{Minimax-CombSAR_Algorithm}) is tighter than that of the Peace algorithm shown in (\ref{PeaceAlgorithmUpperBound}). In the experiment section, we compare the two algorithms numerically and show that our algorithm outperforms the Peace algorithm.

\section{Experiment}\label{ExperimentSection}

In this section, we numerically compare the CSA, Minimax-CombSAR, and Peace algorithms. Here, the goal is to identify the best action for the knapsack problem. 
In the knapsack problem, we have $d$ items. Each item $s\in[d]$ has a weight $w_s$ and value $v_s$. Also, there is a knapsack whose capacity is $W$ in which we put items. Our goal is to maximize the total value of the knapsack, not letting the total weight of the items exceed the capacity of the knapsack. Formally, the optimization problem is given as follows:
\begin{equation*}
\begin{array}{ll@{}ll}
\text{maximize}_{\boldsymbol{\boldsymbol{\pi}}\in\mathcal{A}}  & \sum_{s = 1}^{d}v_{s}\pi_s  &\\ \\
\text{subject to}& \sum_{s = 1}^{d}\pi_s w_s \leq W, &
\end{array}
\end{equation*}
where $\pi_s$ denotes the number of item $s$ in the knapsack. Here, the weight of each item is known, but the value is unknown, and therefore has to be estimated. In each time step, the player chooses an item $s$ and gets an observation of value $v_s$, which can be regarded as a random variable from an unknown distribution with mean $v_s$. \par
First, we run an experiment where we assume $|\mathcal{A}|$ is exponentially large in $d$. We see the performance of the CSA algorithm with different budgets. Next, we compare the CSA, Minimax-CombSAR, and Peace algorithms, where we can assume that $|\mathcal{A}|$ is polynomial in $d$. 

\subsection{When $|\mathcal{A}|$ is Exponentially Large in $d$} 
\begin{figure}[t]
    \centering
    \includegraphics[width = \linewidth]{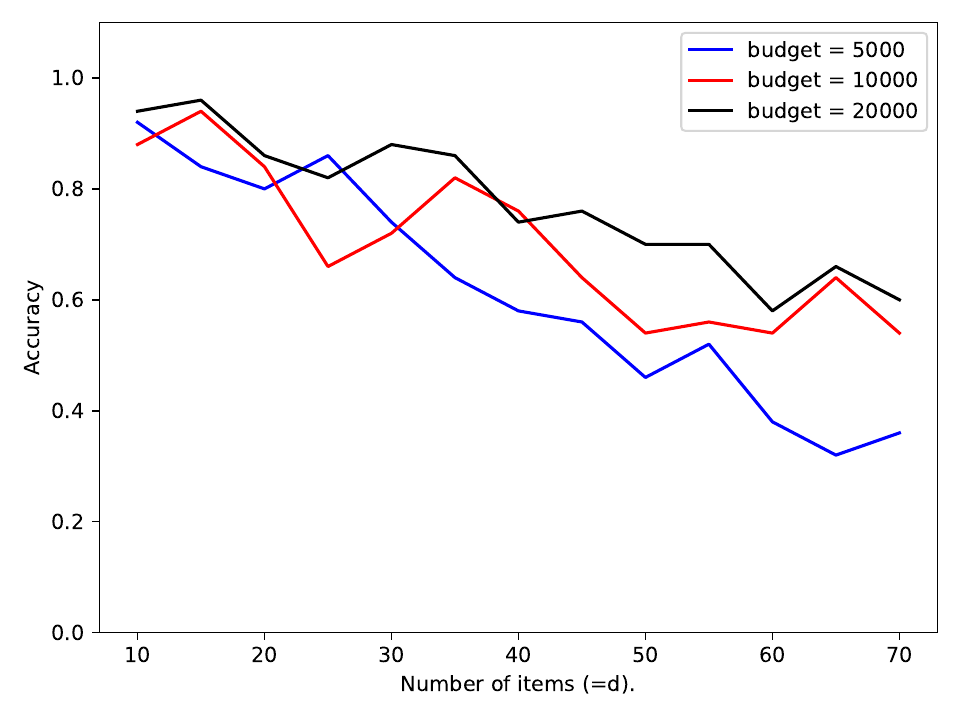}
    \caption{The percentage of the best actions correctly identified by the CSA algorithm in the knapsack problem with different budgets. The horizontal axis represents the number of items $d$. We ran the experiments fifty times for each $d$.}
    \label{experiment1}
\end{figure}
\begin{figure}[t]
    \centering
    \includegraphics[width = \linewidth]{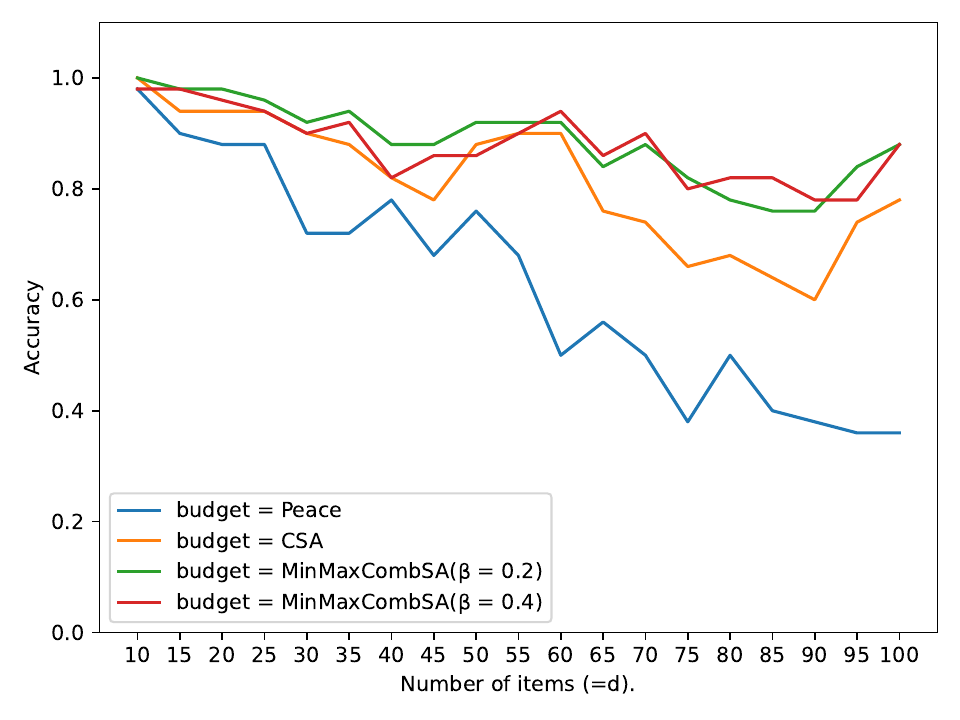}
    \caption{Comparison of the percentage of the best actions correctly identified by the CSA, Minimax-CombSAR, and Peace algorithms. The horizontal axis represents the number of items $d$. We ran the experiments fifty times for each $d$.}
    \label{experiment2}
\end{figure}
Here, we assume $|\mathcal{A}|$ is exponentially large in $d$. Therefore, we can not use the Minimax-CombSAR and the Peace algorithm since they have to enumerate all the possible actions. Here, we see how many times out of 50 experiments the CSA algorithm identified the best action.  \par
For our experiment, we generated the weight of each item $s$, $w_s$, uniformly from $\{1, 2, \ldots, 200 \}$. For each item $s$, we generated $v_s$ as $v_s = w_s \times x$, where $x$ is a uniform sample from $[1.0, 1.1]$. The capacity of the knapsack was $W = 200$. Each time we chose an item $s$, we observed a value $v_s + \epsilon$ where $\epsilon$ is a noise from $\mathcal{N}(0, 1)$. We set $R=1$. We show the result in Figure \ref{experiment1}. The larger the dimension $d$ is, and the smaller the budget $T$ is, the fewer times the optimal actions are correctly identified, which is consistent with the theoretical result of Theorem \ref{CSATheorem}. 

\subsection{When $|\mathcal{A}|$ is Polynomial in $d$.}
Here, we assume that we have prior knowledge of the rewards of arms, i.e., knowledge of the values of the items.
We assumed that we know each $v_s$ is in $[w_s, 1.1 \times w_s]$, and used this prior knowledge to generate the action class $\mathcal{A}$ in the following procedure. We first generated a vector $\boldsymbol{v}'$ whose $s$-th element $v'_s$ was uniformly sampled from $[w_s,  1.1 \times w_s]$, and then solved the knapsack problem with $v'_s$ and added the obtained solution $\boldsymbol{\pi}$ to $\mathcal{A}$. We repeated this 2000 times; therefore $|\mathcal{A}| \leq 2000$. In this experiment, the budget was $T = 50000$. For the Minimax-CombSAR algorithm, we set $\beta = 0.2, 0.4$. We ran forty experiments for each $d \in \{ 10, 15, \ldots, 100 \}$. We showed the result in Figure \ref{experiment2}. We can see that the Minimax-CombSAR algorithm outperforms the other two algorithms for almost every $d$. Also, the CSA algorithm outperforms the Peace algorithm for almost every $d$.

\section{Conclusion} \label{Conclusion_Section}
In this paper, we studied the fixed-budget R-CPE-MAB. We first introduced the CSA algorithm, which is the first algorithm that can identify the best action even when the size of the action class is exponentially large with respect to the number of arms. However, it still has an extra logarithmic term in the exponent. Then, we proposed an optimal algorithm named the Minimax-CombSAR algorithm, which, although it is applicable only when the action class is polynomial, matches a lower bound. We showed that both of the algorithms outperform the existing methods.

\section*{Acknowledgement}
We thank Dr. Kevin Jamieson for his very helpful advice and comments on existing studies.
SN was supported by JST SPRING, Grant Number JPMJSP2108.

\bibliography{main}
\bibliographystyle{apalike}

\section*{Checklist}

The checklist follows the references. For each question, choose your answer from the three possible options: Yes, No, Not Applicable.  You are encouraged to include a justification to your answer, either by referencing the appropriate section of your paper or providing a brief inline description (1-2 sentences). 
Please do not modify the questions.  Note that the Checklist section does not count towards the page limit. Not including the checklist in the first submission won't result in desk rejection, although in such case we will ask you to upload it during the author response period and include it in camera ready (if accepted).

\textbf{In your paper, please delete this instructions block and only keep the Checklist section heading above along with the questions/answers below.}

 \begin{enumerate}

 \item For all models and algorithms presented, check if you include:
 \begin{enumerate}
   \item A clear description of the mathematical setting, assumptions, algorithm, and/or model. [Yes]
   \item An analysis of the properties and complexity (time, space, sample size) of any algorithm. [Yes]
   \item (Optional) Anonymized source code, with specification of all dependencies, including external libraries. [Yes]
 \end{enumerate}

 \item For any theoretical claim, check if you include:
 \begin{enumerate}
   \item Statements of the full set of assumptions of all theoretical results. [Yes]
   \item Complete proofs of all theoretical results. [Yes]
   \item Clear explanations of any assumptions. [Yes]     
 \end{enumerate}

 \item For all figures and tables that present empirical results, check if you include:
 \begin{enumerate}
   \item The code, data, and instructions needed to reproduce the main experimental results (either in the supplemental material or as a URL). [Yes]
   \item All the training details (e.g., data splits, hyperparameters, how they were chosen). [Yes]
         \item A clear definition of the specific measure or statistics and error bars (e.g., with respect to the random seed after running experiments multiple times). [Yes]
         \item A description of the computing infrastructure used. (e.g., type of GPUs, internal cluster, or cloud provider). [Yes]
 \end{enumerate}

 \item If you are using existing assets (e.g., code, data, models) or curating/releasing new assets, check if you include:
 \begin{enumerate}
   \item Citations of the creator If your work uses existing assets. [Yes]
   \item The license information of the assets, if applicable. [Not Applicable]
   \item New assets either in the supplemental material or as a URL, if applicable. [Yes]
   \item Information about consent from data providers/curators. [Yes]
   \item Discussion of sensible content if applicable, e.g., personally identifiable information or offensive content. [Not Applicable]
 \end{enumerate}

 \item If you used crowdsourcing or conducted research with human subjects, check if you include:
 \begin{enumerate}
   \item The full text of instructions given to participants and screenshots. [Not Applicable]
   \item Descriptions of potential participant risks, with links to Institutional Review Board (IRB) approvals if applicable. [Not Applicable]
   \item The estimated hourly wage paid to participants and the total amount spent on participant compensation. [Not Applicable]
 \end{enumerate}

 \end{enumerate}

\clearpage
\appendix

\thispagestyle{empty}

\onecolumn
\section{Proof of Theorem \ref{thm:LowerBoundTheorem}} \label{LowerBoundTheoremProof}
Here, we prove Theorem \ref{thm:LowerBoundTheorem}. For the reader's convenience, we restate the Theorem.
\LowerBoundTheorem*
We follow a similar discussion to \citet{Carpentier2016}.
This is a lower bound that will hold in the much easier problem where the learner knows that the bandit setting she is facing is one of only $d$ given bandit settings. This lower bound ensures that even in this much simpler case, the learner will make a mistake. \par
Let $\mathcal{N}(\mu, 1)$ denote the Gaussian distribution of mean $\mu$ and unit variance. For any $k\in [d]$, we write $\nu_k:=\mathcal{N}\left( \mu_k, 1\right)$. Also, for any $k\in[d]$, we define $\nu'_k$ as follows:
\begin{equation}
    \nu'_k := \left\{
            \begin{array}{lll}
              \mathcal{N}(\mu_k + 2\Delta_{k}, 1) & (\mathrm{if} \ \pi^{*}_k < \pi_k^{(k)} )\\
              \mathcal{N}(\mu_k - 2\Delta_{k}, 1) & (\mathrm{if} \ \pi^{*}_k > \pi_k^{(k)} ) 
            \end{array}. \nonumber
              \right.
\end{equation}
Recall that $\Delta_{k} = \frac{ \boldsymbol{\mu}^{\top} \left(
\boldsymbol{\pi}^{*} - \boldsymbol{\pi}^{(k)} \right)}{\left| \pi_{k}^{*} - \pi^{(k)}_k \right|}$ and $\boldsymbol{\pi}^{(k)} = \argmin_{\boldsymbol{\pi} \neq \boldsymbol{\pi}^{*}} \frac{ \boldsymbol{\mu}^{\top} \left(
\boldsymbol{\pi}^{*} - \boldsymbol{\pi} \right)}{\left| \pi^{*}_{k} - \pi_k \right|}$. \par
For any $s\in [d]$, we define the product distributions $\mathcal{G}^{s}$ as $\nu_1^{s} \otimes \cdots \otimes \nu_d^{s}$, where for $1\leq k\leq d$, 
\begin{equation}
    \nu^{s}_{k} := \nu_k \mathbf{1}\left[ k \neq s \right] + \nu'_k \mathbf{1}\left[ k = s \right].
\end{equation}
The bandit problem associated with distribution $\mathcal{G}^{s}$, and that we call ``the bandit problem $s$'' is such that, for any $1\leq k \leq d$, arm $k$ has distribution $\nu^{s}_k$, i.e., all arms have distribution $\nu_k$ except for arm $s$ that has distribution $\nu'_k$. 
We denote by $\boldsymbol{\mu}^{(s)}$ the vector of expected rewards of bandit problem $s$, i.e., 
$\mu^{(s)}_k = \mu_k$ for $k \neq s$, 
$\mu^{(s)}_k = \mu_k + 2\Delta_{k}$ for $k = s$ and $\pi^{*}_k < \pi_k^{(k)}$,  
$\mu^{(s)}_k = \mu_k - 2\Delta_{k}$ for $k = s$ and $\pi^{*}_k >\pi_k^{(k)}$. 
For any $1 \leq s \leq d$, $\mathbb{P}_{s} := \mathbb{P}_{\left( \mathcal{G}^{s}\right)^{\otimes T}}$ for the probability distribution of the bandit problem $s$ according to all the samples that a strategy could possibly collect up to horizon $T$, i.e., according to the samples $(X_{k, u})_{1 \leq k \leq d, 1 \leq u \leq T} \sim \left( \mathcal{G}^{s}\right)^{\otimes T}$. Also, we define $\mathbb{P}_{0} := \mathbb{P}_{\left( \mathcal{G}^{0}\right)^{\otimes T}}$, where $\mathcal{G}^{0} := \nu_1 \otimes \cdots \otimes \nu_d$. 
\par
In the bandit problem $s$, $\boldsymbol{\pi}^{*}$ is no longer the best action since the difference of the expected reward between $\boldsymbol{\pi}^{*}$ and $\boldsymbol{\pi}^{(s)}$ is
\begin{eqnarray}
     &&\sum_{e \in [d] \setminus s} \mu_e (\pi^{*}_e - \pi_e^{(s)})  - 2 \Delta_s \times \left| \pi^{*}_s - \pi_s^{(s)} \right|  \nonumber \\
     &\leq& - \boldsymbol{\mu}^{\top}\left( \boldsymbol{\pi}^{*} - \boldsymbol{\pi}^{(s)} \right) < 0.
\end{eqnarray} 
We denote by $\boldsymbol{\pi}^{*, (s)}$ the best action in the bandit problem $s$.\par

Now, we prove Theorem \ref{LowerBoundTheorem}. \par

\textbf{Step 1: Definition of KL-divergence}\par
For two distributions $\nu$ and $\nu'$ defined on $\mathbb{R}$, and that are such that $\nu$ is absolutely continuous with respect to $\nu'$, we write 
\begin{equation}
    \mathrm{KL}(\nu, \nu') = \int_{\mathbb{R}} \log\left( \frac{d\nu(x)}{d\nu'(x)} \right) d\nu(x),
\end{equation}
for the Kullback Leibler divergence between distribution $\nu$ and $\nu'$. For any $k \in [d]$, let us write 
\begin{equation} \label{true_KL_explicit}
    \mathrm{KL}_{k} \triangleq \mathrm{KL}(\nu_k, \nu'_k) = \frac{1}{2} \left(\mu_a - \mu_b\right)^{2}.
\end{equation}
for the Kullback-Leibler divergence between two Gaussian distributions $\mathcal{N}(\mu_a, 1)$ and $\mathcal{N}(\mu_b, 1)$. \par
Let $1\leq t \leq T$. Also, let us fix $e \in [d]$, and think of bandit instance $e$. We define the quantity:
\begin{eqnarray}
    \widehat{\mathrm{KL}}_{s, t} 
    & = & \frac{1}{t} \sum_{u = 1}^{t} \log \left( \frac{d\nu_s}{d\nu'_s}\left(X_{s, u}\right) \right) \nonumber \\
    & = & \label{emp_KL_explicit} 
    \begin{cases}
    \frac{1}{t} \sum_{u = 1}^{t} - \Delta_{s} \left(X_{s, u} - \mu_s - 2 \Delta_{s} \right)              & \text{if$\pi^{*}_{s} > \pi^{(s)}_s$ and $e \neq s$,} \\
    \frac{1}{t} \sum_{u = 1}^{t} \Delta_{s} \left(X_{s, u} - \mu_s + 2 \Delta_{s} \right)        & \text{if $\pi^{*}_{s} < \pi^{(s)}_s$and $e \neq s$,} \\
    0 &\text{if $e = s$}
  \end{cases}
\end{eqnarray}
where $X_{s, u} \sim_{\mathrm{i.i.d}}\nu_{s}^{e}$ for any $u\leq t$.\par
Next, let us define the following event:
\begin{equation}
    \xi = \left\{ \forall 1\leq s \leq d, \forall 1 \leq t \leq T, \left| \widehat{\mathrm{KL}}_{s, t} \right| - \mathrm{KL}_{s} \leq  \sqrt{2 \Delta_{s}^2 \frac{\log (12Td)}{t} } \right\}.
\end{equation}
The following lemma shows a concentration bound for $\left| \widehat{\mathrm{KL}}_{s, t} \right|$.
\begin{lemma} \label{KLConcentrationLemma}
    It holds that 
    \begin{equation}
        {\Pr}_e\left[ \xi \right] \geq \frac{5}{6}
    \end{equation}
\end{lemma}
\begin{proof}
    When $e = s$, $\left| \widehat{\mathrm{KL}}_{s, t} \right| - \mathrm{KL}_{s} \leq  \sqrt{2 \Delta_{s}^2 \frac{\log (12Td)}{t} }$ holds for any $t$. Therefore, we think of $e \neq s$.
    Since $X_{s, t}$ is a $1$-sub-Gaussian random variable, we can see that $\widehat{\mathrm{KL}}_{s, t}$ is a $\frac{{\Delta_s}^2}{t}$-sub-Gaussian random variable from (\ref{emp_KL_explicit}). We can apply the Hoeffding's inequality to this quantity, and we have that with probability larger than $1 - \frac{1}{6dT}$, 
    \begin{equation}
       \left| \widehat{\mathrm{KL}}_{s, t} \right| - \mathrm{KL}_{s} \leq  \sqrt{ \frac{2\Delta_{s}^2\log (12Td)}{t} }
    \end{equation}
\end{proof}

\textbf{Step2: A change of measure} \par
Let $\mathcal{ALG}$ denote the active strategy of the learner, that returns some action $\boldsymbol{\pi}_{\mathrm{out}}$ after pulling arms $T$ times in total. Let $(T_s)_{1 \leq s \leq T}$ denote the number of samples collected by $\mathcal{ALG}$ on each arm. These quantities are stochastic but it holds that  $\sum_{s = 1}^{d} T_s = T$. For any $1 \leq s \leq d$, let us write 
\begin{equation}
t_s = \mathbb{E}_{0}[T_s].    
\end{equation} 
It holds also that $\sum_{s = 1}^{d} t_s = T$. \par
We recall the change of measure identity \citep{Audibert2010}, which states that for any measurable event $\mathcal{E}$ and for any $1 \leq s \leq d$,
\begin{equation}
    {\Pr}_{s} \left[ \mathcal{E} \right] = \mathbb{E}_{0} \left[ \mathbbm{1}\left[ \mathcal{E} \right] \exp\left( -T_s \widehat{\mathrm{KL}}_{s, T_s} \right) \right],
\end{equation}
where $\mathbbm{1}$ denotes the indicator function.\par
Then, we consider the following event:
\begin{equation}\label{ChangeofMeasureEq}
    \mathcal{E}_{s} = \left\{\boldsymbol{\pi}_{\mathrm{out}} = \boldsymbol{\pi}^{*}  \right\} \cap \left\{ \xi \right\} \cap \left\{ T_s \leq 6t_s \right\},
\end{equation}
i.e., the event where the algorithm outputs action $\boldsymbol{\pi}^{*}$ at the end, where $\xi$ holds, and where the number of times arm $s$ is pulled is smaller than $6t_s$. From (\ref{ChangeofMeasureEq}), we have
\begin{eqnarray}
    {\Pr}_{s}[\mathcal{E}_{s}] 
    & = & \mathbb{E}_{0} \left[ \mathbbm{1}\left[ \mathcal{E}_{s} \right] \exp\left( -T_s \widehat{\mathrm{KL}}_{s, T_s} \right) \right] \\
    & \geq & \mathbb{E}_{0} \left[ \mathbbm{1}\left[ \mathcal{E}_{s} \right] \exp\left( -T_s \mathrm{KL}_s - \sqrt{2T_s \Delta_{s}^2\log (12Td)} \right) \right] \\
    & \geq & \mathbb{E}_{0} \left[ \mathbbm{1}\left[ \mathcal{E}_{s} \right] \exp\left( -6t_s \mathrm{KL}_s - \sqrt{2T \Delta_{s}^2\log (12Td)} \right) \right] \\
    & \geq & \exp\left( -6t_s \mathrm{KL}_s - \sqrt{2T \Delta_{s}^2\log (12Td)} \right) {\Pr}_{0}\left[ \mathcal{E}_{s} \right], \label{LowerBoundProofEq4}
\end{eqnarray}
since on $\mathcal{E}_{s}$, we have that $\xi$ holds and that $T_s \leq 6t_s$, and $\mathbb{E}_{0}\left[\widehat{\mathrm{KL}}_{s, t}\right] = \mathrm{KL}_s$ for any $t \leq T$.\par
\textbf{Step 3: Lower bound on $\Pr_{0}\left[ \mathcal{E}_1 \right]$ for any reasonable algorithm}\\
Assume that the algorithm $\mathcal{ALG}$ is a $\delta$-correct algorithm. Then, we have ${\Pr}_{0}\left[ \boldsymbol{\pi}_{\mathrm{out}} \neq \boldsymbol{\pi}^{*} \right] \leq \delta$. From the Markov's inequality, we have 
\begin{equation}\label{MarkovEQ_on_NumPulls}
    {\Pr}_{0}[T_s \geq 6t_s] \leq \frac{\mathbb{E}_{0}T_s}{6t_s} = \frac{1}{6},
\end{equation}
since $\mathbb{E}_{0}\left[ T_s \right] = t_s$ for algorithm $\mathcal{ALG}$. \par
Combining Lemma \ref{KLConcentrationLemma}, (\ref{MarkovEQ_on_NumPulls}), and the fact that $\mathcal{ALG}$ is a $\delta$-correct algorithm, by the union bound, we have 
\begin{equation}
    {\Pr}_{0}\left[ \mathcal{E}_{s} \right] \geq 1 - \left( 1/6 + \delta + 1/6 \right) = \frac{2}{3} - \delta.
\end{equation}
This fact combined with (\ref{LowerBoundProofEq4}), (\ref{true_KL_explicit}), and the fact that ${\Pr}_{s}\left[\boldsymbol{\pi}_{\mathrm{out}} \neq \boldsymbol{\pi}^{*, (s)} \right] \geq \Pr_{s}\left[ \mathcal{E}_{s} \right]$, we have
\begin{eqnarray} \label{Eq7inLocatelli}
    {\Pr}_{s}\left[\boldsymbol{\pi}_{\mathrm{out}} \neq \boldsymbol{\pi}^{*, (s)} \right] 
    &\geq& \left( \frac{2}{3} - \delta \right) \exp\left( -6t_s \mathrm{KL}_s - \sqrt{2T \Delta_{s}^2\log (12Td)} \right)  \nonumber \\
    & = &  \left( \frac{2}{3} - \delta \right) \exp\left( -12t_s \left(\Delta_s\right)^{2} - \sqrt{2T \Delta_{s}^2 \log (12Td)} \right) \nonumber
\end{eqnarray}
since $\mathrm{KL}_{s} = 2\Delta_{s}^2$. \par
\textbf{Step 4: Conclusion.} Since $ \mathbf{H} = \sum\limits_{s = 1}^{d} \left( \frac{1}{\Delta_s}\right)^2$ and $\sum\limits_{s = 1}^{d}t_s = T$, there exists $e \in[d]$ such that
\begin{equation}
    t_{e} \leq \frac{T}{\mathbf{H} \Delta_{e}^2},
\end{equation}
as the contradiction yields an immediate contradiction. For this $e$, it holds by (\ref{Eq7inLocatelli}) that 
\begin{equation}
    {\Pr}_{e}\left[\boldsymbol{\pi}_{\mathrm{out}} \neq \boldsymbol{\pi}^{*, (e)} \right] 
    \geq  \left( \frac{2}{3} - \delta \right) \exp\left( - 12\frac{T}{\mathbf{H}} - 2\sqrt{T \Delta_{e}^2 \log (6Td)} \right).
\end{equation}
This concludes the proof.

\section{How to construct COracles} \label{COracleConstructionAppendix}
Here, we show how to construct COracles once $\boldsymbol{\mu}$ and $\boldsymbol{S}(t)$ are given for some specific combinatorial problems.
\subsection{COracle for the Knapsack Problem} \label{COracleConstructionAppendix_KnapsackProblem}
Here, we show that we can construct the \emph{COracle} for the knapsack problem by calling the \emph{offline oracle} once.
Let $\boldsymbol{S}^{1}(t)$ be the set that collects only the first component of each of all elements in $\boldsymbol{S}(t)$. Also, let $W_{\mathrm{done}} = \sum\limits_{(e, x) \in \boldsymbol{S}(t)} x w_e$. If $W - W_{\mathrm{done}} < 0$, the COracle outputs $\perp$. Otherwise, we call the \emph{offline oracle} and solve the following optimization problem:
\begin{equation*} \label{knapsackproblem_formulation}
\begin{array}{ll@{}ll}
\text{maximize}_{\boldsymbol{\xi}}  & \sum\limits_{s \in [d] \setminus \boldsymbol{S}^1(t)} \mu_{s}\xi_s  &\\ \\
\text{subject to}& \sum\limits_{s \in \boldsymbol{S}^1(t)} \xi_s w_s \leq W - W_{\mathrm{done}}. &
\end{array}
\end{equation*}
Then, we return $\mathrm{COracle}(\boldsymbol{\mu}, \boldsymbol{S}(t)) = \boldsymbol{S}(t) \cup \boldsymbol{\xi}$.
\subsection{COracle for the Optimal Transport Problem}\label{COracleConstructionAppendix_OptimalTransport}
Here, in Algorithm \ref{COracle_Optimal_Transport},  we show the \emph{COracle} for the optimal transport problem, which calls the \emph{offline oracle} once. First, let us define $\boldsymbol{S}^{1}(t)$ as the set that collects only the first element of each of all elements in $\boldsymbol{S}(t)$. Then, let us define $\boldsymbol{\pi}^{\mathrm{CO}}$ as the final output of the COracle. Also, let us define $\boldsymbol{s}'$ and $\boldsymbol{d}'$ as follows:
\begin{eqnarray}
    \forall i \in [m], j\in [n], x \in \texttt{POSSIBLE-PI}((i, j)) , \ {s}'_{i} = \left\{
            \begin{array}{ll}
            x & (\mathrm{if} \ ((i, j), x) \in \boldsymbol{S}(t), \\
            0 & (\mathrm{otherwise}) , \\
            \end{array} \nonumber
              \right.
\end{eqnarray}
and 
\begin{eqnarray}
    \forall i \in [m], j\in [n], x \in \texttt{POSSIBLE-PI}((i, j)) , \ {d}'_{j} = \left\{
            \begin{array}{ll}
            x & (\mathrm{if} \ ((i, j), x) \in \boldsymbol{S}(t), \\
            0 & (\mathrm{otherwise}) . \\
            \end{array} \nonumber
              \right.
\end{eqnarray}
Also, let us define $\boldsymbol{\pi}' \in \mathbb{R}^{m \times n}$ whose $(i, j)$-th element is defined as follows:
\begin{eqnarray}
    \forall i \in [m], j\in [n], x \in \texttt{POSSIBLE-PI}((i, j)) , \ \boldsymbol{\pi}'_{ij} = \left\{
            \begin{array}{ll}
            x & (\mathrm{if} \ ((i, j), x) \in \boldsymbol{S}(t), \\
            0 & (\mathrm{otherwise}) . \\
            \end{array} \nonumber
              \right.
\end{eqnarray}
Intuitively, for any $((i, j), x) \in \boldsymbol{S}(t)$, we have to send resources from supplier $i$ to demander $j$ for a value of $s'_i (= d'_j)$. This means that we have resources of $s_i - s'_i$ left to send from supplier $i$, and $d_j - d'_j$ left to send to demander $j$. Since we can not send a negative value of resources, $s_i - s'_i \geq 0$ and $d_j - d'_j \geq 0$ have to be satisfied. \par
From the above discussion, we can formally write $\boldsymbol{\pi}^{\mathrm{CO}}$ as follows:
\begin{eqnarray}
    \boldsymbol{\pi}^{\mathrm{CO}} =  \left\{
            \begin{array}{ll}
            \sum_{i, j}\pi'_{ij}\gamma_{ij} + \argmin_{\boldsymbol{\pi}'' \in \mathcal{G}\left( \boldsymbol{s} - \boldsymbol{s}', \boldsymbol{d} - \boldsymbol{d}' \right) } \sum_{i, j}\pi''_{ij}\gamma'_{ij} & (\text{ if there is no negative value in neither $\boldsymbol{s} - \boldsymbol{s}'$ nor $\boldsymbol{d} - \boldsymbol{d}'$}), \\
            \perp & (\mathrm{otherwise}) . \\
            \end{array} \nonumber
              \right. 
\end{eqnarray}
Here, $\boldsymbol{\gamma}'$ is defined as 
\begin{eqnarray}
    \forall i \in [m], j\in [n], x \in \texttt{POSSIBLE-PI}((i, j)) , \ {\gamma}'_{ij} = \left\{
            \begin{array}{ll}
            - \infty & (\mathrm{if} \ ((i, j), x) \in \boldsymbol{S}(t), \\
            \gamma_{ij} & (\mathrm{otherwise}) , \\
            \end{array} \nonumber
              \right.
\end{eqnarray}
to not let $\boldsymbol{\pi}^{\mathrm{CO}}$ send materials more than the amount determined by $\boldsymbol{S}(t)$.

\begin{algorithm}
    \caption{COracle for the Optimal Transport Problem} \label{COracle_Optimal_Transport}
    \begin{algorithmic}[1]
        \renewcommand{\algorithmicrequire}{\textbf{Input:}}
         \renewcommand{\algorithmicensure}{\textbf{Parameter:}}
         \REQUIRE Cost matrix: $\boldsymbol{\gamma}$, supply vector $\boldsymbol{s} \in \mathbb{R}^{m}$, demand vector  $\boldsymbol{d} \in \mathbb{R}^{n}$, $\boldsymbol{S}(t)$
         \STATE $\boldsymbol{\pi}' \leftarrow \boldsymbol{0}^{m \times n}$
         \FOR{$((i, j), x)$ in $\boldsymbol{S}(t)$} 
            \STATE $\pi'_{ij} \leftarrow x$ \label{final_output_determine}
            \STATE $s_i \leftarrow s_i - x$ \label{supply_decrease_line}
            \STATE $d_j \leftarrow d_j - x$ \label{demand_decrease_line}
            \STATE $\gamma_{ij} \leftarrow - \infty $
            \IF{$s_i < 0$ or $d_j < 0$} \label{Conditional_Branch}
            \STATE Return $\perp$
            \ENDIF
         \ENDFOR
         \STATE Compute $\boldsymbol{\pi}'' =  \argmin_{\boldsymbol{\pi}\in \mathcal{G}(\boldsymbol{s}, \boldsymbol{d})} \sum_{i, j}\pi_{ij}\gamma_{ij}$ using the offline oracle \label{remain_compute_line}
         \STATE Return $\boldsymbol{\pi}^{\mathrm{CO}} = \boldsymbol{\pi}' + \boldsymbol{\pi}''$
    \end{algorithmic}
\end{algorithm}

\subsection{COracle for a General Case When $K (=|\mathcal{A}|) = \mathrm{poly}(d)$} \label{COracleConstructionAppendix_GeneralCase}
Here, we show the time complexity of the COracle when the size of the action class is polynomial in $d$. If $\mathcal{A}_{S}$ is an empty set, the COracle returns $\perp$. Otherwise, we return $\boldsymbol{\pi}^{\mathrm{return}} = \argmax_{\boldsymbol{\pi} \in \mathcal{A}_{S}} \boldsymbol{\mu}^{\top} \boldsymbol{\pi}$. The time complexity to construct $\mathcal{A}_{S}$ is $\mathcal{O}(d K)$. This is because, for each action $\boldsymbol{\pi} \in \mathcal{A}$, we check every dimension to see if $\boldsymbol{\pi} \in \mathcal{A}_{S}$. Then, the time complexity of finding $\boldsymbol{\pi}^{\mathrm{return}}$ from $\mathcal{A}_{S}$ is $\mathcal{O}(K \log K)$. Therefore the time complexity of the COracle is $\mathcal{O}\left( dK + K \log K \right)$.

\section{Proof of Theorem \ref{CSATheorem}}
Here, we prove Theorem \ref{CSATheorem}. We first introduce some notions that are useful to prove it. Then, we show some preparatory lemmas that are needed to prove Theorem \ref{CSATheorem}. 
\subsection{Preparatories}
Let us introduce some notions that are useful to prove the theorem.
\subsubsection{Arm-Value Pair}
We define an \emph{arm-value pair set} $\boldsymbol{M}(\boldsymbol{\pi}) \subset [d] \times \mathbb{R}$ of $\boldsymbol{\pi}$ as follows:
\begin{eqnarray}
    \boldsymbol{M}(\boldsymbol{\pi}) = \{ (e, \pi_e) \ | \ \forall e\in [d] \}. \nonumber
\end{eqnarray}
Also, we define the \emph{arm-value pair family} $\mathcal{M}$ as follows:
\begin{eqnarray}
    \mathcal{M} = \{ \boldsymbol{M}(\boldsymbol{\pi}) \ | \ \forall \boldsymbol{\pi} \in \mathcal{A} \}. \nonumber
\end{eqnarray}
For any arm-value pair set $\boldsymbol{M} \in \mathcal{M}$, we denote by $M_e$ the second component of the ordered pair whose first component is $e$, i.e., $M_e = x$ for any $(e, x) \in \boldsymbol{M}$. Also, we call $M_e$ the $e$-th element of $\boldsymbol{M}$. \par
For any two different arm-value pair set $\boldsymbol{M}, \boldsymbol{M}' \in \mathcal{M}$, we define an operator $\boxminus$ such that the $e$-th element of $\boldsymbol{M}'\boxminus \boldsymbol{M} \subset [d] \times \mathbb{R}$, $(\boldsymbol{M}'\boxminus \boldsymbol{M})_{e}$, is defined as follows:
\begin{eqnarray}
    (\boldsymbol{M}' \boxminus \boldsymbol{M})_e = \left\{
            \begin{array}{lll}
              M'_e - M_e & (\mathrm{if} \ M'_e > M_e )\\
              0 & (\mathrm{if} \ M'_e \leq M_e )
            \end{array}. \nonumber
              \right.
\end{eqnarray}

\subsubsection{Exchange Class}
We define an \emph{exchange set} $b$ as an ordered pair of sets $b = (\boldsymbol{b}^+, \boldsymbol{b}^-)$, where $\boldsymbol{b}^{+}, \boldsymbol{b}^{-} \subset [d] \times \mathbb{R}\setminus \{0\}$. We say $\boldsymbol{b}^{+}$ (or $\boldsymbol{b}^{-}$) has as \emph{$e$-changer} if it has an element whose first component is $e$. For any $\boldsymbol{b}^{+}$ (or $\boldsymbol{b}^{-}$), we denote by $b^{+}_e$ the second component of the ordered pair whose first component is $e$, i.e., $b^{+}_e = x$ for any $(e, x) \in \boldsymbol{b}^{+}$.
Also, for any $b = (\boldsymbol{b}^{+}, \boldsymbol{b}^{-})$, we do not let both $\boldsymbol{b}^{+}$ and $\boldsymbol{b}^{-}$ have $e$-changers. \par
Next, for any arm-value pair set $\boldsymbol{M} \in \mathcal{M}$, exchange set $b = (\boldsymbol{b}^{+}, \boldsymbol{b}^{-})$, and $e\in[d]$, we define operator $\oplus$ such that the $e$-the element of $\boldsymbol{M} \oplus b \subset [d] \times \mathbb{R}$, $(\boldsymbol{M} \oplus b)_{e}$, is defined as follows:
\begin{equation}
        (\boldsymbol{M} \oplus b)_e = \left\{
            \begin{array}{lll}
              M_e + b^{+}_e & (\mathrm{if} \ \boldsymbol{b}^{+} \ \mathrm{has \ an \  \mathit{e-}changer})\\
              M_e - b^{-}_e & (\mathrm{if} \ \boldsymbol{b}^{-} \ \mathrm{has \ an \  \mathit{e-}changer})\\
              M_e  & (\mathrm{if \ neither } \ \boldsymbol{b}_{+} \ \mathrm{nor} \ \boldsymbol{b}^{-} \ \mathrm{has \ an \  \mathit{e-}changer})\\
            \end{array}.\label{oplus_Definition} 
              \right.
\end{equation}
Similarly, for any arm-value pair set $\boldsymbol{M} \in \mathcal{M}$, exchange set $b = (\boldsymbol{b}^{+}, \boldsymbol{b}^{-})$, and $e\in[d]$, we define operator $\ominus$ such that the $e$-the element of $\boldsymbol{M} \ominus b \subset [d] \times \mathbb{R}$, $(\boldsymbol{M} \ominus b)_e$, is defined as follows:
\begin{equation}
        (\boldsymbol{M} \ominus b)_e = \left\{
            \begin{array}{lll}
              M_e - b^{+}_e & (\mathrm{if} \ \boldsymbol{b}^{+} \ \mathrm{has \ an \  \mathit{e-}changer})\\
              M_e + b^{-}_e & (\mathrm{if} \ \boldsymbol{b}^{-} \ \mathrm{has \ an \  \mathit{e-}changer})\\
              M_e  & (\mathrm{if \ neither } \ \boldsymbol{b}^{+} \ \mathrm{nor} \ \boldsymbol{b}^{-} \ \mathrm{has \ an \  \mathit{e-}changer})\\
            \end{array}. \label{ominus_Definition}
              \right.
\end{equation}
We call a collection of exchange sets $\mathcal{B}$ an \emph{exchange class} for $\mathcal{M}$ if $\mathcal{B}$ satisfies the following property. For any $\boldsymbol{M}, \boldsymbol{M}' \in \mathcal{M}$ and $e\in[d]$, where $(\boldsymbol{M}\boxminus \boldsymbol{M}')_e > 0$, there exists an exchange set $b = (\boldsymbol{b}^{+}, \boldsymbol{b}^{-}) \in \mathcal{B}$ that satisfies the following five constraints: 
\textbf{(a)} $b^{-}_e = (\boldsymbol{M} \boxminus \boldsymbol{M}')_e$ \textbf{(b)} $\boldsymbol{b}^{+}\subseteq \boldsymbol{M}'\boxminus \boldsymbol{M}$, \textbf{(c)} $\boldsymbol{b}^{-}\subseteq \boldsymbol{M} \boxminus \boldsymbol{M}'$, \textbf{(d)}$(\boldsymbol{M} \oplus b) \in \mathcal{M}$ and \textbf{(e)} $(\boldsymbol{M}' \ominus b) \in \mathcal{M}$.\par
For any $\boldsymbol{a} \subset [d] \times \mathbb{R}$, let $a_e$ denote the second component of the ordered pair whose first component is $e$, i.e., $a_e = x$ for any $(e, x) \in \boldsymbol{a}$. Then, let $\boldsymbol{\chi}(\boldsymbol{a}) \in \mathbb{R}^{d}$ denote the vector defined as follows:
\begin{eqnarray}
    \chi_e(\boldsymbol{a}) = \left\{
            \begin{array}{lll}
              a_e & (\text{if $\boldsymbol{a}$ has an element whose first component is $e$ })\\
              0 & (\text{otherwise})
            \end{array}. \nonumber
    \right.
\end{eqnarray}
Also, for any exchange set $b$, we define $\boldsymbol{\chi}(b) = \boldsymbol{\chi}(\boldsymbol{b}^{+}) - \boldsymbol{\chi}(\boldsymbol{b}^{-})$. 

\subsection{Preparatory Lemmas}
Here, let us introduce some lemmas that are useful to prove the theorem. Below, we define $\boldsymbol{M}^{*} = \boldsymbol{M}(\boldsymbol{\pi}^{*})$.
\begin{lemma}[Interpolation Lemma] \label{InterpolationLemma}
    Let $\mathcal{B}$ be an exchange class of $\mathcal{M}$, and $\boldsymbol{M}, \boldsymbol{M}'$ be two different members of $\mathcal{M}$. Then, for any $e \in \{ s\in [d] \ | \ M_s \neq M'_s \}$, there exists an exchange set $b = (\boldsymbol{b}^{+}, \boldsymbol{b}^{-}) \in \mathcal{B}$, which satisfies the following five constraints: 
    \textbf{(a)} $b^{-}_e = (\boldsymbol{M}\boxminus \boldsymbol{M}')_e  $ or $  b^{+}_e = (\boldsymbol{M}'\boxminus \boldsymbol{M})_e $, 
    \textbf{(b)} $\boldsymbol{b}^{-} \subseteq (\boldsymbol{M}\boxminus \boldsymbol{M}')$, 
    \textbf{(c)} $\boldsymbol{b}^{+} \subseteq \boldsymbol{M}'\boxminus \boldsymbol{M}$, 
    \textbf{(d)} $(\boldsymbol{M} \oplus b) \in \mathcal{M}$ and \textbf{(e)}$(\boldsymbol{M}'\ominus b) \in \mathcal{M}$. 
    Moreover, if $\boldsymbol{M}' = \boldsymbol{M}^{*}$, then we have 
    \begin{eqnarray}
    \frac{\langle \boldsymbol{\mu}, \boldsymbol{\chi}(b) \rangle}{\chi_e(b)} \geq \Delta_{e}.        
    \end{eqnarray}
\end{lemma}
\begin{proof}
    We decompose our proof into two cases. \par
    \textbf{Case (1)}: $(\boldsymbol{M} \boxminus \boldsymbol{M}')_e > 0$ \par
    By the definition of the exchange class, we know that there exists $b = (\boldsymbol{b}^{+}, \boldsymbol{b}^{-}) \in \mathcal{B}$ which satisfies that there is an $e\in[d]$ where 
    \textbf{(a)} $b^{-}_e = (\boldsymbol{M}\boxminus \boldsymbol{M}')_e  $, 
    \textbf{(b)} $\boldsymbol{b}^{-} \subseteq (\boldsymbol{M}\boxminus \boldsymbol{M}')$, 
    \textbf{(c)} $\boldsymbol{b}^{+} \subseteq \boldsymbol{M}'\boxminus \boldsymbol{M}$, 
    \textbf{(d)} $(\boldsymbol{M} \oplus b) \in \mathcal{M}$ and \textbf{(e)}$(\boldsymbol{M}'\ominus b) \in \mathcal{M}$. Therefore the five constraints are satisfied.\par
    \textbf{Case (2)}: $(\boldsymbol{M}' \boxminus \boldsymbol{M})_e > 0$ \par
    Using the definition of the exchange class, we see that there exists $b = (\boldsymbol{c}^{+}, \boldsymbol{c}^{-}) \in \mathcal{B}$ such that 
    \textbf{(a)} $c^{-}_e = (\boldsymbol{M}'\boxminus \boldsymbol{M})_e$, 
    \textbf{(b)} $\boldsymbol{c}^{-}\subseteq (\boldsymbol{M}' \boxminus \boldsymbol{M})$, 
    \textbf{(c)} $ \boldsymbol{c}^{+}\subseteq (\boldsymbol{M} \boxminus \boldsymbol{M}')$, 
    \textbf{(d)} $(\boldsymbol{M}\oplus b)\in \mathcal{M}$, and 
    \textbf{(e)} $(\boldsymbol{M}' \ominus b)\in \mathcal{M}$. 
    We construct $b = (\boldsymbol{b}^{+}, \boldsymbol{b}^{-})$ by setting $\boldsymbol{b}^{+} = \boldsymbol{c}^{-}$ and $\boldsymbol{b}^{-} = \boldsymbol{c}^{+}$. Notice that, by the construction of $b$, we have $\boldsymbol{M} \oplus b = \boldsymbol{M}\ominus c$ and $\boldsymbol{M}'\ominus b = \boldsymbol{M}'\oplus c$. Therefore, it is clear that $b$ satisfies the five constraints of the lemma. \par
    Next, let us think when $\boldsymbol{M}' = \boldsymbol{M}^{*}$ for both cases. 
    Let us consider 
    \begin{eqnarray}
        \boldsymbol{\pi}^1 = \argmin_{\boldsymbol{\pi}\in \mathcal{M} \setminus \boldsymbol{\pi}^{*}} \frac{\langle \boldsymbol{\mu}, \boldsymbol{\pi}^{*} - \boldsymbol{\pi}\rangle}{\left| \pi^{*}_e - \pi_e \right|}.
    \end{eqnarray} 
    Note that, by the definition of the \emph{G-Gap}, we have $\frac{\langle \boldsymbol{\mu}, \boldsymbol{\pi}^{*} - \boldsymbol{\pi}^{1}\rangle}{\left|\pi^{*}_e - \pi^1_e\right|} = \Delta_{e}$. 
    We define $\boldsymbol{\pi}^{0}$ such that $\boldsymbol{M}(\boldsymbol{\pi}^{0}) = \boldsymbol{M}(\boldsymbol{\pi}^{*}) \ominus b$. 
    Note that we already have $\boldsymbol{M}(\boldsymbol{\pi}^{0}) \in \mathcal{M}$.
    We can see that 
    \begin{eqnarray}
     \frac{\langle \boldsymbol{\mu}, \boldsymbol{\chi}(b) \rangle}{\left|\chi_e(b) \right|} = \frac{\langle \boldsymbol{\mu}, \boldsymbol{\pi}^{*} - \boldsymbol{\pi}^{0} \rangle}{\left|\pi^{*}_e - \pi^{0}_{e}\right|} \geq  \Delta_{e},
    \end{eqnarray}
    where the inequality follows from the definition of \emph{G-Gap}.
\end{proof}

Next, we establish the confidence bounds used for the analysis of the CSA algorithm.
\begin{lemma}\label{Lemma16}
    Given a phase $t\in [d]$, we define random events $\tau_t$ as follows:
    \begin{eqnarray}
        \tau(t) = \left\{ \forall s \in [d] \setminus F(t), | \hat{\mu}_s(t) - \mu_s | < \frac{\Delta_{(d - t + 1)}}{(2 + L^2)U_{\mathcal{A}}} \right\}, \label{tau_definition}
    \end{eqnarray}
    where $L = \max_{ e\in[d], \boldsymbol{\pi}^{1}, \boldsymbol{\pi}^{2}, \boldsymbol{\pi}^{3} \in \mathcal{A}} \frac{\left| \pi^{1}_e - \pi^{2}_e \right|}{\left| \pi^{1}_{e} - \pi^{3}_{e} \right|}$.
    Then, we have
    \begin{eqnarray}
        \Pr \left[ \bigcap_{t = 1}^{d} \tau(t) \right] \geq 1 - d^2 \exp\left( - \frac{T - d}{2\left(2 + L^2\right)^{2}R^2 \tilde{\log}(d) U^2_{\mathcal{M}} \mathbf{H}_2 } \right)
    \end{eqnarray}
\end{lemma}
\begin{proof}
    Fix some $t\in[d]$ and active arm $s\in[d] \setminus F(t)$ of phase $t$. Note that arm $s$ has been pulled for $\tilde{T}(t)$ times during the first $t$ phases. Therefore, by Hoeffding's inequality, we have
    \begin{eqnarray}
        \Pr \left[ \left| \hat{\mu}_s(t) - \mu_s \right| \geq \frac{\Delta_{(d - t + 1)}}{(2 + L^2)U_{\mathcal{A}}} \right] \leq 2 \exp \left( - \frac{\tilde{T}(t) \Delta^{2}_{(d - t + 1)}}{2\left(2 + L^2\right)^{2} R^2 U^2_{\mathcal{A}}} \right). \label{Eq88}
    \end{eqnarray}
    By plugging the definition of $\tilde{T}(t)$, the quantity $\Tilde{T}(t) \Delta^2_{(d - t + 1)}$ on the right hand side of (\ref{Eq88}) can be further bounded by
    \begin{eqnarray}
        \Tilde{T}(t) \Delta^{2}_{(d - t + 1)} 
        & \geq & \frac{T - d}{ \Tilde{\log}(d) (d - t + 1) } \Delta^2_{(d - t + 1)} \nonumber \\
        & \geq & \frac{T - d}{ \Tilde{\log}(d) \mathbf{H}_2 }, \nonumber
    \end{eqnarray}
    where the last inequality follows from the definition of $\mathbf{H}_2 = \max_{s\in[d]} \frac{s}{\Delta^2_{(s)}}$. By plugging the last equality into (\ref{Eq88}), we have
    \begin{eqnarray}
        \Pr \left[ \left| \hat{\mu}_s(t) - \mu_s \right| \geq \frac{\Delta_{(d - t + 1)}}{(2 + L^2)U_{\mathcal{A}}} \right] \leq 2 \exp \left( - \frac{T - d}{2\left(2 + L^2\right)^{2} R^2 \Tilde{\log}(d) U^2_{\mathcal{A}} \mathbf{H}_{2}} \right). \label{Eq89}
    \end{eqnarray}
    Using (\ref{Eq89}) and a union bound for all $t \in [d]$ and all $s\in [d] \setminus F(t)$, we have
    \begin{eqnarray}
        \Pr \left[ \bigcap_{t = 1}^{d} \tau(t) \right] 
        & \geq &  1 - 2 \sum_{t = 1}^{d} (d - t + 1)  \exp\left( - \frac{T - d}{2\left(2 + L^2\right)^{2} R^2 \tilde{\log}(d) U^2_{\mathcal{A}} \mathbf{H}_2 } \right) \nonumber \\
        & \geq &  1 - d^2 \exp\left( - \frac{T - d}{2\left(2 + L^2\right)^{2} R^2 \tilde{\log}(d) U^2_{\mathcal{M}} \mathbf{H}_2 } \right) \nonumber
    \end{eqnarray}
\end{proof}
The following lemma builds the confidence bound of inner products.
\begin{lemma}
    Fix a phase $t \in [d]$. Suppose that random event $\tau(t)$ occurs. For any vector $\boldsymbol{a} \in \mathbb{R}^{d}$, we have
    \begin{equation}
        |\langle \hat{\boldsymbol{\mu}}(t) , \boldsymbol{a} \rangle - \langle \boldsymbol{\mu}, \boldsymbol{a}\rangle| < \frac{\Delta_{(d - t + 1)}}{(2 + L^2)U_{\mathcal{A}}} \| \boldsymbol{a} \|_1.
    \end{equation}
\end{lemma}
\begin{proof}
    Suppose that $\tau_t$ occurs. We have 
    \begin{eqnarray}
        |\langle \hat{\boldsymbol{\mu}}(t) , \boldsymbol{a} \rangle - \langle \boldsymbol{\mu}, \boldsymbol{a} \rangle | 
        & = & | \langle \hat{\boldsymbol{\mu}}(t) - \boldsymbol{\mu}, \boldsymbol{a} \rangle | \nonumber \\
        & = & \left| \sum_{s = 1}^{d} (\hat{\mu}_s(t) - \mu_s) a_s \right| \nonumber \\
        & \leq & \sum_{s = 1}^{d} |\hat{\mu}_s(t) - \mu_s| |a_s| \nonumber \\
        & < & \frac{\Delta_{(d - t + 1)}}{(2 + L^2)U_{\mathcal{A}}} \sum_{i = 1}^{d} |a_s| \label{Eq91} \\
        & = & \frac{\Delta_{(d - t + 1)}}{(2 + L^2)U_{\mathcal{A}}} \|\boldsymbol{a} \|_1, \nonumber
    \end{eqnarray}
    where (\ref{Eq91}) follows from the definition of $\tau_t$ in (\ref{tau_definition}).
\end{proof}

\subsection{Main Lemmas}
Let $\boldsymbol{R}(t) = \{ (e, x) \ | \ \forall e \in [d], \forall x \in \texttt{POSSIBLE-PI(e)}, (e, x) \notin \boldsymbol{S}(t)\}$. We begin with a technical lemma that characterizes several useful lemma properties of $\boldsymbol{S}(t)$ and $\boldsymbol{R}(t)$.
\begin{lemma} \label{Lemma18}
    Fix a phase $t\in[d]$. Suppose that $\boldsymbol{S}(t)\subseteq \boldsymbol{M}^{*}$ and $\boldsymbol{R}(t) \cap \boldsymbol{M}^{*} = \emptyset$. Let $\boldsymbol{M}$ be a set such that $\boldsymbol{S}(t) \subseteq \boldsymbol{M}$ and $\boldsymbol{R}(t) \cap \boldsymbol{M} = \emptyset$. Let $\boldsymbol{a}$ and $\boldsymbol{b}$ be two sets satisfying that $\boldsymbol{a}\subseteq \boldsymbol{M} \setminus \boldsymbol{M}^{*}$, $\boldsymbol{b} \subseteq \boldsymbol{M}^{*} \setminus \boldsymbol{M}$ and $\boldsymbol{a} \cap \boldsymbol{b} = \emptyset$. Then, we have 
    \begin{center}
        $\boldsymbol{S}(t) \subseteq \left(\boldsymbol{M}\setminus \boldsymbol{a} \cup \boldsymbol{b}\right)$ \ \ and \ \ $\boldsymbol{R}(t)  \cap (\boldsymbol{M} \setminus \boldsymbol{a} \cup \boldsymbol{b}) = \emptyset$
    \end{center}
\end{lemma}
\begin{proof}
    We first prove the first part as follows:
    \begin{eqnarray}
        \boldsymbol{S}(t) \cap \left(\boldsymbol{M} \setminus \boldsymbol{a} \cup \boldsymbol{b} \right) 
        & = & \left(\boldsymbol{S}(t) \cap \left(\boldsymbol{M} \setminus \boldsymbol{a} \right)\right) \cup \left(\boldsymbol{S}(t) \cap \boldsymbol{b}\right) \nonumber \\ 
        & = & \boldsymbol{S}(t) \cap \left( \boldsymbol{M} \setminus \boldsymbol{a} \right)  \label{Eq92} \\ 
        & = & \left( \boldsymbol{S}(t) \cap \boldsymbol{M} \right) \setminus \boldsymbol{a} \nonumber \\ 
        & = & \boldsymbol{S}(t) \setminus \boldsymbol{a} \label{Eq93} \\
        & = & \boldsymbol{S}(t), \label{Eq94}
    \end{eqnarray}
    where (\ref{Eq92}) holds since we have $\boldsymbol{S}(t) \cap \boldsymbol{b} \subseteq \boldsymbol{S}(t) \cap (\boldsymbol{M}^{*} \setminus \boldsymbol{M}) \subseteq \boldsymbol{M} \cap (\boldsymbol{M}^{*} \setminus \boldsymbol{M}) = \emptyset$; (\ref{Eq93}) follows from $\boldsymbol{S}(t) \subseteq \boldsymbol{M}$; and (\ref{Eq94}) follows from $\boldsymbol{a} \subseteq \boldsymbol{M} \setminus \boldsymbol{M}^{*}$ and $\boldsymbol{S}(t) \subseteq \boldsymbol{M}^{*}$ which implies that $\boldsymbol{a} \cap \boldsymbol{S}(t) = \emptyset$. Notice that $\boldsymbol{S}(t) \subseteq (\boldsymbol{M} \setminus \boldsymbol{a} \cup \boldsymbol{b})$. \par
    Then, we proceed to prove the second part in the following
    \begin{eqnarray}
        \boldsymbol{R}(t) \cap (\boldsymbol{M}\setminus \boldsymbol{a}\cup \boldsymbol{b}) 
        & = & (\boldsymbol{R}(t) \cap (\boldsymbol{M} \setminus \boldsymbol{a})) \cup (\boldsymbol{R}(t)\cap \boldsymbol{b}) \nonumber \\
        & = & \boldsymbol{R}(t) \cap (\boldsymbol{M} \setminus \boldsymbol{a}) \label{Eq95} \\
        & = & (\boldsymbol{R}(t) \cap \boldsymbol{M}) \setminus \boldsymbol{a} \nonumber \\
        & = & \emptyset \setminus \boldsymbol{a} = \emptyset, \label{Eq96}
    \end{eqnarray}
    where (\ref{Eq95}) follows from the fact that $\boldsymbol{R}(t) \cap \boldsymbol{b} \subseteq \boldsymbol{R}(t) \cap (\boldsymbol{M}^{*} \setminus M) \subseteq R(t) \cap \boldsymbol{M}^{*} = \emptyset$; and (\ref{Eq96}) follows from the fact that $\boldsymbol{R}(t) \cap \boldsymbol{M} = \emptyset$. \par
\end{proof}

Let $\hat{\boldsymbol{M}}(t) = \boldsymbol{M}(\hat{\boldsymbol{\pi}}(t))$. 
The next lemma provides an important insight into the correctness of the CSA algorithm. Informally speaking, suppose that the algorithm does not make an error before phase $t$. Then, we show that, if arm $e$ has a gap $\Delta_e$ larger than the ``reference gap'' $\Delta_{(d - t + 1)}$ of phase $t$, then arm $e$ must be correctly clarified by $\hat{\boldsymbol{M}}(t)$, i.e., $M_e(t) = M^{*}_e$. \par
\begin{lemma} \label{Lemma19}
    Fix any phase $t>0$. Suppose that event $\tau_t$ occurs. Also, assume that $\boldsymbol{S}(t) \subseteq \boldsymbol{M}^{*}$ and $\boldsymbol{R}(t)\cap \boldsymbol{S}(t) = \emptyset$. Let $e \in [d]\setminus \boldsymbol{F}(t)$ be an active arm. Suppose that $\Delta_{(d - t + 1)} \leq \Delta_e$. Then, we have $(e, \pi^{*}_e) \in \boldsymbol{M}^{*}\cap \boldsymbol{S}(t)$.
\end{lemma}
\begin{proof}
    Suppose that $(e, \pi^{*}_e)\notin ( \boldsymbol{M}^{*}\cap \hat{\boldsymbol{M}}(t))$. 
    This is equivalent to the following
    \begin{equation}\label{Eq99}
        (e, \pi^{*}_e)\in (\boldsymbol{M}^{*}\cap \lnot \hat{\boldsymbol{M}}(t)) \cup (\lnot \boldsymbol{M}^{*} \cap \hat{\boldsymbol{M}}(t)) 
    \end{equation}
    (\ref{Eq99}) can be further rewritten as 
    \begin{equation}
        (e, \pi^{*}_e)\in (\boldsymbol{M}^{*} \setminus \hat{\boldsymbol{M}}(t)) \cup (\hat{\boldsymbol{M}}(t) \setminus \boldsymbol{M}^{*}).
    \end{equation}
    From this assumption, it is easy to see that $\hat{\boldsymbol{M}}(t) \neq \boldsymbol{M}^{*}$. Therefore, we can apply Lemma \ref{InterpolationLemma}. 
    We know that there exists $b = (\boldsymbol{b}^{+}, \boldsymbol{b}^{-}) \in \mathcal{B}$ such that 
    \textbf{(a)} ${b}^{-}_e = (\hat{\boldsymbol{M}}(t) \boxminus \boldsymbol{M}^{*})_e  $ or $ {b}^{+}_e = (\boldsymbol{M}^{*}\boxminus \hat{\boldsymbol{M}}(t))_e$, 
    \textbf{(b)} $\boldsymbol{b}^{-}\subseteq (\hat{\boldsymbol{M}}(t) \boxminus \boldsymbol{M}^{*})$, 
    \textbf{(c)} $\boldsymbol{b}^{+} \subseteq \boldsymbol{M}^{*} \boxminus \hat{\boldsymbol{M}}(t)$, 
    \textbf{(d)} $(\hat{\boldsymbol{M}}(t)\oplus b) \in \mathcal{M}$, 
    \textbf{(e)} $(\boldsymbol{M}^{*} \ominus b) \in \mathcal{M}$,
    and $\frac{\langle \boldsymbol{\mu}, \boldsymbol{\chi}(b) \rangle}{\chi_e(b)} \geq  \Delta_e > 0$. \par
    Using Lemma \ref{Lemma18}, we see that $(\hat{\boldsymbol{M}}(t) \oplus b)\cap \boldsymbol{R}(t) = \emptyset$, $\boldsymbol{S}(t) \subseteq (\hat{\boldsymbol{M}}(t) \oplus b)$, and $(\boldsymbol{b}^{+} \cup \boldsymbol{b}^{-}) \cap (\boldsymbol{S}(t) \cup \boldsymbol{R}(t) ) = \emptyset$. \par 
    Recall the definition $\hat{\boldsymbol{M}}(t) \in \argmax_{\boldsymbol{M} \in \mathcal{M}, \boldsymbol{S}(t) \subseteq \boldsymbol{M}, \boldsymbol{R}(t) \cap \boldsymbol{M} = \emptyset} \langle \hat{\boldsymbol{\mu}}(t), \boldsymbol{\pi}(\boldsymbol{M}) \rangle$ and also recall that $\hat{\boldsymbol{M}}(t) \oplus b \in \mathcal{M}$. 
    Therefore, we obtain that
    \begin{equation}
        \frac{\langle \hat{\boldsymbol{\mu}}(t), \boldsymbol{\pi}(\hat{\boldsymbol{M}}(t)) \rangle}{\chi_e(b)} \geq \frac{\langle \hat{\boldsymbol{\mu}}, \boldsymbol{\pi}(\hat{\boldsymbol{M}}(t) \oplus b) \rangle}{\chi_e(b)}. \label{Eq100}
    \end{equation}
    On the other hand, we have
    \begin{eqnarray}
        \frac{\langle \hat{\boldsymbol{\mu}}(t), \boldsymbol{\pi}(\hat{\boldsymbol{M}}(t) \oplus b) \rangle}{\chi_e(b)} 
        & = & \frac{\langle \hat{\boldsymbol{\mu}}(t), \boldsymbol{\pi}(\hat{\boldsymbol{M}}(t)) + \boldsymbol{\chi}(b)\rangle}{\chi_e(b)} \label{Eq101} \\
        & = & \frac{\langle \hat{\boldsymbol{\mu}}(t) , \boldsymbol{\pi}(\hat{\boldsymbol{M}}(t)) \rangle}{{\chi_e(b)}} + \frac{\langle \hat{\boldsymbol{\mu}}(t), \boldsymbol{\chi}(b)\rangle}{{\chi_e(b)}} \nonumber \\
        & > & \frac{\langle \hat{\boldsymbol{\mu}}(t), \boldsymbol{\pi}(\hat{\boldsymbol{M}}(t))\rangle}{{\chi_e(b)}}  + \frac{\langle \boldsymbol{\mu}, \boldsymbol{\chi}(b) \rangle}{{\chi_e(b)}} - \frac{\Delta_{(d - t + 1)}}{(2 + L^2) U_\mathcal{A}} \frac{\| \boldsymbol{\chi}(b) \|}{{\chi_e(b)}} \label{Eq102} \\
        & \geq & \frac{\langle \hat{\boldsymbol{\mu}}(t), \boldsymbol{\pi}(\hat{\boldsymbol{M}}(t))\rangle}{{\chi_e(b)}}  + \frac{\langle \boldsymbol{\mu}, \boldsymbol{\chi}(b)\rangle}{{\chi_e(b)}} - \frac{\Delta_{e}}{(2 + L^2) U_\mathcal{A}} \frac{\| \boldsymbol{\chi}(b) \|}{{\chi_e(b)}} \nonumber \\
        & \geq & \frac{\langle \hat{\boldsymbol{\mu}}(t), \boldsymbol{\pi}(\hat{\boldsymbol{M}}(t))\rangle}{{\chi_e(b)}} + \frac{\langle \boldsymbol{\mu}, \boldsymbol{\chi}(b)\rangle}{{\chi_e(b)}} - \frac{\Delta_{e}}{(2 + L^2)} \label{Eq103}  \\
        & \geq & \frac{\langle \hat{\boldsymbol{\mu}}(t), \boldsymbol{\pi}(\hat{\boldsymbol{M}}(t))\rangle}{{\chi_e(b)}} + \frac{1 + L^2}{2 + L^2} \Delta_{e} \label{Eq104}  \\
        & \geq & \frac{\langle \hat{\boldsymbol{\mu}}(t), \boldsymbol{\pi}(\hat{\boldsymbol{M}}(t))\rangle}{{\chi_e(b)}}. \label{Eq105}
    \end{eqnarray}
    This means that $\langle \hat{\boldsymbol{\mu}}(t), \boldsymbol{\pi}(\hat{\boldsymbol{M}}(t) \oplus b) \rangle > \langle \hat{\boldsymbol{\mu}}(t), \boldsymbol{\pi}(\hat{\boldsymbol{M}}(t))\rangle$. 
    This contradicts the definition of $\hat{\boldsymbol{\pi}}(t)$, and therefore, we have $(e, \pi^{*}_e)\in ( \boldsymbol{M}^{*}\cap \hat{\boldsymbol{M}}(t))$.
    
\end{proof}

The next lemma takes a step further.
Hereinafter, we denote $\Tilde{\boldsymbol{M}}^e(t)$ as $\boldsymbol{M}(\Tilde{\boldsymbol{\pi}}^{e}(t))$.
\begin{lemma} \label{Lemma20}
    Fix any phase $t > 0$. Suppose that event $\tau_{t}$ occurs. Also, assume that $\boldsymbol{S}(t) \subseteq M^{*}$ and $\boldsymbol{R}(t)\cap \boldsymbol{M}^{*} = \emptyset$. Let $e\in[d] \setminus \boldsymbol{F}(t)$ satisfy $\Delta_{(d - t+ 1)} \leq \Delta_{e}$. Then, we have 
    \begin{equation}
        \frac{\langle \hat{\boldsymbol{\mu}}(t), \hat{\boldsymbol{\pi}}(t) - \Tilde{\boldsymbol{\pi}}^e(t) \rangle}{\left|\hat{\pi}_e(t) - \Tilde{\pi}^{e}_{e}(t)\right|} > \frac{L + 1/L}{2 + L^2}\Delta_{(d - t + 1)}.
    \end{equation}
\end{lemma}
\begin{proof}
    By Lemma \ref{Lemma19}, we see that 
    \begin{equation}
        (e, {\pi}^{*}_e) \in (\boldsymbol{M}^{*}\cap \boldsymbol{S}(t)).
    \end{equation}
    From the definition of $\Tilde{\boldsymbol{M}}^{e}(t)$, which ensures that $\Tilde{M}^e_e(t)\neq M^{*}_e$, we have $(e, {\pi}^{*}_e) \in (\boldsymbol{M}^{*} \setminus \Tilde{\boldsymbol{M}}^{e}(t))$. \par
    Hence, we apply Lemma \ref{InterpolationLemma}. 
    There exists 
     $b = (\boldsymbol{b}^{+}, \boldsymbol{b}^{-}) \in \mathcal{B}$ such that 
     \textbf{(a)} ${b}^{-}_e = \left(\Tilde{\boldsymbol{M}}^{e}(t) \boxminus \boldsymbol{M}^{*} \right)_e  $ or $ b^{+}_{e} = \left(\boldsymbol{M}^{*}\boxminus \Tilde{\boldsymbol{M}}^{e}(t) \right)_{e} $,
     \textbf{(b)} $\boldsymbol{b}^{-}\subseteq (\Tilde{\boldsymbol{M}}^{e}(t) \boxminus \boldsymbol{M}^{*})$, 
     \textbf{(c)} $\boldsymbol{b}^{+} \subseteq \boldsymbol{M}^{*} \boxminus \Tilde{\boldsymbol{M}}^{e}(t)$, 
    \textbf{(d)} $(\Tilde{\boldsymbol{M}}^{e}(t) \oplus b) \in \mathcal{M}$, 
    \textbf{(e)} $\boldsymbol{M}^{*} \ominus b \in \mathcal{M}$, and 
    $\frac{\langle\boldsymbol{\mu}, \boldsymbol{\chi}(b)\rangle}{\chi_e(b)} \geq \Delta_e$. \par
    Define $\overline{\boldsymbol{M}}^e(t) \triangleq \Tilde{\boldsymbol{M}}^{e}(t) \oplus b$. 
    Using Lemma \ref{Lemma18}, we have $ \boldsymbol{S}(t) \subseteq \overline{\boldsymbol{M}}^{e}(t)$ and $\boldsymbol{R}(t) \cap \overline{\boldsymbol{M}}^{e}(t) = \emptyset$. Since $\overline{\boldsymbol{M}}^{e}(t) \in \mathcal{M}$ and by definition $\hat{\boldsymbol{M}}(t) = \argmax_{\boldsymbol{M}\in \mathcal{M}, \boldsymbol{S}_t\subseteq M, \boldsymbol{R}_t\cap \boldsymbol{M} = \emptyset } \langle \hat{\boldsymbol{\mu}}, \boldsymbol{\pi}(\boldsymbol{M}) \rangle$, we have
    \begin{equation}
        \frac{\langle \hat{\boldsymbol{\mu}}(t), \boldsymbol{\pi}(\hat{\boldsymbol{M}}(t)) \rangle}{\left|\hat{\pi}_e(t) - \Tilde{\pi}^{e}_{e}(t)\right|} \geq \frac{\langle \hat{\boldsymbol{\mu}}(t), \boldsymbol{\pi} (\overline{\boldsymbol{M}}^{e}(t)) \rangle}{\left|\hat{\pi}_e(t) - \Tilde{\pi}^{e}_{e}(t)\right|}. \label{Eq107}
    \end{equation}
    Hence, we have 
    \begin{eqnarray}
        \frac{\langle \hat{\boldsymbol{\mu}}(t), \hat{\boldsymbol{\pi}}(t) - \Tilde{\boldsymbol{\pi}}^e(t) \rangle}{\left|\hat{\pi}_e(t) - \Tilde{\pi}^{e}_{e}(t)\right|} 
        & \geq & \frac{\langle \hat{\boldsymbol{\mu}}(t), \boldsymbol{\pi} (\overline{\boldsymbol{M}}^{e}(t)) - \Tilde{\boldsymbol{\pi}}^e(t) \rangle}{\left|\hat{\pi}_e(t) - \Tilde{\pi}^{e}_{e}(t)\right|} \nonumber\\
        & = & \frac{\langle \hat{\boldsymbol{\mu}}(t), \boldsymbol{\pi}(\Tilde{\boldsymbol{M}}^{e}(t) \oplus b) \rangle}{\left|\hat{\pi}_e(t) - \Tilde{\pi}^{e}_{e}(t)\right|} - \frac{\langle \hat{\boldsymbol{\mu}}(t), \boldsymbol{\pi} (\tilde{\boldsymbol{M}}^{e}(t)) \rangle}{\left|\hat{\pi}_e(t) - \Tilde{\pi}^{e}_{e}(t)\right|} \nonumber \\
        & = & \frac{\langle \hat{\boldsymbol{\mu}}(t), \boldsymbol{\pi}(\tilde{\boldsymbol{M}}^{e}(t)) + \boldsymbol{\chi}(b) \rangle}{\left|\hat{\pi}_e(t) - \Tilde{\pi}^{e}_{e}(t)\right|} - \frac{\langle \hat{\boldsymbol{\mu}}(t), \boldsymbol{\pi}(\tilde{\boldsymbol{M}}^{e}(t)) \rangle}{\left|\hat{\pi}_e(t) - \Tilde{\pi}^{e}_{e}(t)\right|} \label{Eq108} \\
        & = & \frac{\langle \hat{\boldsymbol{\mu}}(t), \boldsymbol{\chi}(b) \rangle}{\left|\hat{\pi}_e(t) - \Tilde{\pi}^{e}_{e}(t)\right|} \nonumber \\
        & > & \frac{\chi_e(b)}{\left|\hat{\pi}_e(t) - \Tilde{\pi}^{e}_{e}(t)\right|} \left( \frac{\langle \boldsymbol{\mu}, \boldsymbol{\chi}(b) \rangle}{\chi_e(b)} - \frac{\Delta_{(d - t + 1)}}{(2 + L^2)U_{\mathcal{A}}} \frac{\| \boldsymbol{\chi}(b) \|_1}{\chi_e(b)} \right) \label{Eq109} \\
        & \geq & \frac{\chi_e(b)}{\left|\hat{\pi}_e(t) - \Tilde{\pi}^{e}_{e}(t)\right|} \left( \frac{\langle \boldsymbol{\mu}, \boldsymbol{\chi}(b) \rangle}{\chi_e(b)} - \frac{\Delta_{e}}{(2 + L^2) U_{\mathcal{A}}} \frac{\| \boldsymbol{\chi}(b) \|_1}{\chi_e(b)} \right) \label{Eq110} \\
        & \geq & \frac{\chi_e(b)}{\left|\hat{\pi}_e(t) - \Tilde{\pi}^{e}_{e}(t)\right|} \cdot \frac{1 + L^2}{2 + L^2} \Delta_e \label{Eq111} \\
        & \geq &  \frac{L + 1/L}{2 + L^2} \Delta_e \nonumber \\
        & \geq & \frac{L + 1/L}{2 + L^2} \Delta_{(d - t + 1)}, \label{Eq112}
    \end{eqnarray}
    where (\ref{Eq108}) follows from Lemma \ref{Lemma1}; (\ref{Eq109}) follows from Lemma \ref{Lemma17}, the assumption on event $\tau_{t}$; (\ref{Eq110}) follows from the assumption that $\Delta_{e} \geq \Delta_{(d - t + 1)}$; (\ref{Eq111}) holds since $b \in \mathcal{B}$ and therefore $\frac{\| \boldsymbol{\chi}(b) \|_1}{\left|\hat{\pi}_e(t) - \Tilde{\pi}^{e}_{e}(t)\right|} \leq U_\mathcal{A}$; (\ref{Eq112}) follows from the fact that $\frac{\langle \boldsymbol{\mu}, \boldsymbol{\chi}(b) \rangle}{\left|\hat{\pi}_e(t) - \Tilde{\pi}^{e}_{e}(t)\right|} \geq \Delta_e$.
\end{proof}

The next lemma shows that, during phase $t$, if $\Delta_{e} \leq \Delta_{(d - t + 1)}$ for some $e$, then the empirical gap between $\boldsymbol{\pi}(t)$ and $\Tilde{\boldsymbol{\pi}}^{e}(t)$ is smaller than $\frac{1}{3} \Delta_{(d - t + 1)}$.
\begin{lemma} \label{Lemma21}
    Fix any phase $t > 0$. Suppose that event $\tau_{t}$ occurs. Also, assume that $\boldsymbol{S}(t) \subseteq \boldsymbol{M}^{*}$ and $\boldsymbol{R}(t) \subseteq \boldsymbol{M}^{*} = \emptyset$. Suppose an active arm $e \in [d] \setminus \boldsymbol{F}(t)$ satisfies that $M^{*}_e \neq M_e(t)$. Then, we have
    \begin{equation}
        \frac{\langle \hat{\boldsymbol{\mu}}(t), \hat{\boldsymbol{\pi}}(t) - \Tilde{\boldsymbol{\pi}}^{e}(t) \rangle}{\left|\hat{\pi}_e(t) - \Tilde{\pi}^{e}_{e}(t)\right|} \leq \frac{L}{2 + L^2} \Delta_{(d - t + 1)},
    \end{equation}
    where $b = $.
\end{lemma}
\begin{proof}
    Fix any exchange class $\mathcal{B} = \argmin _{\mathcal{B}' \in \mathrm{Exchange(\mathcal{M})}} \mathrm{width}(\mathcal{B}')$. \par
    From the assumption that $M^{*}_e \neq \hat{M}_{e}(t)$, we can apply Lemma \ref{InterpolationLemma}, and have 
    \textbf{(a)}$ b^{-}_e = (\hat{\boldsymbol{M}}(t) \boxminus \boldsymbol{M}^{*})_{e} $ or $ {b}^{+}_e = (\boldsymbol{M}^{*} \boxminus \hat{\boldsymbol{M}}(t))_e$, 
    \textbf{(b)} $ \boldsymbol{b}^{-} \subseteq \hat{\boldsymbol{M}}(t) \boxminus \boldsymbol{M}^{*}$, 
    \textbf{(c)} $ \boldsymbol{b}^{+} \subset \boldsymbol{M}^{*} \boxminus \hat{\boldsymbol{M}}(t)$, 
    \textbf{(d)} $\hat{\boldsymbol{M}}(t)\oplus b \in \mathcal{M}$ 
    \textbf{(e)} $\boldsymbol{M}^{*} \ominus b \in \mathcal{M}$, and 
    $\frac{\langle \boldsymbol{\mu}, \boldsymbol{\chi}(b) \rangle}{\chi_e(b)} \geq \Delta_{e} > 0$. \par
    Define $\overline{\boldsymbol{M}}^{e}(t) \triangleq \hat{\boldsymbol{M}}(t) \oplus b$, and let $\overline{\boldsymbol{\pi}}^{e}(t) = \boldsymbol{\pi}(\overline{\boldsymbol{M}}^{e}(t))$.
    We claim that 
    \begin{equation}
        \langle \hat{\boldsymbol{\mu}}(t), \Tilde{\boldsymbol{\pi}}^{e}(t) \rangle \geq \langle \hat{\boldsymbol{\mu}}, \overline{\boldsymbol{\pi}}^{e}(t) \rangle. \label{Eq113}
    \end{equation}
    From the definition of $\Tilde{\boldsymbol{M}}^{e}(t)$ in Algorithm \ref{CSAAlgorithm}, we only need to show that (\textbf{a}): $\hat{\pi}_e(t) \neq \overline{\pi}_{e}(t)$  and (\textbf{b}): $\boldsymbol{S}(t) \subseteq \overline{\boldsymbol{M}}^{e}(t)$ and $\boldsymbol{R}(t) \cap \overline{\boldsymbol{M}}^{e}(t) = \emptyset$. Since, either $\boldsymbol{b}^{+}$ or $\boldsymbol{b}^{^-}$ has an $e$-changer, the $e$-th element of $\overline{\boldsymbol{\pi}}(t)$ is different from that of $\hat{\boldsymbol{\pi}}(t)$.
    Next, we notice that this follows directly from Lemma \ref{Lemma18} by setting $M = \hat{\boldsymbol{M}}(t)$. Hence, we have shown that (\ref{Eq113}) holds. \par
    Therefore, we have
    \begin{eqnarray}
        \frac{\langle \hat{\boldsymbol{\mu}}(t), \hat{\boldsymbol{\pi}}(t) - \Tilde{\boldsymbol{\pi}}^{e}(t) \rangle}{\left|\hat{\pi}_e(t) - \Tilde{\pi}^{e}_{e}(t)\right|}
        & \leq & \frac{\langle \hat{\boldsymbol{\mu}}(t), \hat{\boldsymbol{\pi}}(t) - \overline{\boldsymbol{\pi}}^{e}(t) \rangle}{\left|\hat{\pi}_e(t) - \Tilde{\pi}^{e}_{e}(t)\right|} \nonumber \\
        & \leq & \frac{\langle \hat{\boldsymbol{\mu}}(t), \boldsymbol{\pi}(t) -  \left(\boldsymbol{\pi}(t) + \boldsymbol{\chi}(b) \right) \rangle}{\left|\hat{\pi}_e(t) - \Tilde{\pi}^{e}_{e}(t)\right|} \nonumber \\
        & = & - \frac{\langle \hat{\boldsymbol{\mu}}(t), \boldsymbol{\chi}(b) \rangle}{\left|\hat{\pi}_e(t) - \Tilde{\pi}^{e}_{e}(t)\right|} \nonumber \\
        & \leq & \frac{\chi_e(b)}{\left|\hat{\pi}_e(t) - \Tilde{\pi}^{e}_{e}(t)\right|} \cdot \left( - \frac{\langle {\boldsymbol{\mu}}, \boldsymbol{\chi}(b) \rangle}{\chi_e(b)}  + \frac{\Delta_{(d - t + 1)}}{(2 + L^2) \mathrm{U_\mathcal{M}}} \frac{\| \boldsymbol{\chi}(b) \|_1}{\chi_e(b)} \right)  \label{Eq115} \\
        & \leq & \frac{\chi_e(b)}{\left|\hat{\pi}_e(t) - \Tilde{\pi}^{e}_{e}(t)\right|} \cdot \frac{\Delta_{(d - t + 1)}}{(2 + L^2)} \label{Eq116} \\
        & \leq & \frac{L}{2 + L^2} \Delta_{(d - t + 1)} \nonumber
    \end{eqnarray}
\end{proof}

\subsection{Proof of Theorem \ref{CSATheorem}}
For the reader's convenience, we first restate Theorem \ref{CSATheorem} as follows.
\CSATheorem*
\begin{proof}
    First, we show that the algorithm takes at most $T$ samples. Note that exactly one arm is pulled for $\Tilde{T}_1$ times, one arm is pulled $\tilde{T}_2$ times, ..., and one arm is pulled $\tilde{T}_{d}$ times. Therefore, the total number of samples used by the algorithm is bounded by 
    \begin{eqnarray}
        \sum_{t = 1}^{d} \tilde{T}_{t} 
        &\leq& \sum_{t = 1}^{d} \left( \frac{T - d}{\tilde{\log} (d) (d - t + 1)}  + 1\right) \nonumber \\
        & = & \frac{T - d}{\tilde{\log}(d)} \tilde{\log} (d) + d = T. \nonumber 
    \end{eqnarray}
    By Lemma \ref{Lemma16}, we know that the event $\tau = \bigcap_{t = 1}^{T} \tau_{t}$ occurs with probability at least $ 1- d^{2} \exp\left( \frac{T - d}{R^2 \tilde{\log}(d) U^2_{\mathcal{A}} \mathbf{H}_2 } \right)$. Therefore, we only need to prove that, under event $\tau$ the algorithm outputs $\boldsymbol{M}^{*}$. We will assume that event $\tau$ occurs in the rest of the proof. \par
    We prove the theorem by induction. Fix a phase $t\in[d]$. Suppose that the algorithm does not make any error before phase $t$, i.e., $\boldsymbol{S}(t) \subseteq \boldsymbol{M}^{*}$ and $\boldsymbol{R}(t) \cap \boldsymbol{M}^{*} \neq \emptyset$. We show that the algorithm does not err at phase $t$. \par
    At the beginning of phase $t$, there are exactly $t - 1$ inactive arms, i.e., $|F(t)| = t - 1$. Therefore, there must exist an active arm $e(t) \in [d] \setminus \boldsymbol{F}(t)$, such that $\Delta_{e(t)} = \Delta_{(d - t + 1)}$. 
    Hence, by Lemma \ref{Lemma20}, we have 
    \begin{eqnarray}
        \frac{\langle \hat{\boldsymbol{\mu}}(t), \boldsymbol{\pi}(t) - \Tilde{\boldsymbol{\pi}}^{e(t)}(t) \rangle}{\left| \hat{\pi}_{e(t)}(t) - \tilde{\pi}^{e(t)}_{e(t)}(t)  \right|} \geq \frac{L + 1/L}{2 + L^2} \Delta_{(d - t + 1)}, \label{Eq117}
    \end{eqnarray}
    where $b$ is an exchange set of $\hat{\boldsymbol{M}}(t)$ and $\Tilde{\boldsymbol{M}}^{e}(t)$. \par
    Notice that the algorithm makes an error in phase $t$ if and only if $(p(t), \hat{\pi}_{p(t)}(t) ) \in (\boldsymbol{M}^{*} \cap \lnot \hat{\boldsymbol{M}}(t)) \cup (\lnot \boldsymbol{M}^{*} \cap \hat{\boldsymbol{M}}(t))$. \par
    Suppose that $(p(t), \hat{\pi}_{p(t)}(t) ) \in (\boldsymbol{M}^{*} \cap \lnot \hat{\boldsymbol{M}}(t)) \cup (\lnot \boldsymbol{M}^{*} \cap \hat{\boldsymbol{M}}(t))$. 
    From Lemma \ref{Lemma21}, we have
    \begin{eqnarray}
        \frac{\langle \hat{\boldsymbol{\mu}}(t), \hat{\boldsymbol{\pi}}(t) - \Tilde{\boldsymbol{\pi}}^{p(t)}(t) \rangle}{\left| \hat{\pi}_{p(t)} - \tilde{\pi}^{p(t)}_{p(t)} \right|} \leq \frac{L}{2 + L^2} \Delta_{(d - t + 1)}. \label{Eq118}
    \end{eqnarray}
    By combining (\ref{Eq117}) and (\ref{Eq118}), we see that 
    \begin{eqnarray}
        \frac{\langle \hat{\boldsymbol{\mu}}(t), \hat{\boldsymbol{\pi}}(t) - \Tilde{\boldsymbol{\pi}}^{p(t)}(t) \rangle}{\left| \hat{\pi}_{p(t)}(t) - \tilde{\pi}^{p(t)}_{p(t)}(t) \right|} \leq \frac{L}{2 + L^2} \Delta_{(d - t + 1)} < \frac{L+1/L}{2 + L^2} \Delta_{(d - t + 1)} \leq \frac{\langle \hat{\boldsymbol{\mu}}(t), \hat{\boldsymbol{\pi}}(t) - \Tilde{\boldsymbol{\pi}}^{e(t)}(t) \rangle}{\hat{\pi}_{e(t)}(t) - \Tilde{\pi}^{e(t)
        }_{e(t)}(t)}. \label{Eq119}
    \end{eqnarray}
    However, (\ref{Eq119}) is contradictory to the definition of 
    \begin{eqnarray}
        p(t) = \argmax_{e \in [d] \setminus F(t)} \frac{\langle \hat{\boldsymbol{\mu}}(t), \hat{\boldsymbol{\pi}}(t) - \Tilde{\boldsymbol{\pi}}(t) \rangle}{\hat{\pi}_{e}(t) - \Tilde{\pi}^{e}_{e}(t)}.
    \end{eqnarray}
    Therefore, we have proven that $(p(t), \hat{\pi}_{p(t)}(t) ) \notin (\boldsymbol{M}^{*} \cap \lnot \hat{\boldsymbol{M}}(t)) \cup (\lnot \boldsymbol{M}^{*} \cap \hat{\boldsymbol{M}}(t))$. This means that the algorithm does not err at phase $t$, or equivalently $\boldsymbol{S}(t + 1) \subseteq \boldsymbol{M}^{*}$ and $\boldsymbol{R}(t + 1) \cap \boldsymbol{M}^{*} = \emptyset$. By induction, we have proven that the algorithm does not err at any phase $t \in [d]$. \par
    Hence, we have $\boldsymbol{S}(d + 1) \subseteq \boldsymbol{M}^{*}$ and $\boldsymbol{R}(d + 1) \subseteq \lnot \boldsymbol{M}^{*}$ in the final phase. This means that $\boldsymbol{S}(d + 1) = \boldsymbol{M}^{*}$, and therefore, $\boldsymbol{\pi}^{\mathrm{out}} = \boldsymbol{\pi}^{*}$ after phase $d$.
\end{proof}

\section{Proof of Theorem \ref{Minimax-CombSAR_Algorithm_MainTheorem}}
We first introduce some useful lemmas to prove Theorem \ref{Minimax-CombSAR_Algorithm_MainTheorem}. Lemma \ref{Lemma1_for_our_algorithm} shows that Algorithm \ref{Minimax-CombSAR_Algorithm} pulls arms no more than $T$ times. Recall that $B = 2^{ \left\lceil \log_2 d \right \rceil} - 1$ and $T' = T - \left\lfloor \frac{T}{d}\beta \right \rfloor \times d$.
\begin{lemma}\label{Lemma1_for_our_algorithm}
    Algorithm \ref{Minimax-CombSAR_Algorithm} terminates in phase $\lceil \log_2 d\rceil$ with no more than a total of $T$ arm pulls.
\end{lemma}
\begin{proof}
    The total number of arm pulls $T_{\mathrm{total}}$ is bounded as follows.
    \begin{eqnarray}
        T_{\mathrm{total}} & = & \left\lfloor \frac{T}{d}\beta \right \rfloor \times d + \sum_{r = 1}^{\lceil \log_2 d \rceil} \sum_{s \in [d]} \lceil p_s(r)\cdot m(r) \rceil \nonumber \\
          & \leq & \left\lfloor \frac{T}{d}\beta \right \rfloor \times d + \sum_{r = 1}^{\lceil \log_2 d \rceil} (d + \frac{T'- d\lceil \log_2 d \rceil}{B/2^{r - 1}}) \nonumber \\
          & \leq &\left\lfloor \frac{T}{d}\beta \right \rfloor \times d +  d\lceil \log_2 d \rceil + \frac{2^{\lceil \log_2 d \rceil} - 1}{B}(T'- d\lceil \log_2 d \rceil) \nonumber \\ 
          & = &  \nonumber T.
    \end{eqnarray}
\end{proof}
Let us write $\Delta'_{(i)} = \langle \boldsymbol{\mu}, \boldsymbol{\pi}^{1} - \boldsymbol{\pi}^{i} \rangle$. Note that $\boldsymbol{\mu}^{\top} \boldsymbol{\pi}^{i} \geq \boldsymbol{\mu}^{\top} \boldsymbol{\pi}^{i + 1} $ for all $i \in [K - 1]$. Lemma \ref{Lemma2_for_our_algorithm} bounds the probability that a certain action has its estimate of the expected reward larger than that of the best action at the end of phase $r$.
\begin{lemma} \label{Lemma2_for_our_algorithm}
    For a fixed realization of $\mathbb{A}(r)$ satisfying $\boldsymbol{\pi}^{*} \in \mathbb{A}(r)$, for any action $\boldsymbol{\pi}^{i} \in \mathbb{A}(r)$,
    \begin{equation}
        \Pr\left[ \hat{\boldsymbol{\mu}}^{\top}(r + 1)\boldsymbol{\pi}^{1} < \hat{\boldsymbol{\mu}}^{\top}(r + 1) \boldsymbol{\pi}^{i}  \right] \leq \exp\left( - \frac{2{\Delta_{(i)}'}^{2}}{ \sum\limits_{s = 1}^{d}  \frac{(\pi^{1}_s - {\pi}_s^{i} )^2}{ T_{s}(r + 1) } R^2} \right).
    \end{equation}
\end{lemma}
\begin{proof}
For any $i \in \{2, \ldots, K\}$, we have 
\begin{eqnarray}
    \Pr\left[ \hat{\boldsymbol{\mu}}^{\top}(r + 1)\boldsymbol{\pi}^{1}< \hat{\boldsymbol{\mu}}^{\top}(r + 1)\boldsymbol{\pi}^{i}  \right] 
    & \leq & \Pr\left[\left\langle \hat{\boldsymbol{\mu}}(r + 1) -\boldsymbol{\mu} ,  \boldsymbol{\pi}^{1} - \boldsymbol{\pi}^{i} \right\rangle < -\Delta'_{(i)} \right] \nonumber \\
    & \leq & \exp\left( - \frac{2{\Delta_{(i)}'}^{2}}{ \sum\limits_{s = 1}^{d}  \frac{(\pi^{1}_s - {\pi}_s^{i} )^2}{ T_{s}(r + 1) } R^2} \right), \label{Lemma10Inequality}
\end{eqnarray}
where the last inequality follows from Hoeffding's inequality \citep{Hoeffding1963}. \par
If we use allocation vector (\ref{min-max-p(r)_Lagrange}), (\ref{Lemma10Inequality}) can be upper bounded by
\begin{eqnarray}
    (\ref{Lemma10Inequality}) 
    & \leq & \exp\left( - \frac{2{\Delta'_{(i)}}^{2}}{ \sum\limits_{e = 1}^{d} \frac{(\pi^{1}_e - {\pi}_e^{i} )^2}{  \lceil p_s(r) m(r) \rceil  } R^2} \right) \\
    &\leq & \exp\left( -\frac{2{\Delta'_{(i)}}^2}{R^2{V'}^2}\cdot \frac{T' - \lceil\log_2 d \rceil }{B/2^{r - 1}} \right)  \nonumber \\
    &\leq & \exp\left( -\frac{{\Delta'_{(i)}}^2}{R^2{V'}^2}\cdot \frac{T' - \lceil\log_2 d \rceil }{d/2^{r - 1}} \right)  \nonumber
\end{eqnarray}
where $V' = \sum\limits_{e = 1}^{d} \left| \pi_{e}^{1} - \pi_{e}^{i} \right|$.
\end{proof}
Next, we bound the error probability of a single phase $r$ in Lemma \ref{Lemma3_for_our_algorithm}.
\begin{lemma}\label{Lemma3_for_our_algorithm}
   Assume that the best action is not eliminated prior to phase $r$, i.e., $\boldsymbol{\pi}^{1} \notin \mathbb{A}(r)$. Then, the probability that the best action is eliminated in phase $r + 1$ is bounded as 
   \begin{eqnarray}
        \Pr\left[ \boldsymbol{\pi}^{1} \notin \mathcal{A}_{r} | \boldsymbol{\pi}^{1} \in \mathcal{A}_{r - 1} \right] \leq \left\{
        \begin{array}{ll}
        \frac{4K}{d} \exp \left( \frac{T' - \left\lceil \log_2 d \right\rceil}{R^2 V^2} \cdot \frac{{\Delta_{(i_r)}}^2}{i_r} \right) & (\text{when} \ r = 1)\\
        3 \exp \left( \frac{T' - \left\lceil \log_2 d \right\rceil}{R^2 V^2} \cdot \frac{{\Delta_{(i_r)}}^2}{i_r} \right)  & (\text{when} \ r > 1),
        \end{array}
        \right.
    \end{eqnarray}
   where $i_r = \left\lceil \frac{d}{2^{r + 1}} \right\rceil + 1$ and 
   \begin{eqnarray}
       V = \max_{\boldsymbol{\pi} \in \mathcal{A}\setminus \{ \boldsymbol{\pi}^{*} \}, s \in \{ e \in [d] \ | \ \pi^{*}_e \neq \pi_e \ \} } \frac{\sum_{u = 1}^{d} \left| \pi^{*}_{u} - \pi_{u} \right|}{\left| \pi^{*}_{s} - \pi_{s} \right|}.
   \end{eqnarray}
\end{lemma}
\begin{proof}
    Define $\mathbb{B}(r + 1)$ as the set of actions in $\mathbb{A}(r)$ excluding the best action and $\lceil \frac{d}{2^{r + 1}} \rceil - 1$ suboptimal actions with the largest expected rewards. We have $|\mathbb{B}(r + 1)| = |\mathbb{A}(r)| - \lceil \frac{d}{2^{r + 1}} \rceil$. \par
    Let us think of bijective function $p: [d] \rightarrow [d]$, which satisfies the following:
    \begin{eqnarray}
        \forall s\in[d], \ \Delta_{(p(s))} = \Delta_{s}.
    \end{eqnarray}
    Also, we denote the inverse mapping of $f$ by $f^{-1}$. Next, for any $i \in [d]$, we define $s(i)$ as follows:
    \begin{eqnarray}
        s(i) = \argmax_{s \in [d]} | \pi^{1}_{s} - \pi^{i}_{s} |. \nonumber
    \end{eqnarray}
    Then, from the definition of $\{ \Delta_{s} \}_{s = 1, \ldots, d}$,  we have
    \begin{eqnarray}
        \frac{\Delta'_{(i)}}{| \pi^{1}_{s(i)} - \pi^{i}_{s(i)} |} \geq \Delta_{s(i)} = \Delta_{(p(s(i)))}. \label{Delta_Change}
    \end{eqnarray}
    Also, we have
    \begin{eqnarray}
        \min_{i \in \mathbb{B}(r + 1)} \Delta_{(p(s(i)))} \geq \Delta_{\left(\left\lceil \frac{d}{2^{r + 1} } \right\rceil +1\right)}. \label{Delta_Bound}
    \end{eqnarray}
    If the best action is eliminated in phase $r$, then at least $\lceil \frac{d}{2^{r}} \rceil - \lceil \frac{d}{2^{r + 1}}\rceil +1 $ actions of $\mathbb{B}(r + 1)$ have their estimates of the expected rewards larger than that of the best action. \par
    Let $N(r)$ denote the number of actions in $\mathbb{B}(r + 1)$ whose estimates of the expected rewards are larger than that of the best action. By Lemma \ref{Lemma2_for_our_algorithm}, we have
    \begin{eqnarray}
        \mathbb{E}\left[ N_r \right] 
        & = &  \sum_{i\in\mathcal{B}_r}\Pr\left[ \hat{\boldsymbol{\mu}}^{\top}(r + 1)\boldsymbol{\pi}^{1}< \hat{\boldsymbol{\mu}}^{\top}(r + 1) \boldsymbol{\pi}^{i}  \right] \nonumber \\
        & \leq & \sum_{i\in\mathcal{B}_r} \exp\left( -\frac{{\Delta'_{(i)}}^2}{R^2{V'}^2}\cdot \frac{T' - \lceil\log_2 d \rceil }{d/2^{r - 1}} \right) \nonumber \\
        & \leq & |\mathcal{B}_r| \max_{i \in \mathbb{B}(r + 1)} \exp\left( -\frac{{\Delta'_{(i)}}^2}{R^2{V'}^2}\cdot \frac{T' - \lceil\log_2 d \rceil }{d/2^{r - 1}} \right)  \nonumber \\
        & = & |\mathcal{B}_r| \max_{i \in \mathbb{B}(r + 1)} \exp\left( -\frac{| \pi^{1}_{s(i)} - \pi^{i}_{s(i)} |^{2} \Delta^{2}_{(p(s(i))}}{R^2{V'}^2}\cdot \frac{T' - \lceil\log_2 d \rceil }{d/2^{r - 1}} \right) \label{EqMinimax0} \\
        & \leq & \left( \left| \mathcal{A}_{r - 1} \right| - \left\lceil \frac{d}{2^{r + 1}}  \right\rceil \right) \exp \left( \frac{T' - \left\lceil \log_2 d \right\rceil}{R^2 V^2} \cdot \frac{{\Delta_{\left(\left\lceil \frac{d}{2^{r + 1} )} \right\rceil +1 \right)}}^{2}}{\left\lceil \frac{d}{2^{r + 1} )} \right\rceil +1 } \right) \label{EqMinimax1} \\
        & \leq & \left( \left| \mathcal{A}_{r - 1} \right| - \left\lceil \frac{d}{2^{r + 1}}  \right\rceil \right) \exp \left( \frac{T' - \left\lceil \log_2 d \right\rceil}{R^2 V^2} \cdot \frac{{\Delta_{\left(\left\lceil \frac{d}{2^{r + 1} )} \right\rceil +1 \right)}}^{2}}{\left\lceil \frac{d}{2^{r + 1} )} \right\rceil +1 } \right), \nonumber
    \end{eqnarray}
    where (\ref{EqMinimax0}) follows from (\ref{Delta_Change}), (\ref{EqMinimax1}) follows from (\ref{Delta_Bound}), and 
    \begin{eqnarray}
        V = \max_{\boldsymbol{\pi} \in \mathcal{A}\setminus \{ \boldsymbol{\pi}^{*} \}, s \in \{ e \in [d] \ | \ \pi^{*}_e \neq \pi_e \ \} } \frac{\sum_{u = 1}^{d} \left| \pi^{*}_{u} - \pi_{u} \right|}{\left| \pi^{*}_{s} - \pi_{s} \right|}.
    \end{eqnarray}
    Then, together with Markov's inequality, we obtain
    \begin{eqnarray}
        \Pr\left[ \boldsymbol{\pi}^{1} \notin \mathcal{A}_r \right] 
        & \leq & \Pr\left[ N_r \geq \left\lceil \frac{d}{2^{r}} \right\rceil -  \left\lceil \frac{d}{2^{r + 1}} \right\rceil + 1 \right]  \nonumber \\
        & \leq & \frac{\mathbb{E} \left[ N_{r} \right] }{ \left\lceil \frac{d}{2^{r}} \right\rceil -  \left\lceil \frac{d}{2^{r + 1}} \right\rceil + 1 } \nonumber \\
        & \leq & \frac{ \left| \mathcal{A}_{r - 1} \right| - \left\lceil \frac{d}{2^{r + 1}}  \right\rceil }{ \left\lceil \frac{d}{2^{r}} \right\rceil -  \left\lceil \frac{d}{2^{r + 1}} \right\rceil + 1 } \exp \left( \frac{T' - \left\lceil \log_2 d \right\rceil}{R^2 V^2} \cdot \frac{\Delta_{\left(\left\lceil \frac{d}{2^{r + 1} )} \right\rceil +1 \right)}}{\left\lceil \frac{d}{2^{r + 1} )} \right\rceil +1 } \right)
    \end{eqnarray}
    When $r = 1$, we have $\left| \mathcal{A}_{r - 1}\right| = K$. Thus,
    \begin{eqnarray}
        \frac{ \left| \mathcal{A}_{r - 1} \right| - \left\lceil \frac{d}{2^{r + 1}}  \right\rceil }{ \left\lceil \frac{d}{2^{r}} \right\rceil -  \left\lceil \frac{d}{2^{r + 1}} \right\rceil + 1 } 
        & = & \frac{K - \left\lceil \frac{d}{2^{r + 1}}  \right\rceil}{ \left\lceil \frac{d}{2^{r}} \right\rceil -  \left\lceil \frac{d}{2^{r + 1}} \right\rceil + 1 } \nonumber \\
        & \leq & \frac{K}{\frac{d}{2} - \frac{d}{2^2}} \nonumber \\
        & \leq & \frac{4K}{d}.
    \end{eqnarray}
    When $r > 1$, we have $\left| \mathcal{A}_{r - 1} \right| = \left\lceil \frac{d}{2^{r - 1}} \right\rceil $. Thus,
    \begin{eqnarray}
        \frac{ \left| \mathcal{A}_{r - 1} \right| - \left\lceil \frac{d}{2^{r + 1}}  \right\rceil }{ \left\lceil \frac{d}{2^{r}} \right\rceil -  \left\lceil \frac{d}{2^{r + 1}} \right\rceil + 1 } 
        & = & \frac{ \left\lceil \frac{d}{2^{r - 1}}  \right\rceil - \left\lceil \frac{d}{2^{r + 1}}  \right\rceil }{ \left\lceil \frac{d}{2^{r}} \right\rceil -  \left\lceil \frac{d}{2^{r + 1}} \right\rceil + 1 } \nonumber \\
        & \leq & \frac{ \frac{d}{2^{r - 1}} + 1  - \left\lceil \frac{d}{2^{r + 1}}  \right\rceil }{ \frac{d}{2^{r}} -  \left\lceil \frac{d}{2^{r + 1}} \right\rceil + 1 } \nonumber \\
        & \leq & \frac{ 3 \cdot \frac{d}{2^{r + 1}} + \frac{d}{2^{r + 1}} -  \left\lceil \frac{d}{2^{r + 1}} \right\rceil + 1 }{ \frac{d}{2^{r + 1}} + \frac{d}{2^{r + 1}} -  \left\lceil \frac{d}{2^{r + 1}} \right\rceil + 1 } \nonumber \\ 
        & \leq & 3, 
    \end{eqnarray}
    where the last inequality results from the fact that for any $x, y > 0$, $\frac{3x + y}{x + y} \leq 3$. \par
    Therefore, for this specific realization of $\mathcal{A}_{r - 1}$ satisfying $1 \in \mathcal{A}_{r - 1}$,
    \begin{eqnarray}
        \Pr\left[ \boldsymbol{\pi}^{1} \notin \mathcal{A}_{r} | \boldsymbol{\pi}^{1} \in \mathcal{A}_{r - 1} \right] \leq \left\{
        \begin{array}{ll}
        \frac{4K}{d} \exp \left( \frac{T' - \left\lceil \log_2 d \right\rceil}{R^2 V^2} \cdot \frac{{\Delta_{(i_r)}}^2}{i_r} \right) & (\text{when} \ r = 1)\\
        3 \exp \left( \frac{T' - \left\lceil \log_2 d \right\rceil}{R^2 V^2} \cdot \frac{{\Delta_{(i_r)}}^2}{i_r} \right)  & (\text{when} \ r > 1),
        \end{array}
        \right.
    \end{eqnarray}
    where $i_r = \lceil \frac{d}{2^{r + 1}} +1 \rceil$.
\end{proof}
Finally, we prove Theorem \ref{Minimax-CombSAR_Algorithm_MainTheorem}.
\begin{proof}[Proof of Theorem \ref{Minimax-CombSAR_Algorithm_MainTheorem}]
    We have
    \begin{eqnarray}
        \Pr\left[ \boldsymbol{\pi}^{\mathrm{out}} \neq \boldsymbol{\pi}^{1} \right] 
        & = & \Pr\left[ \boldsymbol{\pi}^{1} \notin \mathcal{A}_{\lceil \log_2 d \rceil} \right] \nonumber \\
        & \leq & \sum_{r = 1}^{\lceil \log_2 d\rceil}  \Pr\left[ \boldsymbol{\pi}^{1} \notin \mathcal{A}_{r} | \boldsymbol{\pi}^{1} \in \mathcal{A}_{r - 1} \right] \nonumber \\
        & \leq & \frac{4K}{d} \exp \left( \frac{T' - \left\lceil \log_2 d \right\rceil}{R^2 V^2} \cdot \frac{{\Delta}^2_{(i_{1})}}{i_{1}} \right) + \sum_{s = 1}^{d} 3 \exp \left( \frac{T' - \left\lceil \log_2 d \right\rceil}{R^2 V^2} \cdot \frac{{\Delta}^2_{(i_{r})}}{i_{r}} \right) \nonumber \\
        & \leq & \left( \frac{4K}{d} + 3\left( \lceil \log_2 d \rceil - 1 \right) \right) \exp \left( \frac{T' - \left\lceil \log_2 d \right\rceil}{R^2 V^2} \cdot \frac{1}{\max_{1 \leq s \leq d} \frac{s}{{\Delta_{(s)}}^2}} \right) \nonumber \\
        & < & \left( \frac{4K}{d} + 3 \log_2 d \right) \exp \left( \frac{T' - \left\lceil \log_2 d \right\rceil}{R^2 V^2} \cdot \frac{1}{\mathbf{H}_{2}} \right),
    \end{eqnarray}
    where 
    \begin{eqnarray}
        \mathbf{H}_{2} = \max_{1 \leq s \leq d} \frac{s}{\Delta_{(s)}^2}. \nonumber
    \end{eqnarray}
\end{proof}

\end{document}


%

%

\onecolumn

\section{Proof of Theorem \ref{LowerBoundTheorem}} \label{LowerBoundTheoremProof}
Here, we prove Theorem \ref{LowerBoundTheorem}. \par
We restate the Theorem.
\begin{restatable}[]{theorem}{LowerBoundTheorem} \label{LowerBoundTheorem}
    For any algorithm that returns an action $\boldsymbol{\pi}^{\mathrm{out}}$ after $T$ times of arm pulls, the probability of the misidentification rate is of order 
    \begin{equation}  
    \mathcal{O}\left( \exp\left( - \frac{T}{\mathbf{H}}  \right) \right).
    \end{equation}
\end{restatable}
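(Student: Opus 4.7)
The plan is to apply the classical change-of-measure lower bound argument for best-arm identification, reducing the problem to distinguishing the true instance from a carefully chosen alternative via a Bretagnolle--Huber (transportation) inequality. This standard route turns the problem of bounding the misidentification probability into an allocation problem whose value matches the definition of the complexity $\mathbf{H}$.

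The first step is to fix the true reward model $\nu$ on which $\boldsymbol{\pi}^\star$ is the unique optimal action, and for each competing action $\boldsymbol{\pi}'$ construct an alternative instance $\nu'$ that agrees with $\nu$ except on the arms needed to distinguish $\boldsymbol{\pi}'$ from $\boldsymbol{\pi}^\star$, where the perturbation is taken to be exactly the minimum required to flip optimality. Setting $\mathcal{E} = \{\boldsymbol{\pi}^{\mathrm{out}} = \boldsymbol{\pi}^\star\}$, the Bretagnolle--Huber inequality then gives
\[
\mathbb{P}_\nu(\mathcal{E}^c) + \mathbb{P}_{\nu'}(\mathcal{E}) \;\ge\; \tfrac{1}{2}\exp\!\bigl(-\mathrm{KL}(\mathbb{P}_\nu^T \,\|\, \mathbb{P}_{\nu'}^T)\bigr),
\]
so at least one of $\mathbb{P}_\nu$ or $\mathbb{P}_{\nu'}$ must suffer misidentification on this exponential order.

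The next step is to expand the KL divergence via the chain rule on the canonical bandit filtration,
\[
\mathrm{KL}(\mathbb{P}_\nu^T \,\|\, \mathbb{P}_{\nu'}^T) = \sum_{a} \mathbb{E}_\nu[N_a(T)]\,\mathrm{KL}(\nu_a \,\|\, \nu'_a),
\]
and to bound the resulting sum by $T/\mathbf{H}$. Using the budget constraint $\sum_{a} \mathbb{E}_\nu[N_a(T)] = T$ together with the perturbation magnitudes chosen in the first step, the allocation problem on the right is exactly the max--min problem whose value defines $\mathbf{H}$, yielding the bound. Substitution back into the transportation inequality then produces the claimed $\exp(-T/\mathbf{H})$ rate, uniformly over all algorithms since the strategy is already absorbed into the measures $\mathbb{P}_\nu^T$ and $\mathbb{P}_{\nu'}^T$.

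The main obstacle is the tight matching in the final step: producing an alternative instance whose KL divergence with $\nu$ matches the allocation value $T/\mathbf{H}$ requires $\mathbf{H}$ to be characterized precisely as the value of the max--min problem arising in the KL bound. If $\mathbf{H}$ is instead stated in terms of sub-optimality gaps (as is customary in this literature), a preliminary reduction, such as a quadratic approximation of the KL divergence for sub-Gaussian rewards, will be needed to relate the two quantities before the exponential lower bound can be read off; care is also required so that the perturbations in Step one are small enough for this approximation to be tight at the order claimed.
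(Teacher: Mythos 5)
Your proposal is correct in substance but follows a genuinely different technical route from the paper. The paper's proof is in the Audibert--Bubeck--Munos / Carpentier--Locatelli style: it applies the change-of-measure identity $\Pr_s[\mathcal{E}] = \mathbb{E}_0[\mathbbm{1}[\mathcal{E}]\exp(-T_s\widehat{\mathrm{KL}}_{s,T_s})]$ directly to a carefully chosen event, and must therefore control the \emph{empirical} log-likelihood ratio pathwise. This forces three auxiliary ingredients you avoid: a Hoeffding concentration event $\xi$ for $\widehat{\mathrm{KL}}_{s,t}$ uniformly over $t$ (Lemma \ref{KLConcentrationLemma}), a Markov bound $\Pr_0[T_s\geq 6t_s]\leq 1/6$, and the assumption that the algorithm is $\delta$-correct under the reference instance $\mathbb{P}_0$ so that $\Pr_0[\mathcal{E}_s]\geq 2/3-\delta$; the price is a weaker exponent of the form $\exp(-12T/\mathbf{H} - 2\sqrt{T(\Delta^{(e)}_{\mathrm{Gen}})^2\log(6Td)})$. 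Your route via Bretagnolle--Huber plus the divergence decomposition $\mathrm{KL}(\mathbb{P}_\nu^T\,\|\,\mathbb{P}_{\nu'}^T)=\sum_a\mathbb{E}_\nu[N_a(T)]\,\mathrm{KL}(\nu_a\,\|\,\nu'_a)$ works in expectation rather than pathwise, so it needs no concentration lemma, no correctness assumption, and no truncation of $T_s$, and it yields the cleaner bound $\tfrac14\exp(-2T/\mathbf{H})$ on the worse of the two instances. The two arguments otherwise share the same skeleton: both perturb a single arm $s$ per alternative instance by $2\Delta^{(s)}_{\mathrm{Gen}}$ in the direction that makes $\boldsymbol{\pi}^{*}$ suboptimal, and both close with the same pigeonhole step --- since $\sum_s\mathbb{E}[N_s(T)]=T$ and $\mathbf{H}=\sum_s(\Delta^{(s)}_{\mathrm{Gen}})^{-2}$, some arm $e$ satisfies $\mathbb{E}[N_e(T)]\,(\Delta^{(e)}_{\mathrm{Gen}})^2\leq T/\mathbf{H}$. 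The obstacle you flag at the end (relating the KL divergence to the gap-based complexity) dissolves in this setting because the rewards are exactly Gaussian with unit variance, so $\mathrm{KL}(\nu_s,\nu'_s)=\tfrac12(2\Delta^{(s)}_{\mathrm{Gen}})^2=2(\Delta^{(s)}_{\mathrm{Gen}})^2$ holds with no sub-Gaussian approximation needed; this is precisely the identity the paper invokes in its equation (\ref{true_KL_explicit}).
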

This is a lower bound that will hold in the much easier problem where the learner knows that the bandit setting she is facing is one of only $d$ given bandit settings. This lower bound ensures that even in this much simpler case, the learner will make a mistake. \par
Let $\mathcal{N}(\mu, 1)$ denote the Gaussian distribution of mean $\mu$ and unit variance. For any $k\in [d]$, we write $\nu_k:=\mathcal{N}\left( \mu_k, 1\right)$. Also, for any $k\in[d]$, we define $\nu'_k$ as follows:
\begin{equation}
    \nu'_k := \left\{
            \begin{array}{lll}
              \mathcal{N}(\mu_k + 2\Delta_{\mathrm{Gen}}^{(k)}, 1) & (\mathrm{if} \ \pi^{*}_k < \pi_k^{(k)} )\\
              \mathcal{N}(\mu_k - 2\Delta_{\mathrm{Gen}}^{(k)}, 1) & (\mathrm{if} \ \pi^{*}_k > \pi_k^{(k)} ) 
            \end{array}. \nonumber
              \right.
\end{equation}
Recall that $\Delta^{(k)}_{\mathrm{Gen}} = \frac{ \boldsymbol{\mu}^{\top} \left(
\boldsymbol{\pi}^{*} - \boldsymbol{\pi}^{(k)} \right)}{\left| \pi_{k}^{*} - \pi^{(k)}_k \right|}$ and $\boldsymbol{\pi}^{(k)} = \argmin_{\boldsymbol{\pi} \neq \boldsymbol{\pi}^{*}} \frac{ \boldsymbol{\mu}^{\top} \left(
\boldsymbol{\pi}^{*} - \boldsymbol{\pi} \right)}{\left| \pi^{*}_{k} - \pi_k \right|}$. \par
For any $s\in [d]$, we define the product distributions $\mathcal{G}^{s}$ as $\nu_1^{s} \otimes \cdots \otimes \nu_d^{s}$, where for $1\leq k\leq d$, 
\begin{equation}
    \nu^{s}_{k} := \nu_k \mathbf{1}\left[ k \neq s \right] + \nu'_k \mathbf{1}\left[ k = s \right].
\end{equation}
The bandit problem associated with distribution $\mathcal{G}^{s}$, and that we call ``the bandit problem $s$'' is such that, for any $1\leq k \leq d$, arm $k$ has distribution $\nu^{s}_k$, i.e., all arms have distribution $\nu_k$ except for arm $s$ that has distribution $\nu'_k$. 
We denote by $\boldsymbol{\mu}^{(s)}$ the vector of expected rewards of bandit problem $s$, i.e., 
$\mu^{(s)}_k = \mu_k$ for $k \neq s$, 
$\mu^{(s)}_k = \mu_k + 2\Delta_{\mathrm{Gen}}^{(k)}$ for $k = s$ and $\pi^{*}_k < \pi_k^{(k)}$,  
$\mu^{(s)}_k = \mu_k - 2\Delta_{\mathrm{Gen}}^{(k)}$ for $k = s$ and $\pi^{*}_k >\pi_k^{(k)}$. 
For any $1 \leq s \leq d$, $\mathbb{P}_{s} := \mathbb{P}_{\left( \mathcal{G}^{s}\right)^{\otimes T}}$ for the probability distribution of the bandit problem $s$ according to all the samples that a strategy could possibly collect up to horizon $T$, i.e., according to the samples $(X_{k, u})_{1 \leq k \leq d, 1 \leq u \leq T} \sim \left( \mathcal{G}^{s}\right)^{\otimes T}$. Also, we define $\mathbb{P}_{0} := \mathbb{P}_{\left( \mathcal{G}^{0}\right)^{\otimes T}}$, where $\mathcal{G}^{0} := \nu_1 \otimes \cdots \otimes \nu_d$. 
\par
In the bandit problem $s$, $\boldsymbol{\pi}^{*}$ is no longer the best action since the difference of the expected reward between $\boldsymbol{\pi}^{*}$ and $\boldsymbol{\pi}^{(s)}$ is
\begin{eqnarray}
     &&\sum_{e \in [d] \setminus s} \mu_e (\pi^{*}_e - \pi_e^{(s)})  - 2 \Delta_s \times \left| \pi^{*}_s - \pi_s^{(s)} \right|  \nonumber \\
     &\leq& - \boldsymbol{\mu}^{\top}\left( \boldsymbol{\pi}^{*} - \boldsymbol{\pi}^{(s)} \right) < 0.
\end{eqnarray} 
We denote by $\boldsymbol{\pi}^{*, (s)}$ the best action in the bandit problem $s$.\par
Then, for any $s\in[d]$, we define the following complexity:
\begin{equation}
    H^{(s)}_{\mathrm{Gen}} = \sum_{k = 1}^{d} \frac{1}{\Delta^{k}_{\mathrm{Gen}, s}},
\end{equation}
where $\Delta^{k}_{\mathrm{Gen}, s} \triangleq \argmax_{\boldsymbol{\pi} \in \mathcal{A}} \frac{{\boldsymbol{\mu}^{(s)}}^{\top}\left( \boldsymbol{\pi}^{*, (s)} - \boldsymbol{\pi} \right)}{ \pi^{*, s}_{k} - \pi_{k} }$. We write $H^{(s)}_{\mathrm{Gen}} = H_{\mathrm{Gen}}$, which is the complexity for the original bandit instance. \par
We finally define the following quantity:
\begin{equation}
    ???.
\end{equation}

Now, we prove Theorem \ref{LowerBoundTheorem}. \par

\textbf{Step 1: Definition of KL-divergence}\par
For two distributions $\nu$ and $\nu'$ defined on $\mathbb{R}$, and that are such that $\nu$ is absolutely continuous with respect to $\nu'$, we write 
\begin{equation}
    \mathrm{KL}(\nu, \nu') = \int_{\mathbb{R}} \log\left( \frac{d\nu(x)}{d\nu'(x)} \right) d\nu(x),
\end{equation}
for the Kullback Leibler divergence between distribution $\nu$ and $\nu'$. For any $k \in [d]$, let us write 
\begin{equation} \label{true_KL_explicit}
    \mathrm{KL}_{k} \triangleq \mathrm{KL}(\nu_k, \nu'_k) = \frac{1}{2} \left(\mu_a - \mu_b\right)^{2}.
\end{equation}
for the Kullback-Leibler divergence between two Gaussian distributions $\mathcal{N}(\mu_a, 1)$ and $\mathcal{N}(\mu_b, 1)$. \par
Let $1\leq t \leq T$. Also, let us fix $e \in [d]$, and think of bandit instance $e$. We define the quantity:
\begin{eqnarray}
    \widehat{\mathrm{KL}}_{s, t} 
    & = & \frac{1}{t} \sum_{u = 1}^{t} \log \left( \frac{d\nu_s}{d\nu'_s}\left(X_{s, u}\right) \right) \nonumber \\
    & = & \label{emp_KL_explicit} 
    \begin{cases}
    \frac{1}{t} \sum_{u = 1}^{t} - \Delta_{\mathrm{Gen}}^{(s)} \left(X_{s, u} - \mu_s - 2 \Delta_{\mathrm{Gen}}^{(s)} \right)              & \text{if$\pi^{*}_{s} > \pi^{(s)}_s$ and $e \neq s$,} \\
    \frac{1}{t} \sum_{u = 1}^{t} \Delta_{\mathrm{Gen}}^{(s)} \left(X_{s, u} - \mu_s + 2 \Delta_{\mathrm{Gen}}^{(s)} \right)        & \text{if $\pi^{*}_{s} < \pi^{(s)}_s$and $e \neq s$,} \\
    0 &\text{if $e = s$}
  \end{cases}
\end{eqnarray}
where $X_{s, u} \sim_{\mathrm{i.i.d}}\nu_{s}^{e}$ for any $u\leq t$.\par
Next, let us define the following event:
\begin{equation}
    \xi = \left\{ \forall 1\leq s \leq d, \forall 1 \leq t \leq T, \left| \widehat{\mathrm{KL}}_{s, t} \right| - \mathrm{KL}_{s} \leq  \sqrt{2 \left(\Delta_{\mathrm{Gen}}^{(s)}\right)^2 \frac{\log (12Td)}{t} } \right\}.
\end{equation}
The following lemma shows a concentration bound for $\left| \widehat{\mathrm{KL}}_{s, t} \right|$.
\begin{lemma} \label{KLConcentrationLemma}
    It holds that 
    \begin{equation}
        {\Pr}_e\left[ \xi \right] \geq \frac{5}{6}
    \end{equation}
\end{lemma}
\begin{proof}
    When $e = s$, $\left| \widehat{\mathrm{KL}}_{s, t} \right| - \mathrm{KL}_{s} \leq  \sqrt{2 \left(\Delta_{\mathrm{Gen}}^{(s)}\right)^2 \frac{\log (12Td)}{t} }$ holds for any $t$. Therefore, we think of $e \neq s$.
    Since $X_{s, t}$ is a $1$-sub-Gaussian random variable, we can see that $\widehat{\mathrm{KL}}_{s, t}$ is a $\frac{{\Delta_s}^2}{t}$-sub-Gaussian random variable from (\ref{emp_KL_explicit}). We can apply the Hoeffding's inequality to this quantity, and we have that with probability larger than $1 - \frac{1}{6dT}$, 
    \begin{equation}
       \left| \widehat{\mathrm{KL}}_{s, t} \right| - \mathrm{KL}_{s} \leq  \sqrt{ \frac{2\left(\Delta_{\mathrm{Gen}}^{(s)}\right)^2\log (12Td)}{t} }
    \end{equation}
\end{proof}

\textbf{Step2: A change of measure} \par
Let $\mathcal{ALG}$ denote the active strategy of the learner, that returns some action $\boldsymbol{\pi}_{\mathrm{out}}$ after pulling arms $T$ times in total. Let $(T_s)_{1 \leq s \leq T}$ denote the number of samples collected by $\mathcal{ALG}$ on each arm. These quantities are stochastic but it holds that  $\sum_{s = 1}^{d} T_s = T$. For any $1 \leq s \leq d$, let us write 
\begin{equation}
t_s = \mathbb{E}_{0}[T_s].    
\end{equation} 
It holds also that $\sum_{s = 1}^{d} t_s = T$. \par
We recall the change of measure identity \citep{Audibert2010}, which states that for any measurable event $\mathcal{E}$ and for any $1 \leq s \leq d$,
\begin{equation}
    {\Pr}_{s} \left[ \mathcal{E} \right] = \mathbb{E}_{0} \left[ \mathbbm{1}\left[ \mathcal{E} \right] \exp\left( -T_s \widehat{\mathrm{KL}}_{s, T_s} \right) \right],
\end{equation}
where $\mathbbm{1}$ denotes the indicator function.\par
Then, we consider the following event:
\begin{equation}\label{ChangeofMeasureEq}
    \mathcal{E}_{s} = \left\{\boldsymbol{\pi}_{\mathrm{out}} = \boldsymbol{\pi}^{*}  \right\} \cap \left\{ \xi \right\} \cap \left\{ T_s \leq 6t_s \right\},
\end{equation}
i.e., the event where the algorithm outputs action $\boldsymbol{\pi}^{*}$ at the end, where $\xi$ holds, and where the number of times arm $s$ is pulled is smaller than $6t_s$. From (\ref{ChangeofMeasureEq}), we have
\begin{eqnarray}
    {\Pr}_{s}[\mathcal{E}_{s}] 
    & = & \mathbb{E}_{0} \left[ \mathbbm{1}\left[ \mathcal{E}_{s} \right] \exp\left( -T_s \widehat{\mathrm{KL}}_{s, T_s} \right) \right] \\
    & \geq & \mathbb{E}_{0} \left[ \mathbbm{1}\left[ \mathcal{E}_{s} \right] \exp\left( -T_s \mathrm{KL}_s - \sqrt{2T_s \left(\Delta_{\mathrm{Gen}}^{(s)}\right)^2\log (12Td)} \right) \right] \\
    & \geq & \mathbb{E}_{0} \left[ \mathbbm{1}\left[ \mathcal{E}_{s} \right] \exp\left( -6t_s \mathrm{KL}_s - \sqrt{2T \left(\Delta_{\mathrm{Gen}}^{(s)}\right)^2\log (12Td)} \right) \right] \\
    & \geq & \exp\left( -6t_s \mathrm{KL}_s - \sqrt{2T \left(\Delta_{\mathrm{Gen}}^{(s)}\right)^2\log (12Td)} \right) {\Pr}_{0}\left[ \mathcal{E}_{s} \right], \label{LowerBoundProofEq4}
\end{eqnarray}
since on $\mathcal{E}_{s}$, we have that $\xi$ holds and that $T_s \leq 6t_s$, and $\mathbb{E}_{0}\left[\widehat{\mathrm{KL}}_{s, t}\right] = \mathrm{KL}_s$ for any $t \leq T$.\par
\textbf{Step 3: Lower bound on $\Pr_{0}\left[ \mathcal{E}_1 \right]$ for any reasonable algorithm}\\
Assume that the algorithm $\mathcal{ALG}$ is a $\delta$-correct algorithm. Then, we have ${\Pr}_{0}\left[ \boldsymbol{\pi}_{\mathrm{out}} \neq \boldsymbol{\pi}^{*} \right] \leq \delta$. From the Markov's inequality, we have 
\begin{equation}\label{MarkovEQ_on_NumPulls}
    {\Pr}_{0}[T_s \geq 6t_s] \leq \frac{\mathbb{E}_{0}T_s}{6t_s} = \frac{1}{6},
\end{equation}
since $\mathbb{E}_{0}\left[ T_s \right] = t_s$ for algorithm $\mathcal{ALG}$. \par
Combining Lemma \ref{KLConcentrationLemma}, (\ref{MarkovEQ_on_NumPulls}), and the fact that $\mathcal{ALG}$ is a $\delta$-correct algorithm, by the union bound, we have 
\begin{equation}
    {\Pr}_{0}\left[ \mathcal{E}_{s} \right] \geq 1 - \left( 1/6 + \delta + 1/6 \right) = \frac{2}{3} - \delta.
\end{equation}
This fact combined with (\ref{LowerBoundProofEq4}), (\ref{true_KL_explicit}), and the fact that ${\Pr}_{s}\left[\boldsymbol{\pi}_{\mathrm{out}} \neq \boldsymbol{\pi}^{*, (s)} \right] \geq \Pr_{s}\left[ \mathcal{E}_{s} \right]$, we have
\begin{eqnarray} \label{Eq7inLocatelli}
    {\Pr}_{s}\left[\boldsymbol{\pi}_{\mathrm{out}} \neq \boldsymbol{\pi}^{*, (s)} \right] 
    &\geq& \left( \frac{2}{3} - \delta \right) \exp\left( -6t_s \mathrm{KL}_s - \sqrt{2T \left(\Delta_{\mathrm{Gen}}^{(s)}\right)^2\log (12Td)} \right)  \nonumber \\
    & = &  \left( \frac{2}{3} - \delta \right) \exp\left( -12t_s \left(\Delta_s\right)^{2} - \sqrt{2T \left(\Delta_{\mathrm{Gen}}^{(s)}\right)^2 \log (12Td)} \right) \nonumber
\end{eqnarray}
since $\mathrm{KL}_{s} = 2\left(\Delta_{\mathrm{Gen}}^{(s)}\right)^2$. \par
\textbf{Step 4: Conclusions.} Since $ H_{\mathrm{Gen}} = \sum\limits_{s = 1}^{d} \left( \frac{1}{\Delta_s}\right)^2$ and $\sum\limits_{s = 1}^{d}t_s = T$, there exists $e \in[d]$ such that
\begin{equation}
    t_{e} \leq \frac{T}{H_{\mathrm{Gen}}\left( {\Delta^{(e)}_{\mathrm{Gen}}}\right)^2},
\end{equation}
as the contradiction yields an immediate contradiction. For this $e$, it holds by (\ref{Eq7inLocatelli}) that 
\begin{equation}
    {\Pr}_{e}\left[\boldsymbol{\pi}_{\mathrm{out}} \neq \boldsymbol{\pi}^{*, (e)} \right] 
    \geq  \left( \frac{2}{3} - \delta \right) \exp\left( - 12\frac{T}{H_{\mathrm{Gen}}} - 2\sqrt{T {\Delta_{\mathrm{Gen}}^{(e)}}^2 \log (6Td)} \right).
\end{equation}
This concludes the proof.

\section{How to construct COracles} \label{COracleConstructionAppendix}

\begin{algorithm}[]
    \caption{CSA: Combinatorial Successive Assign Algorithm}
    \begin{algorithmic}[1] \label{CSAAlgorithm}
         \renewcommand{\algorithmicrequire}{\textbf{Input:}}
         \renewcommand{\algorithmicensure}{\textbf{Parameter:}}
         \REQUIRE Budget: $T\geq0$; COracle: $\rightarrow \mathcal{A}\cup \{ \perp \}$
         \STATE Define $\Tilde{\log}(n) = \sum_{i = 1}^{d} \frac{1}{i}$
         \STATE $\Tilde{T}_{0} \leftarrow 0$, $\boldsymbol{F}(t) \leftarrow \emptyset$, $\boldsymbol{S}(t) = \emptyset$ 
         \FOR{$t = 1 \ \mathrm{to} \ d$}
            \STATE $\Tilde{T}(t) \leftarrow \lceil \frac{T - d}{\Tilde{\log}(d) (d - t + 1)} \rceil$
            \STATE Pull each arm $e\in[d] \setminus F(t) $ for $\Tilde{T}(t) - \Tilde{T}_{t - 1}$ times \label{pull_arms_line}
            \STATE $\hat{\boldsymbol{\pi}}(t) \leftarrow \mathrm{COracle}(\hat{\boldsymbol{\mu}}(t), \boldsymbol{S}(t))$ \label{pi_hat}
            \IF{$\hat{\boldsymbol{\pi}}(t) = \perp$}
                \STATE $\textbf{Fail}$: set ${\boldsymbol{\pi}}^{\mathrm{out}}\leftarrow \perp$ and return ${\boldsymbol{\pi}}^{\mathrm{out}}$
            \ENDIF
            \FORALL{ $e$ in $[d]\setminus \boldsymbol{F}(t)$}
            \STATE $\Tilde{\boldsymbol{\pi}}^{e}(t) \leftarrow \argmax_{x \in \texttt{POSSIBLE-PI(e)} \setminus \left\{ \hat{\pi}_e(t) \right\}} \mathrm{COracle}(\hat{\boldsymbol{\mu}}(t), \boldsymbol{S}(t) \cup (e, x))$
            \ENDFOR
        \STATE $ p(t) \leftarrow \argmax_{e \in [d] \setminus F(t)} \frac{\langle \hat{\boldsymbol{\mu}}(t), \hat{\boldsymbol{\pi}}(t) - \Tilde{\boldsymbol{\pi}}^{e}(t) \rangle}{\hat{\pi}_{e}(t) - \tilde{\pi}^{e}_{e}(t)}$ \label{choose_p(t)}
        \STATE $\boldsymbol{F}(t + 1) \leftarrow \boldsymbol{F}(t) \cup \{p(t) \}$
        \STATE $\boldsymbol{S}(t + 1)\leftarrow \boldsymbol{S}(t) \cup \{ (p(t), \hat{\pi}_{p(t)}(t) \}$ \label{S_t_update}
        \ENDFOR
        \STATE \texttt{//  Convert $\boldsymbol{S}(d + 1)$ to $\boldsymbol{\pi}^{\mathrm{out}}$}
        \FOR{$(e, x) \ \mathrm{in} \ \boldsymbol{S}(d + 1)$}
            \STATE \texttt{//  The $e$-th element of $\boldsymbol{\pi}^{\mathrm{out}}$ is $x$}
            \STATE $\pi^{\mathrm{out}}_e = x$
        \ENDFOR
        \RETURN ${\boldsymbol{\pi}}^{\mathrm{out}}$
    \end{algorithmic} 
\end{algorithm}
Here, we show how to construct COracles once $\boldsymbol{\mu}$ and $\boldsymbol{S}(t)$ are given for some specific combinatorial problems.
\subsection{COracle for the Knapsack Problem} \label{COracleConstructionAppendix_KnapsackProblem}
Here, we show that we can construct the \emph{COracle} for the knapsack problem by calling the \emph{offline oracle} once.
Let $\boldsymbol{S}^{1}(t)$ be the set that collects only the first component of each of all elements in $S_t$. Also, let $W_{\mathrm{done}} = \sum\limits_{(e, x) \in \boldsymbol{S}(t)} x w_e$. If $W - W_{\mathrm{done}} < 0$, the COracle outputs $\perp$. Otherwise, we call the \emph{offline oracle} and solve the following optimization problem:
\begin{equation*} \label{knapsackproblem_formulation}
\begin{array}{ll@{}ll}
\text{maximize}_{\boldsymbol{\xi}}  & \sum\limits_{s \in [d] \setminus \boldsymbol{S}^1(t)} \mu_{s}\xi_s  &\\ \\
\text{subject to}& \sum\limits_{s \in \boldsymbol{S}^1(t)} \xi_s w_s \leq W - W_{\mathrm{done}}. &
\end{array}
\end{equation*}
Then, we return $\mathrm{COracle}(\boldsymbol{\mu}, \boldsymbol{S}(t)) = \boldsymbol{S}(t) \cup \boldsymbol{\xi}$.
\subsection{COracle for the Optimal Transport Problem}\label{COracleConstructionAppendix_OptimalTransport}
\subsection{COracle for a General Case When $K (=|\mathcal{A}|) = \mathrm{poly}(d)$} \label{COracleConstructionAppendix_GeneralCase}
Here, we show the time complexity of the COracle when the size of the action class is polynomial in $d$. If $\mathcal{A}_{S}$ is an empty set, the COracle returns $\perp$. Otherwise, we return $\boldsymbol{\pi}^{\mathrm{return}} = \argmax_{\boldsymbol{\pi} \in \mathcal{A}_{S}} \boldsymbol{\mu}^{\top} \boldsymbol{\pi}$. The time complexity to construct $\mathcal{A}_{S}$ is $\mathcal{O}(d K)$. This is because, for each action $\boldsymbol{\pi} \in \mathcal{A}$, we check every dimension to see if $\boldsymbol{\pi} \in \mathcal{A}_{S}$. Then, the time complexity of finding $\boldsymbol{\pi}^{\mathrm{return}}$ from $\mathcal{A}_{S}$ is $\mathcal{O}(K \log K)$. Therefore the time complexity of the COracle is $\mathcal{O}\left( dK + K \log K \right)$.

\section{Proof of Theorem \ref{CSATheorem}}
Here, we prove Theorem \ref{CSATheorem}. We first introduce some notions that are useful to prove it. Then, we show some preparatory lemmas that are needed to prove Theorem \ref{CSATheorem}. 
\subsection{Preparatories}
Let us introduce some notions that are useful to prove the theorem.
\subsubsection{Arm-Value Pair}
We define an \emph{arm-value pair set} $\boldsymbol{M}(\boldsymbol{\pi}) \subset [d] \times \mathbb{R}$ of $\boldsymbol{\pi}$ as follows:
\begin{eqnarray}
    \boldsymbol{M}(\boldsymbol{\pi}) = \{ (e, \pi_e) \ | \ \forall e\in [d] \}. \nonumber
\end{eqnarray}
Also, we define the \emph{arm-value pair family} $\mathcal{M}$ as follows:
\begin{eqnarray}
    \mathcal{M} = \{ \boldsymbol{M}(\boldsymbol{\pi}) \ | \ \forall \boldsymbol{\pi} \in \mathcal{A} \}. \nonumber
\end{eqnarray}
For any arm-value pair set $\boldsymbol{M} \in \mathcal{M}$, we denote by $M_e$ the second component of the ordered pair whose first component is $e$, i.e., $M_e = x$ for any $(e, x) \in \boldsymbol{M}$. Also, we call $M_e$ the $e$-th element of $\boldsymbol{M}$. \par
For any two different arm-value pair set $\boldsymbol{M}, \boldsymbol{M}' \in \mathcal{M}$, we define an operator $\boxminus$ such that the $e$-th element of $\boldsymbol{M}'\boxminus \boldsymbol{M} \subset [d] \times \mathbb{R}$, $(\boldsymbol{M}'\boxminus \boldsymbol{M})_{e}$, is defined as follows:
\begin{eqnarray}
    (\boldsymbol{M}' \boxminus \boldsymbol{M})_e = \left\{
            \begin{array}{lll}
              M'_e - M_e & (\mathrm{if} \ M'_e > M_e )\\
              0 & (\mathrm{if} \ M'_e \leq M_e )
            \end{array}. \nonumber
              \right.
\end{eqnarray}

\subsubsection{Exchange Class}
We define an \emph{exchange set} $b$ as an ordered pair of sets $b = (\boldsymbol{b}^+, \boldsymbol{b}^-)$, where $\boldsymbol{b}^{+}, \boldsymbol{b}^{-} \subset [d] \times \mathbb{R}\setminus \{0\}$. We say $\boldsymbol{b}^{+}$ (or $\boldsymbol{b}^{-}$) has as \emph{$e$-changer} if it has an element whose first component is $e$. For any $\boldsymbol{b}^{+}$ (or $\boldsymbol{b}^{-}$), we denote by $b^{+}_e$ the second component of the ordered pair whose first component is $e$, i.e., $b^{+}_e = x$ for any $(e, x) \in \boldsymbol{b}^{+}$.
Also, for any $b = (\boldsymbol{b}^{+}, \boldsymbol{b}^{-})$, we do not let both $\boldsymbol{b}^{+}$ and $\boldsymbol{b}^{-}$ have $e$-changers. \par
Next, for any arm-value pair set $\boldsymbol{M} \in \mathcal{M}$, exchange set $b = (\boldsymbol{b}^{+}, \boldsymbol{b}^{-})$, and $e\in[d]$, we define operator $\oplus$ such that the $e$-the element of $\boldsymbol{M} \oplus b \subset [d] \times \mathbb{R}$, $(\boldsymbol{M} \oplus b)_{e}$, is defined as follows:
\begin{equation}
        (\boldsymbol{M} \oplus b)_e = \left\{
            \begin{array}{lll}
              M_e + b^{+}_e & (\mathrm{if} \ \boldsymbol{b}^{+} \ \mathrm{has \ an \  \mathit{e-}changer})\\
              M_e - b^{-}_e & (\mathrm{if} \ \boldsymbol{b}^{-} \ \mathrm{has \ an \  \mathit{e-}changer})\\
              M_e  & (\mathrm{if \ neither } \ \boldsymbol{b}_{+} \ \mathrm{nor} \ \boldsymbol{b}^{-} \ \mathrm{has \ an \  \mathit{e-}changer})\\
            \end{array}.\label{oplus_Definition} 
              \right.
\end{equation}
Similarly, for any arm-value pair set $\boldsymbol{M} \in \mathcal{M}$, exchange set $b = (\boldsymbol{b}^{+}, \boldsymbol{b}^{-})$, and $e\in[d]$, we define operator $\ominus$ such that the $e$-the element of $\boldsymbol{M} \ominus b \subset [d] \times \mathbb{R}$, $(\boldsymbol{M} \ominus b)_e$, is defined as follows:
\begin{equation}
        (\boldsymbol{M} \ominus b)_e = \left\{
            \begin{array}{lll}
              M_e - b^{+}_e & (\mathrm{if} \ \boldsymbol{b}^{+} \ \mathrm{has \ an \  \mathit{e-}changer})\\
              M_e + b^{-}_e & (\mathrm{if} \ \boldsymbol{b}^{-} \ \mathrm{has \ an \  \mathit{e-}changer})\\
              M_e  & (\mathrm{if \ neither } \ \boldsymbol{b}^{+} \ \mathrm{nor} \ \boldsymbol{b}^{-} \ \mathrm{has \ an \  \mathit{e-}changer})\\
            \end{array}. \label{ominus_Definition}
              \right.
\end{equation}
We call a collection of exchange sets $\mathcal{B}$ an \emph{exchange class} for $\mathcal{M}$ if $\mathcal{B}$ satisfies the following property. For any $\boldsymbol{M}, \boldsymbol{M}' \in \mathcal{M}$ and $e\in[d]$, where $(\boldsymbol{M}\boxminus \boldsymbol{M}')_e > 0$, there exists an exchange set $b = (\boldsymbol{b}^{+}, \boldsymbol{b}^{-}) \in \mathcal{B}$ that satisfies the following five constraints: 
\textbf{(a)} $b^{-}_e = (\boldsymbol{M} \boxminus \boldsymbol{M}')_e$ \textbf{(b)} $\boldsymbol{b}^{+}\subseteq \boldsymbol{M}'\boxminus \boldsymbol{M}$, \textbf{(c)} $\boldsymbol{b}^{-}\subseteq \boldsymbol{M} \boxminus \boldsymbol{M}'$, \textbf{(d)}$(\boldsymbol{M} \oplus b) \in \mathcal{M}$ and \textbf{(e)} $(\boldsymbol{M}' \ominus b) \in \mathcal{M}$.\par
For any $\boldsymbol{a} \subset [d] \times \mathbb{R}$, let $a_e$ denote the second component of the ordered pair whose first component is $e$, i.e., $a_e = x$ for any $(e, x) \in \boldsymbol{a}$. Then, let $\boldsymbol{\chi}(\boldsymbol{a}) \in \mathbb{R}^{d}$ denote the vector defined as follows:
\begin{eqnarray}
    \chi_e(\boldsymbol{a}) = \left\{
            \begin{array}{lll}
              a_e & (\text{if $\boldsymbol{a}$ has an element whose first component is $e$ })\\
              0 & (\text{otherwise})
            \end{array}. \nonumber
    \right.
\end{eqnarray}
Also, for any exchange set $b$, we define $\boldsymbol{\chi}(b) = \boldsymbol{\chi}(\boldsymbol{b}^{+}) - \boldsymbol{\chi}(\boldsymbol{b}^{-})$. 

\subsection{Preparatory Lemmas}
Here, let us introduce some lemmas that are useful to prove the theorem. Below, we define $\boldsymbol{M}^{*} = \boldsymbol{M}(\boldsymbol{\pi}^{*})$.
\begin{lemma}[Interpolation Lemma] \label{InterpolationLemma}
    Let $\mathcal{B}$ be an exchange class of $\mathcal{M}$, and $\boldsymbol{M}, \boldsymbol{M}'$ be two different members of $\mathcal{M}$. Then, for any $e \in \{ s\in [d] \ | \ M_s \neq M'_s \}$, there exists an exchange set $b = (\boldsymbol{b}^{+}, \boldsymbol{b}^{-}) \in \mathcal{B}$, which satisfies the following five constraints: 
    \textbf{(a)} $b^{-}_e = (\boldsymbol{M}\boxminus \boldsymbol{M}')_e  $ or $  b^{+}_e = (\boldsymbol{M}'\boxminus \boldsymbol{M})_e $, 
    \textbf{(b)} $\boldsymbol{b}^{-} \subseteq (\boldsymbol{M}\boxminus \boldsymbol{M}')$, 
    \textbf{(c)} $\boldsymbol{b}^{+} \subseteq \boldsymbol{M}'\boxminus \boldsymbol{M}$, 
    \textbf{(d)} $(\boldsymbol{M} \oplus b) \in \mathcal{M}$ and \textbf{(e)}$(\boldsymbol{M}'\ominus b) \in \mathcal{M}$. 
    Moreover, if $\boldsymbol{M}' = \boldsymbol{M}^{*}$, then we have 
    \begin{eqnarray}
    \frac{\langle \boldsymbol{\mu}, \boldsymbol{\chi}(b) \rangle}{\chi_e(b)} \geq \Delta_{e}.        
    \end{eqnarray}
\end{lemma}
\begin{proof}
    We decompose our proof into two cases. \par
    \textbf{Case (1)}: $(\boldsymbol{M} \boxminus \boldsymbol{M}')_e > 0$ \par
    By the definition of the exchange class, we know that there exists $b = (\boldsymbol{b}^{+}, \boldsymbol{b}^{-}) \in \mathcal{B}$ which satisfies that there is an $e\in[d]$ where 
    \textbf{(a)} $b^{-}_e = (\boldsymbol{M}\boxminus \boldsymbol{M}')_e  $, 
    \textbf{(b)} $\boldsymbol{b}^{-} \subseteq (\boldsymbol{M}\boxminus \boldsymbol{M}')$, 
    \textbf{(c)} $\boldsymbol{b}^{+} \subseteq \boldsymbol{M}'\boxminus \boldsymbol{M}$, 
    \textbf{(d)} $(\boldsymbol{M} \oplus b) \in \mathcal{M}$ and \textbf{(e)}$(\boldsymbol{M}'\ominus b) \in \mathcal{M}$. Therefore the five constraints are satisfied.\par
    \textbf{Case (2)}: $(\boldsymbol{M}' \boxminus \boldsymbol{M})_e > 0$ \par
    Using the definition of the exchange class, we see that there exists $b = (\boldsymbol{c}^{+}, \boldsymbol{c}^{-}) \in \mathcal{B}$ such that 
    \textbf{(a)} $c^{-}_e = (\boldsymbol{M}'\boxminus \boldsymbol{M})_e$, 
    \textbf{(b)} $\boldsymbol{c}^{-}\subseteq (\boldsymbol{M}' \boxminus \boldsymbol{M})$, 
    \textbf{(c)} $ \boldsymbol{c}^{+}\subseteq (\boldsymbol{M} \boxminus \boldsymbol{M}')$, 
    \textbf{(d)} $(\boldsymbol{M}\oplus b)\in \mathcal{M}$, and 
    \textbf{(e)} $(\boldsymbol{M}' \ominus b)\in \mathcal{M}$. 
    We construct $b = (\boldsymbol{b}^{+}, \boldsymbol{b}^{-})$ by setting $\boldsymbol{b}^{+} = \boldsymbol{c}^{-}$ and $\boldsymbol{b}^{-} = \boldsymbol{c}^{+}$. Notice that, by the construction of $b$, we have $\boldsymbol{M} \oplus b = \boldsymbol{M}\ominus c$ and $\boldsymbol{M}'\ominus b = \boldsymbol{M}'\oplus c$. Therefore, it is clear that $b$ satisfies the five constraints of the lemma. \par
    Next, let us think when $\boldsymbol{M}' = \boldsymbol{M}^{*}$ for both cases. 
    Let us consider 
    \begin{eqnarray}
        \boldsymbol{\pi}^1 = \argmin_{\boldsymbol{\pi}\in \mathcal{M} \setminus \boldsymbol{\pi}^{*}} \frac{\langle \boldsymbol{\mu}, \boldsymbol{\pi}^{*} - \boldsymbol{\pi}\rangle}{\left| \pi^{*}_e - \pi_e \right|}.
    \end{eqnarray} 
    Note that, by the definition of the \emph{G-Gap}, we have $\frac{\langle \boldsymbol{\mu}, \boldsymbol{\pi}^{*} - \boldsymbol{\pi}^{1}\rangle}{\left|\pi^{*}_e - \pi^1_e\right|} = \Delta_{e}$. 
    We define $\boldsymbol{\pi}^{0}$ such that $\boldsymbol{M}(\boldsymbol{\pi}^{0}) = \boldsymbol{M}(\boldsymbol{\pi}^{*}) \ominus b$. 
    Note that we already have $\boldsymbol{M}(\boldsymbol{\pi}^{0}) \in \mathcal{M}$.
    We can see that 
    \begin{eqnarray}
     \frac{\langle \boldsymbol{\mu}, \boldsymbol{\chi}(b) \rangle}{\left|\chi_e(b) \right|} = \frac{\langle \boldsymbol{\mu}, \boldsymbol{\pi}^{*} - \boldsymbol{\pi}^{0} \rangle}{\left|\pi^{*}_e - \pi^{0}_{e}\right|} \geq  \Delta_{e},
    \end{eqnarray}
    where the inequality follows from the definition of \emph{G-Gap}.
\end{proof}

Next, we establish the confidence bounds used for the analysis of the CSA algorithm.
\begin{lemma}\label{Lemma16}
    Given a phase $t\in [d]$, we define random events $\tau_t$ as follows:
    \begin{eqnarray}
        \tau(t) = \left\{ \forall s \in [d] \setminus F(t), | \hat{\mu}_s(t) - \mu_s | < \frac{\Delta_{(d - t + 1)}}{(2 + L^2)U_{\mathcal{A}}} \right\}, \label{tau_definition}
    \end{eqnarray}
    where $L = \max_{ e\in[d], \boldsymbol{\pi}^{1}, \boldsymbol{\pi}^{2}, \boldsymbol{\pi}^{3} \in \mathcal{A}} \frac{\left| \pi^{1}_e - \pi^{2}_e \right|}{\left| \pi^{1}_{e} - \pi^{3}_{e} \right|}$.
    Then, we have
    \begin{eqnarray}
        \Pr \left[ \bigcap_{t = 1}^{d} \tau(t) \right] \geq 1 - d^2 \exp\left( - \frac{T - d}{2\left(2 + L^2\right)^{2}R^2 \tilde{\log}(d) U^2_{\mathcal{M}} \mathbf{H}_2 } \right)
    \end{eqnarray}
\end{lemma}
\begin{proof}
    Fix some $t\in[d]$ and active arm $s\in[d] \setminus F(t)$ of phase $t$. Note that arm $s$ has been pulled for $\tilde{T}(t)$ times during the first $t$ phases. Therefore, by Hoeffding's inequality, we have
    \begin{eqnarray}
        \Pr \left[ \left| \hat{\mu}_s(t) - \mu_s \right| \geq \frac{\Delta_{(d - t + 1)}}{(2 + L^2)U_{\mathcal{A}}} \right] \leq 2 \exp \left( - \frac{\tilde{T}(t) \Delta^{2}_{(d - t + 1)}}{2\left(2 + L^2\right)^{2} R^2 U^2_{\mathcal{A}}} \right). \label{Eq88}
    \end{eqnarray}
    By plugging the definition of $\tilde{T}(t)$, the quantity $\Tilde{T}(t) \Delta^2_{(d - t + 1)}$ on the right hand side of (\ref{Eq88}) can be further bounded by
    \begin{eqnarray}
        \Tilde{T}(t) \Delta^{2}_{(d - t + 1)} 
        & \geq & \frac{T - d}{ \Tilde{\log}(d) (d - t + 1) } \Delta^2_{(d - t + 1)} \nonumber \\
        & \geq & \frac{T - d}{ \Tilde{\log}(d) \mathbf{H}_2 }, \nonumber
    \end{eqnarray}
    where the last inequality follows from the definition of $\mathbf{H}_2 = \max_{s\in[d]} \frac{s}{\Delta^2_{(s)}}$. By plugging the last equality into (\ref{Eq88}), we have
    \begin{eqnarray}
        \Pr \left[ \left| \hat{\mu}_s(t) - \mu_s \right| \geq \frac{\Delta_{(d - t + 1)}}{(2 + L^2)U_{\mathcal{A}}} \right] \leq 2 \exp \left( - \frac{T - d}{2\left(2 + L^2\right)^{2} R^2 \Tilde{\log}(d) U^2_{\mathcal{A}} \mathbf{H}_{2}} \right). \label{Eq89}
    \end{eqnarray}
    Using (\ref{Eq89}) and a union bound for all $t \in [d]$ and all $s\in [d] \setminus F(t)$, we have
    \begin{eqnarray}
        \Pr \left[ \bigcap_{t = 1}^{d} \tau(t) \right] 
        & \geq &  1 - 2 \sum_{t = 1}^{d} (d - t + 1)  \exp\left( - \frac{T - d}{2\left(2 + L^2\right)^{2} R^2 \tilde{\log}(d) U^2_{\mathcal{A}} \mathbf{H}_2 } \right) \nonumber \\
        & \geq &  1 - d^2 \exp\left( - \frac{T - d}{2\left(2 + L^2\right)^{2} R^2 \tilde{\log}(d) U^2_{\mathcal{M}} \mathbf{H}_2 } \right) \nonumber
    \end{eqnarray}
\end{proof}
The following lemma builds the confidence bound of inner products.
\begin{lemma}
    Fix a phase $t \in [d]$. Suppose that random event $\tau(t)$ occurs. For any vector $\boldsymbol{a} \in \mathbb{R}^{d}$, we have
    \begin{equation}
        |\langle \hat{\boldsymbol{\mu}}(t) , \boldsymbol{a} \rangle - \langle \boldsymbol{\mu}, \boldsymbol{a}\rangle| < \frac{\Delta_{(d - t + 1)}}{(2 + L^2)U_{\mathcal{A}}} \| \boldsymbol{a} \|_1.
    \end{equation}
\end{lemma}
\begin{proof}
    Suppose that $\tau_t$ occurs. We have 
    \begin{eqnarray}
        |\langle \hat{\boldsymbol{\mu}}(t) , \boldsymbol{a} \rangle - \langle \boldsymbol{\mu}, \boldsymbol{a} \rangle | 
        & = & | \langle \hat{\boldsymbol{\mu}}(t) - \boldsymbol{\mu}, \boldsymbol{a} \rangle | \nonumber \\
        & = & \left| \sum_{s = 1}^{d} (\hat{\mu}_s(t) - \mu_s) a_s \right| \nonumber \\
        & \leq & \sum_{s = 1}^{d} |\hat{\mu}_s(t) - \mu_s| |a_s| \nonumber \\
        & < & \frac{\Delta_{(d - t + 1)}}{(2 + L^2)U_{\mathcal{A}}} \sum_{i = 1}^{d} |a_s| \label{Eq91} \\
        & = & \frac{\Delta_{(d - t + 1)}}{(2 + L^2)U_{\mathcal{A}}} \|\boldsymbol{a} \|_1, \nonumber
    \end{eqnarray}
    where (\ref{Eq91}) follows from the definition of $\tau_t$ in (\ref{tau_definition}).
\end{proof}

\subsection{Main Lemmas}
Let $\boldsymbol{R}(t) = \{ (e, x) \ | \ \forall e \in [d], \forall x \in \texttt{POSSIBLE-PI(e)}, (e, x) \notin \boldsymbol{S}(t)\}$. We begin with a technical lemma that characterizes several useful lemma properties of $\boldsymbol{S}(t)$ and $\boldsymbol{R}(t)$.
\begin{lemma} \label{Lemma18}
    Fix a phase $t\in[d]$. Suppose that $\boldsymbol{S}(t)\subseteq \boldsymbol{M}^{*}$ and $\boldsymbol{R}(t) \cap \boldsymbol{M}^{*} = \emptyset$. Let $\boldsymbol{M}$ be a set such that $\boldsymbol{S}(t) \subseteq \boldsymbol{M}$ and $\boldsymbol{R}(t) \cap \boldsymbol{M} = \emptyset$. Let $\boldsymbol{a}$ and $\boldsymbol{b}$ be two sets satisfying that $\boldsymbol{a}\subseteq \boldsymbol{M} \setminus \boldsymbol{M}^{*}$, $\boldsymbol{b} \subseteq \boldsymbol{M}^{*} \setminus \boldsymbol{M}$ and $\boldsymbol{a} \cap \boldsymbol{b} = \emptyset$. Then, we have 
    \begin{center}
        $\boldsymbol{S}(t) \subseteq \left(\boldsymbol{M}\setminus \boldsymbol{a} \cup \boldsymbol{b}\right)$ \ \ and \ \ $\boldsymbol{R}(t)  \cap (\boldsymbol{M} \setminus \boldsymbol{a} \cup \boldsymbol{b}) = \emptyset$
    \end{center}
\end{lemma}
\begin{proof}
    We first prove the first part as follows:
    \begin{eqnarray}
        \boldsymbol{S}(t) \cap \left(\boldsymbol{M} \setminus \boldsymbol{a} \cup \boldsymbol{b} \right) 
        & = & \left(\boldsymbol{S}(t) \cap \left(\boldsymbol{M} \setminus \boldsymbol{a} \right)\right) \cup \left(\boldsymbol{S}(t) \cap \boldsymbol{b}\right) \nonumber \\ 
        & = & \boldsymbol{S}(t) \cap \left( \boldsymbol{M} \setminus \boldsymbol{a} \right)  \label{Eq92} \\ 
        & = & \left( \boldsymbol{S}(t) \cap \boldsymbol{M} \right) \setminus \boldsymbol{a} \nonumber \\ 
        & = & \boldsymbol{S}(t) \setminus \boldsymbol{a} \label{Eq93} \\
        & = & \boldsymbol{S}(t), \label{Eq94}
    \end{eqnarray}
    where (\ref{Eq92}) holds since we have $\boldsymbol{S}(t) \cap \boldsymbol{b} \subseteq \boldsymbol{S}(t) \cap (\boldsymbol{M}^{*} \setminus \boldsymbol{M}) \subseteq \boldsymbol{M} \cap (\boldsymbol{M}^{*} \setminus \boldsymbol{M}) = \emptyset$; (\ref{Eq93}) follows from $\boldsymbol{S}(t) \subseteq \boldsymbol{M}$; and (\ref{Eq94}) follows from $\boldsymbol{a} \subseteq \boldsymbol{M} \setminus \boldsymbol{M}^{*}$ and $\boldsymbol{S}(t) \subseteq \boldsymbol{M}^{*}$ which implies that $\boldsymbol{a} \cap \boldsymbol{S}(t) = \emptyset$. Notice that $\boldsymbol{S}(t) \subseteq (\boldsymbol{M} \setminus \boldsymbol{a} \cup \boldsymbol{b})$. \par
    Then, we proceed to prove the second part in the following
    \begin{eqnarray}
        \boldsymbol{R}(t) \cap (\boldsymbol{M}\setminus \boldsymbol{a}\cup \boldsymbol{b}) 
        & = & (\boldsymbol{R}(t) \cap (\boldsymbol{M} \setminus \boldsymbol{a})) \cup (\boldsymbol{R}(t)\cap \boldsymbol{b}) \nonumber \\
        & = & \boldsymbol{R}(t) \cap (\boldsymbol{M} \setminus \boldsymbol{a}) \label{Eq95} \\
        & = & (\boldsymbol{R}(t) \cap \boldsymbol{M}) \setminus \boldsymbol{a} \nonumber \\
        & = & \emptyset \setminus \boldsymbol{a} = \emptyset, \label{Eq96}
    \end{eqnarray}
    where (\ref{Eq95}) follows from the fact that $\boldsymbol{R}(t) \cap \boldsymbol{b} \subseteq \boldsymbol{R}(t) \cap (\boldsymbol{M}^{*} \setminus M) \subseteq R(t) \cap \boldsymbol{M}^{*} = \emptyset$; and (\ref{Eq96}) follows from the fact that $\boldsymbol{R}(t) \cap \boldsymbol{M} = \emptyset$. \par
\end{proof}

Let $\hat{\boldsymbol{M}}(t) = \boldsymbol{M}(\hat{\boldsymbol{\pi}}(t))$. 
The next lemma provides an important insight into the correctness of the CSA algorithm. Informally speaking, suppose that the algorithm does not make an error before phase $t$. Then, we show that, if arm $e$ has a gap $\Delta_e$ larger than the ``reference gap'' $\Delta_{(d - t + 1)}$ of phase $t$, then arm $e$ must be correctly clarified by $\hat{\boldsymbol{M}}(t)$, i.e., $M_e(t) = M^{*}_e$. \par
\begin{lemma} \label{Lemma19}
    Fix any phase $t>0$. Suppose that event $\tau_t$ occurs. Also, assume that $\boldsymbol{S}(t) \subseteq \boldsymbol{M}^{*}$ and $\boldsymbol{R}(t)\cap \boldsymbol{S}(t) = \emptyset$. Let $e \in [d]\setminus \boldsymbol{F}(t)$ be an active arm. Suppose that $\Delta_{(d - t + 1)} \leq \Delta_e$. Then, we have $(e, \pi^{*}_e) \in \boldsymbol{M}^{*}\cap \boldsymbol{S}(t)$.
\end{lemma}
\begin{proof}
    Suppose that $(e, \pi^{*}_e)\notin ( \boldsymbol{M}^{*}\cap \hat{\boldsymbol{M}}(t))$. 
    This is equivalent to the following
    \begin{equation}\label{Eq99}
        (e, \pi^{*}_e)\in (\boldsymbol{M}^{*}\cap \lnot \hat{\boldsymbol{M}}(t)) \cup (\lnot \boldsymbol{M}^{*} \cap \hat{\boldsymbol{M}}(t)) 
    \end{equation}
    (\ref{Eq99}) can be further rewritten as 
    \begin{equation}
        (e, \pi^{*}_e)\in (\boldsymbol{M}^{*} \setminus \hat{\boldsymbol{M}}(t)) \cup (\hat{\boldsymbol{M}}(t) \setminus \boldsymbol{M}^{*}).
    \end{equation}
    From this assumption, it is easy to see that $\hat{\boldsymbol{M}}(t) \neq \boldsymbol{M}^{*}$. Therefore, we can apply Lemma \ref{InterpolationLemma}. 
    We know that there exists $b = (\boldsymbol{b}^{+}, \boldsymbol{b}^{-}) \in \mathcal{B}$ such that 
    \textbf{(a)} ${b}^{-}_e = (\hat{\boldsymbol{M}}(t) \boxminus \boldsymbol{M}^{*})_e  $ or $ {b}^{+}_e = (\boldsymbol{M}^{*}\boxminus \hat{\boldsymbol{M}}(t))_e$, 
    \textbf{(b)} $\boldsymbol{b}^{-}\subseteq (\hat{\boldsymbol{M}}(t) \boxminus \boldsymbol{M}^{*})$, 
    \textbf{(c)} $\boldsymbol{b}^{+} \subseteq \boldsymbol{M}^{*} \boxminus \hat{\boldsymbol{M}}(t)$, 
    \textbf{(d)} $(\hat{\boldsymbol{M}}(t)\oplus b) \in \mathcal{M}$, 
    \textbf{(e)} $(\boldsymbol{M}^{*} \ominus b) \in \mathcal{M}$,
    and $\frac{\langle \boldsymbol{\mu}, \boldsymbol{\chi}(b) \rangle}{\chi_e(b)} \geq  \Delta_e > 0$. \par
    Using Lemma \ref{Lemma18}, we see that $(\hat{\boldsymbol{M}}(t) \oplus b)\cap \boldsymbol{R}(t) = \emptyset$, $\boldsymbol{S}(t) \subseteq (\hat{\boldsymbol{M}}(t) \oplus b)$, and $(\boldsymbol{b}^{+} \cup \boldsymbol{b}^{-}) \cap (\boldsymbol{S}(t) \cup \boldsymbol{R}(t) ) = \emptyset$. \par 
    Recall the definition $\hat{\boldsymbol{M}}(t) \in \argmax_{\boldsymbol{M} \in \mathcal{M}, \boldsymbol{S}(t) \subseteq \boldsymbol{M}, \boldsymbol{R}(t) \cap \boldsymbol{M} = \emptyset} \langle \hat{\boldsymbol{\mu}}(t), \boldsymbol{\pi}(\boldsymbol{M}) \rangle$ and also recall that $\hat{\boldsymbol{M}}(t) \oplus b \in \mathcal{M}$. 
    Therefore, we obtain that
    \begin{equation}
        \frac{\langle \hat{\boldsymbol{\mu}}(t), \boldsymbol{\pi}(\hat{\boldsymbol{M}}(t)) \rangle}{\chi_e(b)} \geq \frac{\langle \hat{\boldsymbol{\mu}}, \boldsymbol{\pi}(\hat{\boldsymbol{M}}(t) \oplus b) \rangle}{\chi_e(b)}. \label{Eq100}
    \end{equation}
    On the other hand, we have
    \begin{eqnarray}
        \frac{\langle \hat{\boldsymbol{\mu}}(t), \boldsymbol{\pi}(\hat{\boldsymbol{M}}(t) \oplus b) \rangle}{\chi_e(b)} 
        & = & \frac{\langle \hat{\boldsymbol{\mu}}(t), \boldsymbol{\pi}(\hat{\boldsymbol{M}}(t)) + \boldsymbol{\chi}(b)\rangle}{\chi_e(b)} \label{Eq101} \\
        & = & \frac{\langle \hat{\boldsymbol{\mu}}(t) , \boldsymbol{\pi}(\hat{\boldsymbol{M}}(t)) \rangle}{{\chi_e(b)}} + \frac{\langle \hat{\boldsymbol{\mu}}(t), \boldsymbol{\chi}(b)\rangle}{{\chi_e(b)}} \nonumber \\
        & > & \frac{\langle \hat{\boldsymbol{\mu}}(t), \boldsymbol{\pi}(\hat{\boldsymbol{M}}(t))\rangle}{{\chi_e(b)}}  + \frac{\langle \boldsymbol{\mu}, \boldsymbol{\chi}(b) \rangle}{{\chi_e(b)}} - \frac{\Delta_{(d - t + 1)}}{(2 + L^2) U_\mathcal{A}} \frac{\| \boldsymbol{\chi}(b) \|}{{\chi_e(b)}} \label{Eq102} \\
        & \geq & \frac{\langle \hat{\boldsymbol{\mu}}(t), \boldsymbol{\pi}(\hat{\boldsymbol{M}}(t))\rangle}{{\chi_e(b)}}  + \frac{\langle \boldsymbol{\mu}, \boldsymbol{\chi}(b)\rangle}{{\chi_e(b)}} - \frac{\Delta_{e}}{(2 + L^2) U_\mathcal{A}} \frac{\| \boldsymbol{\chi}(b) \|}{{\chi_e(b)}} \nonumber \\
        & \geq & \frac{\langle \hat{\boldsymbol{\mu}}(t), \boldsymbol{\pi}(\hat{\boldsymbol{M}}(t))\rangle}{{\chi_e(b)}} + \frac{\langle \boldsymbol{\mu}, \boldsymbol{\chi}(b)\rangle}{{\chi_e(b)}} - \frac{\Delta_{e}}{(2 + L^2)} \label{Eq103}  \\
        & \geq & \frac{\langle \hat{\boldsymbol{\mu}}(t), \boldsymbol{\pi}(\hat{\boldsymbol{M}}(t))\rangle}{{\chi_e(b)}} + \frac{1 + L^2}{2 + L^2} \Delta_{e} \label{Eq104}  \\
        & \geq & \frac{\langle \hat{\boldsymbol{\mu}}(t), \boldsymbol{\pi}(\hat{\boldsymbol{M}}(t))\rangle}{{\chi_e(b)}}. \label{Eq105}
    \end{eqnarray}
    This means that $\langle \hat{\boldsymbol{\mu}}(t), \boldsymbol{\pi}(\hat{\boldsymbol{M}}(t) \oplus b) \rangle > \langle \hat{\boldsymbol{\mu}}(t), \boldsymbol{\pi}(\hat{\boldsymbol{M}}(t))\rangle$. 
    This contradicts the definition of $\hat{\boldsymbol{\pi}}(t)$, and therefore, we have $(e, \pi^{*}_e)\in ( \boldsymbol{M}^{*}\cap \hat{\boldsymbol{M}}(t))$.
    
\end{proof}

The next lemma takes a step further.
Hereinafter, we denote $\Tilde{\boldsymbol{M}}^e(t)$ as $\boldsymbol{M}(\Tilde{\boldsymbol{\pi}}^{e}(t))$.
\begin{lemma} \label{Lemma20}
    Fix any phase $t > 0$. Suppose that event $\tau_{t}$ occurs. Also, assume that $\boldsymbol{S}(t) \subseteq M^{*}$ and $\boldsymbol{R}(t)\cap \boldsymbol{M}^{*} = \emptyset$. Let $e\in[d] \setminus \boldsymbol{F}(t)$ satisfy $\Delta_{(d - t+ 1)} \leq \Delta_{e}$. Then, we have 
    \begin{equation}
        \frac{\langle \hat{\boldsymbol{\mu}}(t), \hat{\boldsymbol{\pi}}(t) - \Tilde{\boldsymbol{\pi}}^e(t) \rangle}{\left|\hat{\pi}_e(t) - \Tilde{\pi}^{e}_{e}(t)\right|} > \frac{L + 1/L}{2 + L^2}\Delta_{(d - t + 1)}.
    \end{equation}
\end{lemma}
\begin{proof}
    By Lemma \ref{Lemma19}, we see that 
    \begin{equation}
        (e, {\pi}^{*}_e) \in (\boldsymbol{M}^{*}\cap \boldsymbol{S}(t)).
    \end{equation}
    From the definition of $\Tilde{\boldsymbol{M}}^{e}(t)$, which ensures that $\Tilde{M}^e_e(t)\neq M^{*}_e$, we have $(e, {\pi}^{*}_e) \in (\boldsymbol{M}^{*} \setminus \Tilde{\boldsymbol{M}}^{e}(t))$. \par
    Hence, we apply Lemma \ref{InterpolationLemma}. 
    There exists 
     $b = (\boldsymbol{b}^{+}, \boldsymbol{b}^{-}) \in \mathcal{B}$ such that 
     \textbf{(a)} ${b}^{-}_e = \left(\Tilde{\boldsymbol{M}}^{e}(t) \boxminus \boldsymbol{M}^{*} \right)_e  $ or $ b^{+}_{e} = \left(\boldsymbol{M}^{*}\boxminus \Tilde{\boldsymbol{M}}^{e}(t) \right)_{e} $,
     \textbf{(b)} $\boldsymbol{b}^{-}\subseteq (\Tilde{\boldsymbol{M}}^{e}(t) \boxminus \boldsymbol{M}^{*})$, 
     \textbf{(c)} $\boldsymbol{b}^{+} \subseteq \boldsymbol{M}^{*} \boxminus \Tilde{\boldsymbol{M}}^{e}(t)$, 
    \textbf{(d)} $(\Tilde{\boldsymbol{M}}^{e}(t) \oplus b) \in \mathcal{M}$, 
    \textbf{(e)} $\boldsymbol{M}^{*} \ominus b \in \mathcal{M}$, and 
    $\frac{\langle\boldsymbol{\mu}, \boldsymbol{\chi}(b)\rangle}{\chi_e(b)} \geq \Delta_e$. \par
    Define $\overline{\boldsymbol{M}}^e(t) \triangleq \Tilde{\boldsymbol{M}}^{e}(t) \oplus b$. 
    Using Lemma \ref{Lemma18}, we have $ \boldsymbol{S}(t) \subseteq \overline{\boldsymbol{M}}^{e}(t)$ and $\boldsymbol{R}(t) \cap \overline{\boldsymbol{M}}^{e}(t) = \emptyset$. Since $\overline{\boldsymbol{M}}^{e}(t) \in \mathcal{M}$ and by definition $\hat{\boldsymbol{M}}(t) = \argmax_{\boldsymbol{M}\in \mathcal{M}, \boldsymbol{S}_t\subseteq M, \boldsymbol{R}_t\cap \boldsymbol{M} = \emptyset } \langle \hat{\boldsymbol{\mu}}, \boldsymbol{\pi}(\boldsymbol{M}) \rangle$, we have
    \begin{equation}
        \frac{\langle \hat{\boldsymbol{\mu}}(t), \boldsymbol{\pi}(\hat{\boldsymbol{M}}(t)) \rangle}{\left|\hat{\pi}_e(t) - \Tilde{\pi}^{e}_{e}(t)\right|} \geq \frac{\langle \hat{\boldsymbol{\mu}}(t), \boldsymbol{\pi} (\overline{\boldsymbol{M}}^{e}(t)) \rangle}{\left|\hat{\pi}_e(t) - \Tilde{\pi}^{e}_{e}(t)\right|}. \label{Eq107}
    \end{equation}
    Hence, we have 
    \begin{eqnarray}
        \frac{\langle \hat{\boldsymbol{\mu}}(t), \hat{\boldsymbol{\pi}}(t) - \Tilde{\boldsymbol{\pi}}^e(t) \rangle}{\left|\hat{\pi}_e(t) - \Tilde{\pi}^{e}_{e}(t)\right|} 
        & \geq & \frac{\langle \hat{\boldsymbol{\mu}}(t), \boldsymbol{\pi} (\overline{\boldsymbol{M}}^{e}(t)) - \Tilde{\boldsymbol{\pi}}^e(t) \rangle}{\left|\hat{\pi}_e(t) - \Tilde{\pi}^{e}_{e}(t)\right|} \nonumber\\
        & = & \frac{\langle \hat{\boldsymbol{\mu}}(t), \boldsymbol{\pi}(\Tilde{\boldsymbol{M}}^{e}(t) \oplus b) \rangle}{\left|\hat{\pi}_e(t) - \Tilde{\pi}^{e}_{e}(t)\right|} - \frac{\langle \hat{\boldsymbol{\mu}}(t), \boldsymbol{\pi} (\tilde{\boldsymbol{M}}^{e}(t)) \rangle}{\left|\hat{\pi}_e(t) - \Tilde{\pi}^{e}_{e}(t)\right|} \nonumber \\
        & = & \frac{\langle \hat{\boldsymbol{\mu}}(t), \boldsymbol{\pi}(\tilde{\boldsymbol{M}}^{e}(t)) + \boldsymbol{\chi}(b) \rangle}{\left|\hat{\pi}_e(t) - \Tilde{\pi}^{e}_{e}(t)\right|} - \frac{\langle \hat{\boldsymbol{\mu}}(t), \boldsymbol{\pi}(\tilde{\boldsymbol{M}}^{e}(t)) \rangle}{\left|\hat{\pi}_e(t) - \Tilde{\pi}^{e}_{e}(t)\right|} \label{Eq108} \\
        & = & \frac{\langle \hat{\boldsymbol{\mu}}(t), \boldsymbol{\chi}(b) \rangle}{\left|\hat{\pi}_e(t) - \Tilde{\pi}^{e}_{e}(t)\right|} \nonumber \\
        & > & \frac{\chi_e(b)}{\left|\hat{\pi}_e(t) - \Tilde{\pi}^{e}_{e}(t)\right|} \left( \frac{\langle \boldsymbol{\mu}, \boldsymbol{\chi}(b) \rangle}{\chi_e(b)} - \frac{\Delta_{(d - t + 1)}}{(2 + L^2)U_{\mathcal{A}}} \frac{\| \boldsymbol{\chi}(b) \|_1}{\chi_e(b)} \right) \label{Eq109} \\
        & \geq & \frac{\chi_e(b)}{\left|\hat{\pi}_e(t) - \Tilde{\pi}^{e}_{e}(t)\right|} \left( \frac{\langle \boldsymbol{\mu}, \boldsymbol{\chi}(b) \rangle}{\chi_e(b)} - \frac{\Delta_{e}}{(2 + L^2) U_{\mathcal{A}}} \frac{\| \boldsymbol{\chi}(b) \|_1}{\chi_e(b)} \right) \label{Eq110} \\
        & \geq & \frac{\chi_e(b)}{\left|\hat{\pi}_e(t) - \Tilde{\pi}^{e}_{e}(t)\right|} \cdot \frac{1 + L^2}{2 + L^2} \Delta_e \label{Eq111} \\
        & \geq &  \frac{L + 1/L}{2 + L^2} \Delta_e \nonumber \\
        & \geq & \frac{L + 1/L}{2 + L^2} \Delta_{(d - t + 1)}, \label{Eq112}
    \end{eqnarray}
    where (\ref{Eq108}) follows from Lemma \ref{Lemma1}; (\ref{Eq109}) follows from Lemma \ref{Lemma17}, the assumption on event $\tau_{t}$; (\ref{Eq110}) follows from the assumption that $\Delta_{e} \geq \Delta_{(d - t + 1)}$; (\ref{Eq111}) holds since $b \in \mathcal{B}$ and therefore $\frac{\| \boldsymbol{\chi}(b) \|_1}{\left|\hat{\pi}_e(t) - \Tilde{\pi}^{e}_{e}(t)\right|} \leq U_\mathcal{A}$; (\ref{Eq112}) follows from the fact that $\frac{\langle \boldsymbol{\mu}, \boldsymbol{\chi}(b) \rangle}{\left|\hat{\pi}_e(t) - \Tilde{\pi}^{e}_{e}(t)\right|} \geq \Delta_e$.
\end{proof}

The next lemma shows that, during phase $t$, if $\Delta_{e} \leq \Delta_{(d - t + 1)}$ for some $e$, then the empirical gap between $\boldsymbol{\pi}(t)$ and $\Tilde{\boldsymbol{\pi}}^{e}(t)$ is smaller than $\frac{1}{3} \Delta_{(d - t + 1)}$.
\begin{lemma} \label{Lemma21}
    Fix any phase $t > 0$. Suppose that event $\tau_{t}$ occurs. Also, assume that $\boldsymbol{S}(t) \subseteq \boldsymbol{M}^{*}$ and $\boldsymbol{R}(t) \subseteq \boldsymbol{M}^{*} = \emptyset$. Suppose an active arm $e \in [d] \setminus \boldsymbol{F}(t)$ satisfies that $M^{*}_e \neq M_e(t)$. Then, we have
    \begin{equation}
        \frac{\langle \hat{\boldsymbol{\mu}}(t), \hat{\boldsymbol{\pi}}(t) - \Tilde{\boldsymbol{\pi}}^{e}(t) \rangle}{\left|\hat{\pi}_e(t) - \Tilde{\pi}^{e}_{e}(t)\right|} \leq \frac{L}{2 + L^2} \Delta_{(d - t + 1)},
    \end{equation}
    where $b = $.
\end{lemma}
\begin{proof}
    Fix any exchange class $\mathcal{B} = \argmin _{\mathcal{B}' \in \mathrm{Exchange(\mathcal{M})}} \mathrm{width}(\mathcal{B}')$. \par
    From the assumption that $M^{*}_e \neq \hat{M}_{e}(t)$, we can apply Lemma \ref{InterpolationLemma}, and have 
    \textbf{(a)}$ b^{-}_e = (\hat{\boldsymbol{M}}(t) \boxminus \boldsymbol{M}^{*})_{e} $ or $ {b}^{+}_e = (\boldsymbol{M}^{*} \boxminus \hat{\boldsymbol{M}}(t))_e$, 
    \textbf{(b)} $ \boldsymbol{b}^{-} \subseteq \hat{\boldsymbol{M}}(t) \boxminus \boldsymbol{M}^{*}$, 
    \textbf{(c)} $ \boldsymbol{b}^{+} \subset \boldsymbol{M}^{*} \boxminus \hat{\boldsymbol{M}}(t)$, 
    \textbf{(d)} $\hat{\boldsymbol{M}}(t)\oplus b \in \mathcal{M}$ 
    \textbf{(e)} $\boldsymbol{M}^{*} \ominus b \in \mathcal{M}$, and 
    $\frac{\langle \boldsymbol{\mu}, \boldsymbol{\chi}(b) \rangle}{\chi_e(b)} \geq \Delta_{e} > 0$. \par
    Define $\overline{\boldsymbol{M}}^{e}(t) \triangleq \hat{\boldsymbol{M}}(t) \oplus b$, and let $\overline{\boldsymbol{\pi}}^{e}(t) = \boldsymbol{\pi}(\overline{\boldsymbol{M}}^{e}(t))$.
    We claim that 
    \begin{equation}
        \langle \hat{\boldsymbol{\mu}}(t), \Tilde{\boldsymbol{\pi}}^{e}(t) \rangle \geq \langle \hat{\boldsymbol{\mu}}, \overline{\boldsymbol{\pi}}^{e}(t) \rangle. \label{Eq113}
    \end{equation}
    From the definition of $\Tilde{\boldsymbol{M}}^{e}(t)$ in Algorithm \ref{CSAAlgorithm}, we only need to show that (\textbf{a}): $\hat{\pi}_e(t) \neq \overline{\pi}_{e}(t)$  and (\textbf{b}): $\boldsymbol{S}(t) \subseteq \overline{\boldsymbol{M}}^{e}(t)$ and $\boldsymbol{R}(t) \cap \overline{\boldsymbol{M}}^{e}(t) = \emptyset$. Since, either $\boldsymbol{b}^{+}$ or $\boldsymbol{b}^{^-}$ has an $e$-changer, the $e$-th element of $\overline{\boldsymbol{\pi}}(t)$ is different from that of $\hat{\boldsymbol{\pi}}(t)$.
    Next, we notice that this follows directly from Lemma \ref{Lemma18} by setting $M = \hat{\boldsymbol{M}}(t)$. Hence, we have shown that (\ref{Eq113}) holds. \par
    Therefore, we have
    \begin{eqnarray}
        \frac{\langle \hat{\boldsymbol{\mu}}(t), \hat{\boldsymbol{\pi}}(t) - \Tilde{\boldsymbol{\pi}}^{e}(t) \rangle}{\left|\hat{\pi}_e(t) - \Tilde{\pi}^{e}_{e}(t)\right|}
        & \leq & \frac{\langle \hat{\boldsymbol{\mu}}(t), \hat{\boldsymbol{\pi}}(t) - \overline{\boldsymbol{\pi}}^{e}(t) \rangle}{\left|\hat{\pi}_e(t) - \Tilde{\pi}^{e}_{e}(t)\right|} \nonumber \\
        & \leq & \frac{\langle \hat{\boldsymbol{\mu}}(t), \boldsymbol{\pi}(t) -  \left(\boldsymbol{\pi}(t) + \boldsymbol{\chi}(b) \right) \rangle}{\left|\hat{\pi}_e(t) - \Tilde{\pi}^{e}_{e}(t)\right|} \nonumber \\
        & = & - \frac{\langle \hat{\boldsymbol{\mu}}(t), \boldsymbol{\chi}(b) \rangle}{\left|\hat{\pi}_e(t) - \Tilde{\pi}^{e}_{e}(t)\right|} \nonumber \\
        & \leq & \frac{\chi_e(b)}{\left|\hat{\pi}_e(t) - \Tilde{\pi}^{e}_{e}(t)\right|} \cdot \left( - \frac{\langle {\boldsymbol{\mu}}, \boldsymbol{\chi}(b) \rangle}{\chi_e(b)}  + \frac{\Delta_{(d - t + 1)}}{(2 + L^2) \mathrm{U_\mathcal{M}}} \frac{\| \boldsymbol{\chi}(b) \|_1}{\chi_e(b)} \right)  \label{Eq115} \\
        & \leq & \frac{\chi_e(b)}{\left|\hat{\pi}_e(t) - \Tilde{\pi}^{e}_{e}(t)\right|} \cdot \frac{\Delta_{(d - t + 1)}}{(2 + L^2)} \label{Eq116} \\
        & \leq & \frac{L}{2 + L^2} \Delta_{(d - t + 1)} \nonumber
    \end{eqnarray}
\end{proof}

\subsection{Proof of Theorem \ref{CSATheorem}}
For the reader's convenience, we first restate Theorem \ref{CSATheorem} as follows.
\begin{restatable}[]{theorem}{CSATheorem} \label{CSATheorem}
    Given any $T > d$, action class $\mathcal{A} \subset \mathbb{R}^{d}$, and $\boldsymbol{\mu} \in \mathbb{R}^{d}$, the CSA algorithm uses at most $T$ samples and outputs a solution $\boldsymbol{\pi}^{\mathrm{out}} \in \mathcal{A} \cup \{\perp \}$ such that
    \begin{equation}\label{CSATheoremUpperBound}
        \Pr\left[ \boldsymbol{\pi}^{\mathrm{out}} \neq \boldsymbol{\pi}^{*} \right] \leq d^2 \exp \left( - \frac{T - d}{2(2 + L^2)^2 R^2\Tilde{\log}(d)U_{\mathcal{A}}^2 \mathbf{H}_2} \right),
    \end{equation}
    where $L = \max_{ e\in[d], \boldsymbol{\pi}^{1}, \boldsymbol{\pi}^{2}, \boldsymbol{\pi}^{3} \in \mathcal{A}} \frac{\left| \pi^{1}_e - \pi^{2}_e \right|}{\left| \pi^{1}_{e} - \pi^{3}_{e} \right|}$, $\Tilde{\log}(d) \triangleq \sum_{i = 1}^d i^{-1}$, and $U_{\mathcal{A}} = \max_{\boldsymbol{\pi}, \boldsymbol{\pi}' \in \mathcal{A}, e \in \{ s \in [d] \ | \ \pi_s \neq \pi'_s  \} \frac{\sum_{s = 1}^{d} | \pi_s - \pi'_s |}{ | \pi_e - \pi'_e |}}$.
\end{restatable}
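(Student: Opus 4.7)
The plan is to prove the theorem by conditioning on the concentration event $\xi := \bigcap_{t=1}^{d} \tau(t)$ established in Lemma~\ref{Lemma16}, and showing that on $\xi$ the CSA algorithm always succeeds. Concretely, I would argue by induction on $t$ that the invariant $\boldsymbol{S}(t) \subseteq \boldsymbol{M}^{*}$ and $\boldsymbol{R}(t) \cap \boldsymbol{M}^{*} = \emptyset$ is preserved through every phase $t \in \{1,\ldots,d+1\}$. The base case $t = 1$ is trivial from the initialization $\boldsymbol{S}(1) = \boldsymbol{R}(1) = \emptyset$. At the terminal phase, $|\boldsymbol{S}(d+1)| = d$ together with $\boldsymbol{S}(d+1) \subseteq \boldsymbol{M}^{*}$ forces $\boldsymbol{S}(d+1) = \boldsymbol{M}^{*}$, so the assembled output equals $\boldsymbol{\pi}^{*}$. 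Combining this deterministic success on $\xi$ with the tail bound of Lemma~\ref{Lemma16} then yields the claimed probability of error.

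The heart of the argument is the inductive step, which shows that the index $p(t)$ chosen in line~\ref{choose_p(t)} satisfies $\hat{\pi}_{p(t)}(t) = \pi^{*}_{p(t)}$, so that the update in line~\ref{S_t_update} preserves $\boldsymbol{S}(t+1) \subseteq \boldsymbol{M}^{*}$. I would proceed by contradiction: assume $\hat{M}_{p(t)}(t) \neq M^{*}_{p(t)}$. The induction hypothesis allows me to invoke both Lemmas~\ref{Lemma20} and~\ref{Lemma21}. A pigeonhole argument shows that among the $d - t + 1$ active arms at phase $t$ there must exist at least one arm $e^{\star}$ with $\Delta_{e^{\star}} \geq \Delta_{(d - t + 1)}$, since there are $t$ arms whose gap is at least $\Delta_{(d - t + 1)}$ but only $t - 1$ have been rejected so far. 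Lemma~\ref{Lemma20} applied to $e^{\star}$ lower-bounds its empirical gap ratio by $\frac{L + 1/L}{2 + L^{2}} \Delta_{(d - t + 1)}$, while Lemma~\ref{Lemma21} applied to the (assumed misclassified) $p(t)$ upper-bounds the empirical gap ratio at $p(t)$ by $\frac{L}{2 + L^{2}} \Delta_{(d - t + 1)}$. Since $1/L > 0$, the first quantity strictly exceeds the second, contradicting the defining maximality of $p(t)$ in line~\ref{choose_p(t)}. Therefore $p(t)$ must be correctly identified, and because $\boldsymbol{R}$ is updated symmetrically (the alternative values ruled out at $p(t)$ differ from $\pi^{*}_{p(t)}$), the invariant propagates to phase $t+1$.

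The main obstacle is the careful bookkeeping of the set-level invariants together with verifying that the COracle never outputs $\perp$ on $\xi$: on the good event the inductive hypothesis ensures that $\boldsymbol{M}^{*}$ is itself a feasible completion satisfying $\boldsymbol{S}(t) \subseteq \boldsymbol{M}^{*}$ and $\boldsymbol{R}(t) \cap \boldsymbol{M}^{*} = \emptyset$, so line~\ref{pi_hat} cannot return the failure symbol. A secondary technical point is reconciling the constant $U_{\mathcal{M}}$ appearing in Lemma~\ref{Lemma16} with the $U_{\mathcal{A}}$ used in the theorem statement, which is immediate via the bijection $\boldsymbol{\pi} \mapsto \boldsymbol{M}(\boldsymbol{\pi})$ between $\mathcal{A}$ and $\mathcal{M}$. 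With these pieces assembled, the error probability is exactly $\Pr[\xi^{c}]$, which Lemma~\ref{Lemma16} bounds by the quantity in \eqref{CSATheoremUpperBound}.
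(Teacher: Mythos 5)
Your proposal follows essentially the same route as the paper's proof: condition on the concentration event of Lemma~\ref{Lemma16}, run the induction on the invariant $\boldsymbol{S}(t)\subseteq\boldsymbol{M}^{*}$ and $\boldsymbol{R}(t)\cap\boldsymbol{M}^{*}=\emptyset$, locate an active arm with gap at least $\Delta_{(d-t+1)}$ by counting the $t-1$ inactive arms, and derive the contradiction with the maximality of $p(t)$ from Lemmas~\ref{Lemma20} and~\ref{Lemma21}. The only piece of the theorem you do not address is the verification that the algorithm uses at most $T$ samples, which the paper handles with a short direct computation of $\sum_{t=1}^{d}\tilde{T}_t$.
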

\begin{proof}
    First, we show that the algorithm takes at most $T$ samples. Note that exactly one arm is pulled for $\Tilde{T}_1$ times, one arm is pulled $\tilde{T}_2$ times, ..., and one arm is pulled $\tilde{T}_{d}$ times. Therefore, the total number of samples used by the algorithm is bounded by 
    \begin{eqnarray}
        \sum_{t = 1}^{d} \tilde{T}_{t} 
        &\leq& \sum_{t = 1}^{d} \left( \frac{T - d}{\tilde{\log} (d) (d - t + 1)}  + 1\right) \nonumber \\
        & = & \frac{T - d}{\tilde{\log}(d)} \tilde{\log} (d) + d = T. \nonumber 
    \end{eqnarray}
    By Lemma \ref{Lemma16}, we know that the event $\tau = \bigcap_{t = 1}^{T} \tau_{t}$ occurs with probability at least $ 1- d^{2} \exp\left( \frac{T - d}{R^2 \tilde{\log}(d) U^2_{\mathcal{A}} \mathbf{H}_2 } \right)$. Therefore, we only need to prove that, under event $\tau$ the algorithm outputs $\boldsymbol{M}^{*}$. We will assume that event $\tau$ occurs in the rest of the proof. \par
    We prove the theorem by induction. Fix a phase $t\in[d]$. Suppose that the algorithm does not make any error before phase $t$, i.e., $\boldsymbol{S}(t) \subseteq \boldsymbol{M}^{*}$ and $\boldsymbol{R}(t) \cap \boldsymbol{M}^{*} \neq \emptyset$. We show that the algorithm does not err at phase $t$. \par
    At the beginning of phase $t$, there are exactly $t - 1$ inactive arms, i.e., $|F(t)| = t - 1$. Therefore, there must exist an active arm $e(t) \in [d] \setminus \boldsymbol{F}(t)$, such that $\Delta_{e(t)} = \Delta_{(d - t + 1)}$. 
    Hence, by Lemma \ref{Lemma20}, we have 
    \begin{eqnarray}
        \frac{\langle \hat{\boldsymbol{\mu}}(t), \boldsymbol{\pi}(t) - \Tilde{\boldsymbol{\pi}}^{e(t)}(t) \rangle}{\left| \hat{\pi}_{e(t)}(t) - \tilde{\pi}^{e(t)}_{e(t)}(t)  \right|} \geq \frac{L + 1/L}{2 + L^2} \Delta_{(d - t + 1)}, \label{Eq117}
    \end{eqnarray}
    where $b$ is an exchange set of $\hat{\boldsymbol{M}}(t)$ and $\Tilde{\boldsymbol{M}}^{e}(t)$. \par
    Notice that the algorithm makes an error in phase $t$ if and only if $(p(t), \hat{\pi}_{p(t)}(t) ) \in (\boldsymbol{M}^{*} \cap \lnot \hat{\boldsymbol{M}}(t)) \cup (\lnot \boldsymbol{M}^{*} \cap \hat{\boldsymbol{M}}(t))$. \par
    Suppose that $(p(t), \hat{\pi}_{p(t)}(t) ) \in (\boldsymbol{M}^{*} \cap \lnot \hat{\boldsymbol{M}}(t)) \cup (\lnot \boldsymbol{M}^{*} \cap \hat{\boldsymbol{M}}(t))$. 
    From Lemma \ref{Lemma21}, we have
    \begin{eqnarray}
        \frac{\langle \hat{\boldsymbol{\mu}}(t), \hat{\boldsymbol{\pi}}(t) - \Tilde{\boldsymbol{\pi}}^{p(t)}(t) \rangle}{\left| \hat{\pi}_{p(t)} - \tilde{\pi}^{p(t)}_{p(t)} \right|} \leq \frac{L}{2 + L^2} \Delta_{(d - t + 1)}. \label{Eq118}
    \end{eqnarray}
    By combining (\ref{Eq117}) and (\ref{Eq118}), we see that 
    \begin{eqnarray}
        \frac{\langle \hat{\boldsymbol{\mu}}(t), \hat{\boldsymbol{\pi}}(t) - \Tilde{\boldsymbol{\pi}}^{p(t)}(t) \rangle}{\left| \hat{\pi}_{p(t)}(t) - \tilde{\pi}^{p(t)}_{p(t)}(t) \right|} \leq \frac{L}{2 + L^2} \Delta_{(d - t + 1)} < \frac{L+1/L}{2 + L^2} \Delta_{(d - t + 1)} \leq \frac{\langle \hat{\boldsymbol{\mu}}(t), \hat{\boldsymbol{\pi}}(t) - \Tilde{\boldsymbol{\pi}}^{e(t)}(t) \rangle}{\hat{\pi}_{e(t)}(t) - \Tilde{\pi}^{e(t)
        }_{e(t)}(t)}. \label{Eq119}
    \end{eqnarray}
    However, (\ref{Eq119}) is contradictory to the definition of 
    \begin{eqnarray}
        p(t) = \argmax_{e \in [d] \setminus F(t)} \frac{\langle \hat{\boldsymbol{\mu}}(t), \hat{\boldsymbol{\pi}}(t) - \Tilde{\boldsymbol{\pi}}(t) \rangle}{\hat{\pi}_{e}(t) - \Tilde{\pi}^{e}_{e}(t)}.
    \end{eqnarray}
    Therefore, we have proven that $(p(t), \hat{\pi}_{p(t)}(t) ) \notin (\boldsymbol{M}^{*} \cap \lnot \hat{\boldsymbol{M}}(t)) \cup (\lnot \boldsymbol{M}^{*} \cap \hat{\boldsymbol{M}}(t))$. This means that the algorithm does not err at phase $t$, or equivalently $\boldsymbol{S}(t + 1) \subseteq \boldsymbol{M}^{*}$ and $\boldsymbol{R}(t + 1) \cap \boldsymbol{M}^{*} = \emptyset$. By induction, we have proven that the algorithm does not err at any phase $t \in [d]$. \par
    Hence, we have $\boldsymbol{S}(d + 1) \subseteq \boldsymbol{M}^{*}$ and $\boldsymbol{R}(d + 1) \subseteq \lnot \boldsymbol{M}^{*}$ in the final phase. This means that $\boldsymbol{S}(d + 1) = \boldsymbol{M}^{*}$, and therefore, $\boldsymbol{\pi}^{\mathrm{out}} = \boldsymbol{\pi}^{*}$ after phase $d$.
\end{proof}

\section{Proof of Theorem \ref{Minimax-CombSAR_Algorithm_MainTheorem}}
\begin{theorem}\label{Minimax-CombSAR_Algorithm_MainTheorem}
    For any problem instance in fixed-budget R-CPE-MAB, the Minimax-CombSAR algorithm outputs an action $\boldsymbol{\pi}^{\mathrm{out}}$ satisfying
    \begin{eqnarray}\label{Minimax-CombSARTheoremUpperBound}
        &&\Pr\left[ \boldsymbol{\pi}^{*} \neq \boldsymbol{\pi}^{*} \right] \nonumber \\
        &<& \left( \frac{4K}{d} + 3 \log_2 d \right) \exp \left( \frac{T' - \left\lceil \log_2 d \right\rceil}{R^2 V^2} \cdot \frac{1}{\mathbf{H}_{2}} \right)
    \end{eqnarray}
    where $V = \max_{\boldsymbol{\pi}^{i} \in \mathcal{A} \setminus \{ \boldsymbol{\pi}^{*} \}}  \left(\sum\limits_{s = 1}^{d} \frac{\left| \pi^{1}_{s} - \pi^{i}_{s} \right|}{ \left| \pi^{1}_{s(i)} - \pi^{i}_{s(i)} \right| } \right)^2$ and 
    \begin{eqnarray}
        \mathbf{H}_{2} = \max_{1 \leq s \leq d} \frac{s}{{\Delta_{(s)}}^2}. \nonumber
    \end{eqnarray}
\end{theorem}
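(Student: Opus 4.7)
My plan is to follow the Sequential Halving template (in the spirit of the CSA analysis in Theorem \ref{CSATheorem}), now executed over $\lceil \log_2 d \rceil$ phases of geometric halving rather than $d$ phases of one-arm elimination. First I would set up the budget schedule: in each phase $\ell \in \{1,\dots,\lceil \log_2 d\rceil\}$ the algorithm retains a shrinking candidate set $\mathcal{A}_\ell \subseteq \mathcal{A}$ (with $|\mathcal{A}_1|=K$) and distributes a uniform per-phase budget of order $(T' - \lceil \log_2 d\rceil)/\lceil \log_2 d\rceil$ among the active arms, then eliminates roughly half of $\mathcal{A}_\ell$ according to the empirical minimax-style comparison used in the Minimax-CombSAR update. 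The goal is to show by induction on $\ell$ that $\boldsymbol{\pi}^{*}\in \mathcal{A}_\ell$ after every round.

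Next, I would introduce the good event $\tau$ analogous to the $\tau(t)$ of Lemma \ref{Lemma16}: for every phase $\ell$ and every surviving candidate $\boldsymbol{\pi}^{i}\in\mathcal{A}_\ell$, the empirical inner product $\langle \hat{\boldsymbol{\mu}}, \boldsymbol{\pi}^{*} - \boldsymbol{\pi}^{i}\rangle$ is within the phase-$\ell$ tolerance of its mean. Two concentration calculations are needed here. The per-arm Hoeffding bound gives a deviation of order $R/\sqrt{n_\ell}$ with $n_\ell$ samples per arm, and then one converts this to a deviation of inner products by the triangle inequality, paying the factor
\begin{equation}
  \sum_{s = 1}^{d} \frac{|\pi^{*}_s - \pi^{i}_s|}{|\pi^{*}_{s(i)} - \pi^{i}_{s(i)}|},
\end{equation}
whose square is exactly $V$. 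A union bound over the at most $2K$ pairs active across all $\lceil\log_2 d\rceil$ phases then yields $\Pr[\tau^{c}]$ of the form $(4K/d + 3\log_2 d)\exp(-\,\cdot\,)$, with the $3\log_2 d$ absorbing lower-order phase overhead.

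Under $\tau$, the inductive step is the combinatorial analogue of the classical Sequential Halving argument: after $\ell$ phases, at most $K/2^{\ell}$ actions remain, so by the pigeonhole principle at least one surviving competitor has true suboptimality gap no smaller than $\Delta_{(\lceil d/2^{\ell}\rceil)}$; the per-phase tolerance is calibrated to be strictly smaller than this gap, so under $\tau$ the optimal action $\boldsymbol{\pi}^{*}$ beats it in every pairwise comparison and $\boldsymbol{\pi}^{*}$ cannot be eliminated. Iterating to $\ell=\lceil\log_2 d\rceil$ forces $\boldsymbol{\pi}^{\mathrm{out}}=\boldsymbol{\pi}^{*}$. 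The emergence of $\mathbf{H}_{2}=\max_s s/\Delta_{(s)}^2$ comes from the worst-case choice of the pivotal phase: since the number of remaining arms halves while the gap threshold is indexed by the ordered gaps $\Delta_{(s)}$, bounding the per-phase exponent uniformly in $\ell$ is exactly a maximization of $s/\Delta_{(s)}^2$.

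The main obstacle, I expect, will be the inner-product-to-per-arm conversion and getting $V$ in the cleanest possible place. The Minimax-CombSAR comparison uses a \emph{normalized} pairwise difference (dividing by $|\pi^{*}_{s(i)}-\pi^{i}_{s(i)}|$ for a carefully chosen coordinate $s(i)$), and the proof must demonstrate that this normalization is compatible both with Hoeffding concentration on the numerator and with the gap ordering $\Delta_{(s)}$ in the denominator. Once that bookkeeping is pinned down, the concluding union bound and arithmetic simplification into the form \eqref{Minimax-CombSARTheoremUpperBound} should be routine, in the same style as the conclusion of Theorem \ref{CSATheorem}.
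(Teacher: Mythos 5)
There is a genuine gap in the central step. Your plan is to define a single good event $\tau$ (all empirical inner products concentrate for all surviving candidates in all phases), take a union bound over the roughly $2K$ active comparisons, and then argue deterministically under $\tau$ that $\boldsymbol{\pi}^{*}$ survives every phase. A union bound over $\Theta(K)$ events, each of probability $\exp(-\,\cdot\,)$, yields a prefactor of order $K$, not the $\frac{4K}{d} + 3\log_2 d$ claimed in the theorem; the assertion that the union bound "yields $\Pr[\tau^{c}]$ of the form $(4K/d+3\log_2 d)\exp(-\,\cdot\,)$, with the $3\log_2 d$ absorbing lower-order phase overhead" is not justified and is false as stated. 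The paper's proof does not use a good-event/deterministic-induction structure at all. Instead (Lemma \ref{Lemma3_for_our_algorithm}) it exploits the fact that for $\boldsymbol{\pi}^{*}$ to be eliminated in phase $r$, at least $\lceil d/2^{r}\rceil - \lceil d/2^{r+1}\rceil + 1$ actions from a restricted set $\mathcal{B}_r$ (all of which have gap at least $\Delta_{(\lceil d/2^{r+1}\rceil+1)}$) must \emph{simultaneously} overtake it empirically. Letting $N_r$ count these overtaking actions, it bounds $\mathbb{E}[N_r]$ by summing the per-action probabilities from Lemma \ref{Lemma2_for_our_algorithm} and then applies Markov's inequality, dividing by the required count $\approx d/2^{r+1}$. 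It is exactly this division that converts the naive factor $K$ into $\frac{4K}{d}$ for $r=1$ and into the constant $3$ for $r>1$. Without this expected-count-plus-Markov argument you cannot reach the stated prefactor, so the proposal as written does not prove the theorem (it would at best prove a weaker bound with prefactor $\Theta(K\log d)$).

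Two smaller inaccuracies: the per-phase budget in Minimax-CombSAR is not uniform across phases; it scales like $(T'-d\lceil\log_2 d\rceil)\,2^{r-1}/B$ with $B = 2^{\lceil\log_2 d\rceil}-1$, i.e., it doubles each phase as the candidate set halves, and it is precisely this pairing of a budget $\propto 2^{r-1}/d \approx 1/i_r$ with the gap $\Delta_{(i_r)}$, $i_r = \lceil d/2^{r+1}\rceil + 1$, that makes the exponent $\Delta_{(i_r)}^2 (T'-\lceil\log_2 d\rceil)/i_r \geq (T'-\lceil\log_2 d\rceil)/\mathbf{H}_2$ uniform over $r$. Your description of how $\mathbf{H}_2$ emerges is morally right, but it only goes through once the budget schedule is stated correctly. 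Your treatment of the normalization by $|\pi^{1}_{s(i)}-\pi^{i}_{s(i)}|$ and the resulting constant $V$ does match Lemma \ref{Lemma2_for_our_algorithm} of the paper.
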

\begin{algorithm}[t]
    \caption{Minimax Combinatorial Successive Accet Algorithm}
    \begin{algorithmic}[1] \label{Minimax-CombSAR_Algorithm}
         \renewcommand{\algorithmicrequire}{\textbf{Input:}}
         \renewcommand{\algorithmicensure}{\textbf{Parameter:}}
         \REQUIRE Budget: $T\geq0$, initialization parameter: $\beta$, action set: $\mathcal{A}(0)$
         \STATE \texttt{// Initialization} 
          \FOR{$s\in[d]$} \label{Initialization_start}
            \STATE Pull arm $s$ $\left\lfloor \frac{T}{d}\beta \right \rfloor$ times
            \STATE $T_s(r) \leftarrow \left\lfloor \frac{T}{d}\beta \right \rfloor$
          \ENDFOR \label{Initialization_end}
          \STATE  $T' \leftarrow T - \left\lfloor \frac{T}{d}\beta \right \rfloor \times d$
          \FOR{$r = 1 \ \mathrm{to} \ \lceil \log_{2} d \rceil$}
            \STATE $m(r) = $
            \STATE Compute $\boldsymbol{p}(r)$ according to (\ref{min-max-p(r)-ordinary}) or (\ref{min-max-p(r)_Lagrange})
            \FOR{$s\in [d]$}
                \STATE Pull arm $s$ $\left\lceil p_s(r) \cdot m(r) \right\rceil$ times
                \STATE Update $\hat{\mu}_{s}(r)$ with the observed samples
            \ENDFOR
            \STATE For each action $\boldsymbol{\pi}^{k} \in \mathcal{A}(r - 1)$, estimate the expected reward: $ \langle \hat{\boldsymbol{\mu}}(r), \boldsymbol{\pi}^{k} \rangle$
            \STATE Let $\mathcal{A}(r)$ be the set of $\left\lceil\frac{d}{2^{r}}\right\rceil$ actions in $\mathcal{A}(r - 1)$ with the largest estimates of the expected rewards \label{AcceptEliminationProcedure}
          \ENDFOR
          \renewcommand{\algorithmicrequire}{\textbf{Output:}}
          \REQUIRE The only action $\boldsymbol{\pi}^{\mathrm{out}}$ in $\mathcal{A}(\lceil\log_2 d\rceil)$
 \end{algorithmic} 
\end{algorithm}
We first introduce some useful lemmas to prove Theorem \ref{Minimax-CombSAR_Algorithm_MainTheorem}. Lemma \ref{Lemma1_for_our_algorithm} shows that Algorithm \ref{Minimax-CombSAR_Algorithm} pulls arms no more than $T$ times. Recall that $B = 2^{ \left\lceil \log_2 d \right \rceil} - 1$ and $T' = T - \left\lfloor \frac{T}{d}\beta \right \rfloor \times d$.
\begin{lemma}\label{Lemma1_for_our_algorithm}
    Algorithm \ref{Minimax-CombSAR_Algorithm} terminates in phase $\lceil \log_2 d\rceil$ with no more than a total of $T$ arm pulls.
\end{lemma}
\begin{proof}
    The total number of arm pulls $T_{\mathrm{total}}$ is bounded as follows.
    \begin{eqnarray}
        T_{\mathrm{total}} & = & \left\lfloor \frac{T}{d}\beta \right \rfloor \times d + \sum_{r = 1}^{\lceil \log_2 d \rceil} \sum_{s \in [d]} \lceil p_s(r)\cdot m(r) \rceil \nonumber \\
          & \leq & \left\lfloor \frac{T}{d}\beta \right \rfloor \times d + \sum_{r = 1}^{\lceil \log_2 d \rceil} (d + \frac{T'- d\lceil \log_2 d \rceil}{B/2^{r - 1}}) \nonumber \\
          & \leq &\left\lfloor \frac{T}{d}\beta \right \rfloor \times d +  d\lceil \log_2 d \rceil + \frac{2^{\lceil \log_2 d \rceil} - 1}{B}(T'- d\lceil \log_2 d \rceil) \nonumber \\ 
          & = &  \nonumber T.
    \end{eqnarray}
\end{proof}
Let us write $\Delta'_i = \langle \boldsymbol{\mu}, \boldsymbol{\pi}^{1} - \boldsymbol{\pi}^{i} \rangle$. Lemma \ref{Lemma2_for_our_algorithm} bounds the probability that a certain action has its estimate of the expected reward larger than that of the best action at the end of phase $r$.
\begin{lemma} \label{Lemma2_for_our_algorithm}
    For a fixed realization of $\mathcal{A}(r)$ satisfying $\boldsymbol{\pi}^{*} \in \mathcal{A}(r)$, for any action $i\in \mathcal{A}(r)$,
    \begin{equation}
        \Pr\left[ \hat{\boldsymbol{\mu}}^{\top}(r + 1)\boldsymbol{\pi}^{1} < \hat{\boldsymbol{\mu}}^{\top}\boldsymbol{\pi}^{i}  \right] \leq \exp\left( -\frac{\Delta_{s}^2}{R^2 V^2}\cdot \frac{T' - \lceil\log_2 d \rceil }{d/2^{r - 1}} \right).
    \end{equation}
\end{lemma}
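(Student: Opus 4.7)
The plan is to cast the event as a one-sided deviation of a sum of independent sub-Gaussian terms, apply Hoeffding's inequality, and then invoke the minimax allocation $\boldsymbol{p}(r)$ to convert the resulting proxy variance into a factor involving $V$ in the exponent.

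First I would decompose
\begin{equation*}
\hat{\boldsymbol{\mu}}^{\top}(r+1)(\boldsymbol{\pi}^{1}-\boldsymbol{\pi}^{i})
= \Delta'_{i} + \sum_{s=1}^{d}(\pi^{1}_{s}-\pi^{i}_{s})\bigl(\hat{\mu}_{s}(r+1)-\mu_{s}\bigr),
\end{equation*}
and observe that, because arms are pulled independently and each reward is $R$-sub-Gaussian, the right-hand fluctuation is zero-mean sub-Gaussian with proxy variance
$\sigma_{i}^{2}:=R^{2}\sum_{s=1}^{d}(\pi^{1}_{s}-\pi^{i}_{s})^{2}/T_{s}(r+1)$.
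Hence the event $\{\hat{\boldsymbol{\mu}}^{\top}(r+1)\boldsymbol{\pi}^{1}<\hat{\boldsymbol{\mu}}^{\top}(r+1)\boldsymbol{\pi}^{i}\}$ is precisely the event that the fluctuation falls below $-\Delta'_{i}$, and Hoeffding's inequality gives a bound of the form $\exp(-(\Delta'_{i})^{2}/(2\sigma_{i}^{2}))$.

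Next, by the definition of the minimax allocation $\boldsymbol{p}(r)$ from (\ref{min-max-p(r)-ordinary})/(\ref{min-max-p(r)_Lagrange}), each active arm $s$ receives at least $p_{s}(r)\,m(r)$ pulls during phase $r$, so
$\sum_{s}(\pi^{1}_{s}-\pi^{i}_{s})^{2}/T_{s}(r+1) \le (1/m(r))\sum_{s}(\pi^{1}_{s}-\pi^{i}_{s})^{2}/p_{s}(r)$.
The allocation is chosen precisely to minimize this quantity uniformly in $i\in\mathcal{A}(r)$; choosing $p_{s}\propto |\pi^{1}_{s}-\pi^{i}_{s}|$ for the worst candidate and applying a Cauchy--Schwarz-type identity bounds $\sum_{s}(\pi^{1}_{s}-\pi^{i}_{s})^{2}/p_{s}(r)$ by the quantity $V\cdot |\pi^{1}_{s(i)}-\pi^{i}_{s(i)}|^{2}$ that appears in the theorem statement. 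Using $\Delta'_{i}=\Delta_{i}\cdot |\pi^{1}_{s(i)}-\pi^{i}_{s(i)}|$, the normalization cancels and the exponent reduces to $\Delta_{i}^{2}\,m(r)/(R^{2}V)$ up to an absolute constant that is absorbed into $V^{2}$ in the final statement.

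Finally I would substitute the per-phase bound on $m(r)$ implicit in the budget accounting of Lemma \ref{Lemma1_for_our_algorithm}, namely $m(r)\ge (T'-\lceil\log_{2}d\rceil)/(d/2^{r-1})$, and collect the constants to obtain the claimed inequality. The main obstacle is the minimax step: one must verify that the solution of (\ref{min-max-p(r)-ordinary}) really attains the value encoded by $V$, and that the bookkeeping $T_{s}(r+1)\ge p_{s}(r)m(r)$ is tight enough after phase-wise rounding. This relies on working out the Lagrangian stationarity conditions for $\boldsymbol{p}(r)$ and then executing a weighted Cauchy--Schwarz bound with weights proportional to $|\pi^{1}_{s}-\pi^{i}_{s}|$; the remainder of the argument is standard sub-Gaussian bookkeeping.
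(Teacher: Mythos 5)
Your proposal follows essentially the same route as the paper: reduce the event to the deviation $\langle \hat{\boldsymbol{\mu}}-\boldsymbol{\mu}, \boldsymbol{\pi}^{1}-\boldsymbol{\pi}^{i}\rangle < -\Delta'_{i}$, apply Hoeffding's inequality for the sub-Gaussian proxy variance $R^{2}\sum_{s}(\pi^{1}_{s}-\pi^{i}_{s})^{2}/T_{s}$, normalize by $|\pi^{1}_{s(i)}-\pi^{i}_{s(i)}|$, and use the allocation $\boldsymbol{p}(r)$ together with the definition of $V$ to reach the stated exponent. The only nit is that $\Delta'_{i}=\Delta_{s(i)}\cdot|\pi^{1}_{s(i)}-\pi^{i}_{s(i)}|$ should be the inequality $\Delta'_{i}\geq \Delta_{s(i)}\cdot|\pi^{1}_{s(i)}-\pi^{i}_{s(i)}|$ (the G-Gap is a minimum over suboptimal actions), which goes in the right direction and does not affect the argument.
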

\begin{proof}
For any $i \in \{2, \ldots, K\}$, let $s(i) = \argmax_{s \in [d]} \left| \pi^{1}_s - \pi^{i}_s \right|$.
Then, we have 
\begin{eqnarray}
    &&\Pr\left[ \hat{\boldsymbol{\mu}}^{\top}\boldsymbol{\pi}^{1}< \hat{\boldsymbol{\mu}}^{\top}\boldsymbol{\pi}^{i}  \right] \nonumber \\
    & \leq & \Pr\left[\left\langle \hat{\boldsymbol{\mu}} -\boldsymbol{\mu} ,  \boldsymbol{\pi}^{1} - \boldsymbol{\pi}^{i} \right\rangle < -\Delta'_i  \right] \nonumber \\
    & = & \Pr\left[\frac{\left\langle \hat{\boldsymbol{\mu}} -\boldsymbol{\mu} ,  \boldsymbol{\pi}^{1} - \boldsymbol{\pi}^{i} \right\rangle}{\left| \pi^{1}_{s(i)} - \pi^{i}_{s(i)} \right|} < - \frac{\Delta'_i}{\left| \pi^{1}_{s(i)} - \pi^{i}_{s(i)}\right|}  \right] \nonumber \\
    & \leq & \Pr\left[\frac{\left\langle \hat{\boldsymbol{\mu}} -\boldsymbol{\mu} ,  \boldsymbol{\pi}^{1} - \boldsymbol{\pi}^{i} \right\rangle}{\left| \pi^{1}_{s(i)} - \pi^{i}_{s(i)} \right|} < - \Delta_{s(i)}  \right] \nonumber \\
    & \leq & \exp\left( - \frac{2\Delta_{s(i)}^{2}}{\sum_{s = 1}^{d} \frac{(\pi^{1}_s - {\pi}_s^{i} )^2}{ \left| \pi^{1}_{s(i)} - \pi^{i}_{s(i)}\right|^2 \cdot \left( \left\lfloor \frac{T}{d}\beta \right \rfloor \times d + T_{r + 1}(s) \right)} R^2} \right). \label{Lemma10Inequality}
\end{eqnarray}
If we use allocation vector (\ref{min-max-p(r)_Lagrange}), (\ref{Lemma10Inequality}) can be upper bounded by
\begin{eqnarray}
    (\ref{Lemma10Inequality}) 
    & \leq & \exp\left( - \frac{2\Delta_{s(i)}^{2}}{\sum_{e = 1}^{d} \frac{(\pi^{1}_e - {\pi}_e^{i} )^2}{ \left| \pi^{1}_{s(i)} - \pi^{i}_{s(i)}\right|^2 \cdot  \lceil p_s(r) m(r) \rceil  } R^2} \right) \\
    &\leq & \exp\left( -\frac{2\Delta_{s(i)}^2}{R^2V^2}\cdot \frac{T' - \lceil\log_2 d \rceil }{B/2^{r - 1}} \right)  \nonumber \\
    & \leq & \exp\left( -\frac{\Delta_{s(i)}^2}{R^2 V^2}\cdot \frac{T' - \lceil\log_2 d \rceil }{d/2^{r - 1}} \right),  \nonumber
\end{eqnarray}
where $V = \max_{\boldsymbol{\pi}^{i} \in \mathcal{A} \setminus \{ \boldsymbol{\pi}^{*} \}}  \left(\sum\limits_{s = 1}^{d} \frac{\left| \pi^{1}_{s} - \pi^{i}_{s} \right|}{ \left| \pi^{1}_{s(i)} - \pi^{i}_{s(i)} \right| } \right)^2$.
\end{proof}
Next, we bound the error probability of a single phase $r$ in Lemma \ref{Lemma3_for_our_algorithm}.
\begin{lemma}\label{Lemma3_for_our_algorithm}
   Assume that the best action is not eliminated prior to phase $r$, i.e., $1\notin \mathcal{A}$. Then, the probability that the best action is eliminated in phase $r$ is bounded as 
   \begin{equation}
       \Pr\left[ 1\notin\mathcal{A} | 1\in\mathcal{A}_{r-1} \right] \leq ,
   \end{equation}
   where $i_r = \left\lceil \frac{d}{2^{r + 1}} \right\rceil + 1$.
\end{lemma}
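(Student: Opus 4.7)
The plan is to combine a pigeonhole argument with the per-action bound provided by Lemma~\ref{Lemma2_for_our_algorithm} and a union bound. The key observation is that, for the best action to be removed in phase $r$, the number of suboptimal actions in $\mathcal{A}(r-1)$ whose empirical reward at the end of phase $r$ exceeds $\langle \hat{\boldsymbol{\mu}}(r),\boldsymbol{\pi}^{1}\rangle$ must be at least $\lceil d/2^r\rceil$, because the algorithm retains exactly $\lceil d/2^r\rceil$ actions in line~\ref{AcceptEliminationProcedure}.

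First, I would re-index the suboptimal actions $\boldsymbol{\pi}^{2},\dots,\boldsymbol{\pi}^{K}$ as $\boldsymbol{\pi}^{(2)},\dots,\boldsymbol{\pi}^{(K)}$ in ascending order of the associated arm gaps $\Delta_{s(i)}$ appearing in Lemma~\ref{Lemma2_for_our_algorithm}, so that $\Delta_{(2)}\le \Delta_{(3)}\le\cdots$. Let $\mathcal{S}(r)$ denote the random set of suboptimal actions in $\mathcal{A}(r-1)$ whose estimated reward at phase $r$ exceeds that of $\boldsymbol{\pi}^{1}$. The elimination event is exactly $\{|\mathcal{S}(r)|\ge \lceil d/2^r\rceil\}$. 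Since there are only $i_r-1=\lceil d/2^{r+1}\rceil$ actions with re-indexed rank strictly below $i_r$, and since $\lceil d/2^r\rceil - \lceil d/2^{r+1}\rceil \ge 1$, the pigeonhole principle gives that on this event at least one action $\boldsymbol{\pi}^{(j)}$ with $j\ge i_r$ must belong to $\mathcal{S}(r)$.

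Second, applying the union bound over all such $j$ and invoking Lemma~\ref{Lemma2_for_our_algorithm} yields
\begin{equation*}
\Pr\!\left[\boldsymbol{\pi}^{1}\notin \mathcal{A}(r)\mid \boldsymbol{\pi}^{1}\in\mathcal{A}(r-1)\right]
\;\le\; \sum_{j\ge i_r,\;\boldsymbol{\pi}^{(j)}\in\mathcal{A}(r-1)} \exp\!\left(-\frac{\Delta_{(j)}^{2}}{R^{2}V^{2}}\cdot\frac{T'-\lceil \log_{2} d\rceil}{d/2^{r-1}}\right).
\end{equation*}
Because $\Delta_{(j)}\ge \Delta_{(i_r)}$ for all $j\ge i_r$ and there are at most $|\mathcal{A}(r-1)|-1\le \lceil d/2^{r-1}\rceil$ terms, each summand can be replaced by its value at $j=i_r$, yielding a clean bound of the form $\lceil d/2^{r-1}\rceil \exp\!\bigl(-\Delta_{(i_r)}^{2}(T'-\lceil\log_2 d\rceil)/(R^{2}V^{2}\,d/2^{r-1})\bigr)$, which is the intended right-hand side of the lemma.

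The main obstacle I expect is making the pigeonhole step rigorous in edge cases: the re-indexing is over arm-level gaps $\Delta_{s(i)}$ (because that is the quantity that actually appears in Lemma~\ref{Lemma2_for_our_algorithm}), but two distinct suboptimal actions may share the same arm $s(i)$, so one must commit to a tie-breaking rule and verify that the counting argument is unaffected. A secondary subtlety is handling small-$d$ or final-phase corner cases where $\lceil d/2^{r+1}\rceil$ is $0$ or $1$, for which the inequality $\lceil d/2^r\rceil - \lceil d/2^{r+1}\rceil \ge 1$ must be checked directly; this is where the $+1$ in the definition of $i_r$ becomes important, and it is also what governs the eventual emergence of $\mathbf{H}_{2}=\max_{s}s/\Delta_{(s)}^{2}$ when the per-phase bound is summed across $r$ in the proof of Theorem~\ref{Minimax-CombSAR_Algorithm_MainTheorem}.
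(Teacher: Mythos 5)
Your overall skeleton (per-action bound from Lemma~\ref{Lemma2_for_our_algorithm}, plus a counting argument over actions with large gaps) is the right starting point, but the step where you pass from "at least one high-rank action beats $\boldsymbol{\pi}^{1}$" to a union bound loses exactly the factor that the lemma's constants depend on. The paper does not union-bound over the high-gap actions; it defines $\mathcal{B}_r$ as $\mathcal{A}_{r-1}$ minus the best action and the $\lceil d/2^{r+1}\rceil-1$ suboptimal actions with the largest expected rewards, observes that elimination of $\boldsymbol{\pi}^{1}$ forces \emph{at least} $\lceil d/2^{r}\rceil-\lceil d/2^{r+1}\rceil+1$ members of $\mathcal{B}_r$ (on the order of $d/2^{r+1}$ of them, not just one) to beat $\boldsymbol{\pi}^{1}$ simultaneously, and then applies Markov's inequality to the count $N_r$ of such members. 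This divides $\mathbb{E}[N_r]\le|\mathcal{B}_r|\cdot\max_i\Pr[\cdot]$ by the threshold $\lceil d/2^{r}\rceil-\lceil d/2^{r+1}\rceil+1$, which is what turns the prefactor $|\mathcal{B}_r|$ into $4K/d$ for $r=1$ and into $3$ for $r>1$. Your pigeonhole step only extracts a single witness, so your union bound pays the full cardinality of $\{j\ge i_r\}$: roughly $K$ at $r=1$ and roughly $d/2^{r-1}$ at later phases. Summed over $r$ this gives a prefactor of order $K+d$ rather than $4K/d+3\log_2 d$, so the bound you obtain is genuinely weaker than the one the lemma (and Theorem~\ref{Minimax-CombSAR_Algorithm_MainTheorem}) asserts.

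Two secondary issues. First, your claim that the number of summands is at most $|\mathcal{A}(r-1)|-1\le\lceil d/2^{r-1}\rceil$ is false at $r=1$, where $|\mathcal{A}(0)|=K$ can be polynomially or exponentially larger than $d$; this is precisely the case the paper handles separately to get the $4K/d$ term. Second, you order actions by the arm-level gaps $\Delta_{s(i)}$, whereas the paper excludes the suboptimal actions with the largest expected rewards (equivalently, smallest action-level gaps $\Delta'_i$) and then relates the surviving actions' gaps to $\Delta_{(i_r)}$; if you keep your ordering you must still justify that every retained action satisfies $\Delta_{s(i)}\ge\Delta_{(i_r)}$, which is the inequality actually needed inside the exponential.
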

\begin{proof}
    Define $\mathcal{B}_r$ as the set of actions in $\mathcal{A}_{r - 1}$ excluding the best action and $\lceil \frac{d}{2^{r + 1}} \rceil - 1$ suboptimal actions with the largest expected rewards. We have $|\mathcal{B}_r| = |\mathcal{A}_{r - 1}| - \lceil \frac{d}{2^{r + 1}} \rceil$ and $\min_{i\in\mathcal{B}_r} \Delta_{(s(i))} \geq \Delta'_{\left(\left\lceil \frac{d}{2^{r + 1} } \right\rceil +1 \right)}$. \par
    If the best action is eliminated in phase $r$, then at least $\lceil \frac{d}{2^{r}} \rceil - \lceil \frac{d}{2^{r + 1}}\rceil +1 $ actions of $\mathcal{B}_{r}$ have their estimates of the expected rewards larger than that of the best action. \par
    Let $N_r$ denote the number of actions in $\mathcal{B}_{r}$ whose estimates of the expected rewards are larger than that of the best action. By Lemma \ref{Lemma2_for_our_algorithm}, we have
    \begin{eqnarray}
        \mathbb{E}\left[ N_r \right] 
        & = &  \sum_{i\in\mathcal{B}_r}\Pr\left[ \hat{\boldsymbol{\mu}}^{\top}\boldsymbol{\pi}^{1}< \hat{\boldsymbol{\mu}}^{\top}\boldsymbol{\pi}^{i}  \right] \nonumber \\
        & \leq & \sum_{i\in\mathcal{B}_r} \exp\left( -\frac{\Delta_{s(i)}^2}{R^2 V^2}\cdot \frac{T' - \lceil\log_2 d \rceil }{d/2^{r - 1}} \right) \nonumber \\
        & \leq & |\mathcal{B}_r| \max_{s \in [d]} \exp\left( -\frac{\Delta_{s(i)}^2}{R^2 V^2}\cdot \frac{T' - \lceil\log_2 d \rceil }{d/2^{r - 1}} \right)  \nonumber \\
        & \leq & \left( \left| \mathcal{A}_{r - 1} \right| - \left\lceil \frac{d}{2^{r + 1}}  \right\rceil \right) \exp \left( \frac{T' - \left\lceil \log_2 d \right\rceil}{R^2 V^2} \cdot \frac{\Delta_{\left(\left\lceil \frac{d}{2^{r + 1} )} \right\rceil +1 \right)}}{\left\lceil \frac{d}{2^{r + 1} )} \right\rceil +1 } \right).
    \end{eqnarray}
    Then, together with Markov's inequality, we obtain
    \begin{eqnarray}
        \Pr\left[ \boldsymbol{\pi}^{1} \notin \mathcal{A}_r \right] 
        & \leq & \Pr\left[ N_r \geq \left\lceil \frac{d}{2^{r}} \right\rceil -  \left\lceil \frac{d}{2^{r + 1}} \right\rceil + 1 \right]  \nonumber \\
        & \leq & \frac{\mathbb{E} \left[ N_{r} \right] }{ \left\lceil \frac{d}{2^{r}} \right\rceil -  \left\lceil \frac{d}{2^{r + 1}} \right\rceil + 1 } \nonumber \\
        & \leq & \frac{ \left| \mathcal{A}_{r - 1} \right| - \left\lceil \frac{d}{2^{r + 1}}  \right\rceil }{ \left\lceil \frac{d}{2^{r}} \right\rceil -  \left\lceil \frac{d}{2^{r + 1}} \right\rceil + 1 } \exp \left( \frac{T' - \left\lceil \log_2 d \right\rceil}{R^2 V^2} \cdot \frac{\Delta_{\left(\left\lceil \frac{d}{2^{r + 1} )} \right\rceil +1 \right)}}{\left\lceil \frac{d}{2^{r + 1} )} \right\rceil +1 } \right)
    \end{eqnarray}
    When $r = 1$, we have $\left| \mathcal{A}_{r - 1}\right| = K$. Thus,
    \begin{eqnarray}
        \frac{ \left| \mathcal{A}_{r - 1} \right| - \left\lceil \frac{d}{2^{r + 1}}  \right\rceil }{ \left\lceil \frac{d}{2^{r}} \right\rceil -  \left\lceil \frac{d}{2^{r + 1}} \right\rceil + 1 } 
        & = & \frac{K - \left\lceil \frac{d}{2^{r + 1}}  \right\rceil}{ \left\lceil \frac{d}{2^{r}} \right\rceil -  \left\lceil \frac{d}{2^{r + 1}} \right\rceil + 1 } \nonumber \\
        & \leq & \frac{K}{\frac{d}{2} - \frac{d}{2^2}} \nonumber \\
        & \leq & \frac{4K}{d}.
    \end{eqnarray}
    When $r > 1$, we have $\left| \mathcal{A}_{r - 1} \right| = \left\lceil \frac{d}{2^{r - 1}} \right\rceil $. Thus,
    \begin{eqnarray}
        \frac{ \left| \mathcal{A}_{r - 1} \right| - \left\lceil \frac{d}{2^{r + 1}}  \right\rceil }{ \left\lceil \frac{d}{2^{r}} \right\rceil -  \left\lceil \frac{d}{2^{r + 1}} \right\rceil + 1 } 
        & = & \frac{ \left\lceil \frac{d}{2^{r - 1}}  \right\rceil - \left\lceil \frac{d}{2^{r + 1}}  \right\rceil }{ \left\lceil \frac{d}{2^{r}} \right\rceil -  \left\lceil \frac{d}{2^{r + 1}} \right\rceil + 1 } \nonumber \\
        & \leq & \frac{ \frac{d}{2^{r - 1}} + 1  - \left\lceil \frac{d}{2^{r + 1}}  \right\rceil }{ \frac{d}{2^{r}} -  \left\lceil \frac{d}{2^{r + 1}} \right\rceil + 1 } \nonumber \\
        & \leq & \frac{ 3 \cdot \frac{d}{2^{r + 1}} + \frac{d}{2^{r + 1}} -  \left\lceil \frac{d}{2^{r + 1}} \right\rceil + 1 }{ \frac{d}{2^{r + 1}} + \frac{d}{2^{r + 1}} -  \left\lceil \frac{d}{2^{r + 1}} \right\rceil + 1 } \nonumber \\ 
        & \leq & 3, 
    \end{eqnarray}
    where the last inequality results from the fact that for any $x, y > 0$, $\frac{3x + y}{x + y} \leq 3$. \par
    Therefore, for this specific realization of $\mathcal{A}_{r - 1}$ satisfying $1 \in \mathcal{A}_{r - 1}$,
    \begin{eqnarray}
        \Pr\left[ \boldsymbol{\pi}^{1} \notin \mathcal{A}_{r} | \boldsymbol{\pi}^{1} \in \mathcal{A}_{r - 1} \right] \leq \left\{
        \begin{array}{ll}
        \frac{4K}{d} \exp \left( \frac{T' - \left\lceil \log_2 d \right\rceil}{R^2 V^2} \cdot \frac{{\Delta_{i_r}}^2}{i_r} \right) & (\text{when} \ r = 1)\\
        3 \exp \left( \frac{T' - \left\lceil \log_2 d \right\rceil}{R^2 V^2} \cdot \frac{{\Delta_{i_r}}^2}{i_r} \right)  & (\text{when} \ r > 1),
        \end{array}
        \right.
    \end{eqnarray}
    where $i_r = \lceil \frac{d}{2^{r + 1}} +1 \rceil$.
\end{proof}
Finally, we prove Theorem \ref{Minimax-CombSAR_Algorithm_MainTheorem}.
\begin{proof}[Proof of Theorem \ref{Minimax-CombSAR_Algorithm_MainTheorem}]
    We have
    \begin{eqnarray}
        \Pr\left[ \boldsymbol{\pi}^{\mathrm{out}} \neq \boldsymbol{\pi}^{1} \right] 
        & = & \Pr\left[ \boldsymbol{\pi}^{1} \notin \mathcal{A}_{\lceil \log_2 d \rceil} \right] \nonumber \\
        & \leq & \sum_{r = 1}^{\lceil \log_2 d\rceil}  \Pr\left[ \boldsymbol{\pi}^{1} \notin \mathcal{A}_{r} | \boldsymbol{\pi}^{1} \in \mathcal{A}_{r - 1} \right] \nonumber \\
        & \leq & \frac{4K}{d} \exp \left( \frac{T' - \left\lceil \log_2 d \right\rceil}{R^2 V^2} \cdot \frac{{\Delta}^2_{(i_{1})}}{i_{1}} \right) + \sum_{s = 1}^{d} 3 \exp \left( \frac{T' - \left\lceil \log_2 d \right\rceil}{R^2 V^2} \cdot \frac{{\Delta}^2_{(i_{r})}}{i_{r}} \right) \nonumber \\
        & \leq & \left( \frac{4K}{d} + 3\left( \lceil \log_2 d \rceil - 1 \right) \right) \exp \left( \frac{T' - \left\lceil \log_2 d \right\rceil}{R^2 V^2} \cdot \frac{1}{\max_{1 \leq s \leq d} \frac{s}{{\Delta_{(s)}}^2}} \right) \nonumber \\
        & < & \left( \frac{4K}{d} + 3 \log_2 d \right) \exp \left( \frac{T' - \left\lceil \log_2 d \right\rceil}{R^2 V^2} \cdot \frac{1}{\mathbf{H}_{2}} \right),
    \end{eqnarray}
    where 
    \begin{eqnarray}
        \mathbf{H}_{2} = \max_{1 \leq s \leq d} \frac{s}{\Delta_{(s)}^2}. \nonumber
    \end{eqnarray}
\end{proof}

\bibliography{main}
\bibliographystyle{plainnat}

\begin{algorithm}[t]
    \caption{Minimax Combinatorial Successive Accet Algorithm}
    \begin{algorithmic}[1] \label{Minimax-CombSAR_Algorithm}
         \renewcommand{\algorithmicrequire}{\textbf{Input:}}
         \renewcommand{\algorithmicensure}{\textbf{Parameter:}}
         \REQUIRE Budget: $T\geq0$, initialization parameter: $\beta$, action set: $\mathcal{A}(0)$
         \STATE \texttt{// Initialization} 
          \FOR{$s\in[d]$} \label{Initialization_start}
            \STATE Pull arm $s$ $\left\lfloor \frac{T}{d}\beta \right \rfloor$ times
            \STATE $T_s(r) \leftarrow \left\lfloor \frac{T}{d}\beta \right \rfloor$
          \ENDFOR \label{Initialization_end}
          \STATE  $T' \leftarrow T - \left\lfloor \frac{T}{d}\beta \right \rfloor \times d$
          \FOR{$r = 1 \ \mathrm{to} \ \lceil \log_{2} d \rceil$}
            \STATE $m(r) = $
            \STATE Compute $\boldsymbol{p}(r)$ according to (\ref{min-max-p(r)-ordinary}) or (\ref{min-max-p(r)_Lagrange})
            \FOR{$s\in [d]$}
                \STATE Pull arm $s$ $\left\lceil p_s(r) \cdot m(r) \right\rceil$ times
                \STATE Update $\hat{\mu}_{s}(r)$ with the observed samples
            \ENDFOR
            \STATE For each action $\boldsymbol{\pi}^{k} \in \mathcal{A}(r - 1)$, estimate the expected reward: $ \langle \hat{\boldsymbol{\mu}}(r), \boldsymbol{\pi}^{k} \rangle$
            \STATE Let $\mathcal{A}(r)$ be the set of $\left\lceil\frac{d}{2^{r}}\right\rceil$ actions in $\mathcal{A}(r - 1)$ with the largest estimates of the expected rewards \label{AcceptEliminationProcedure}
          \ENDFOR
          \renewcommand{\algorithmicrequire}{\textbf{Output:}}
          \REQUIRE The only action $\boldsymbol{\pi}^{\mathrm{out}}$ in $\mathcal{A}(\lceil\log_2 d\rceil)$
 \end{algorithmic} 
\end{algorithm}